\newtheorem{theorem}{Theorem}
\newtheorem{lemma}{Lemma}
\newtheorem{corollary}{Corollary}
\newtheorem{proposition}{Proposition}
\newtheorem{claim}{Claim}
\newtheorem{observation}[theorem]{Observation}
\theoremstyle{remark}
\newtheorem{remark}{Remark}
\theoremstyle{definition}
\newtheorem{definition}{Definition}
\newcommand{\tz}{\mathrm{tz}} 
\newcommand{\Walsh}{\mathrm{Wal}}
\begin{document}

\title{Optimal Lower Bounds for Online Multicalibration}
\author{Natalie Collina \qquad  Jiuyao Lu \qquad Georgy Noarov \qquad Aaron Roth\\{University of Pennsylvania}}
\date{}

\maketitle

\begin{abstract}
We prove tight lower bounds for online multicalibration, establishing an information-theoretic separation from marginal calibration.

In the general setting where group functions can depend on both context and the learner's predictions, we prove an $\Omega(T^{2/3})$ lower bound on expected multicalibration error using just three disjoint binary groups. This matches the upper bounds of \cite{noarov2023high} up to logarithmic factors and exceeds the $O(T^{2/3-\varepsilon})$ upper bound for marginal calibration \citep{dagan2025breaking}, thereby separating the two problems.

We then turn to lower bounds for the more difficult case of group functions that may depend on context but not on the learner's predictions. In this case, we  establish an $\widetilde{\Omega}(T^{2/3})$ lower bound for online multicalibration via an $O(\log^3 T)$-sized group family constructed from an orthonormal basis, again matching upper bounds up to logarithmic factors.
\end{abstract}

 \setcounter{page}{0}
 \thispagestyle{empty}
 \clearpage
 \setcounter{page}{0}
 \clearpage
\newpage
{\small
\tableofcontents
}
 \thispagestyle{empty}
\clearpage
\pagestyle{plain}
 \setcounter{page}{0}
  \clearpage
\pagenumbering{arabic}
\setcounter{page}{1}

\section{Introduction}

\paragraph{Online calibration} A sequence of predictions $p^1,\ldots,p^T$ is \emph{calibrated} to a sequence of outcomes $y^1,\ldots,y^T$ if, informally, the average of the predictions equals the average of the outcomes, even \emph{conditional} on the value of the prediction \citep{dawid1982well}. To measure the deviation from perfect calibration, one can define the cumulative empirical bias conditional on a prediction $v \in \mathbb{R}$ as $B_T(v) = \sum_{t: p^t = v}(p^t - y^t)$. The classical mis-calibration measure known as \emph{expected calibration error (ECE)} sums the magnitude of the empirical bias conditional on each prediction: \[\textrm{Err}_T = \sum_{v \in \{p^1,\ldots,p^T\}}|B_T(v)|.\] 

In a seminal result, \cite{foster1998asymptotic} showed that there exists a randomized algorithm able to generate predictions that are guaranteed to have expected calibration error scaling as $o(T)$ for arbitrary/adversarially selected sequences $y^1,\ldots,y^T$. The optimal \emph{rate} at which calibration error can be guaranteed has been a long-standing open question, which has seen recent partial progress. A long-standing upper bound established that it was possible to obtain calibration error scaling as $O(T^{2/3})$ \citep{foster1998asymptotic,hart2025calibrated,abernethy2011blackwell}. For many years no lower bound better than $\Omega(T^{0.5})$ was known, until \cite{qiao2021stronger} proved a lower bound of $\Omega(T^{0.528})$. The current state of the art, due to \cite{dagan2025breaking}, establishes that the optimal rate for calibration is between $\Omega(T^{0.54389})$ and $O(T^{2/3-\varepsilon})$ for some (extremely small) constant $\varepsilon > 0$. The result of \cite{dagan2025breaking}  was a breakthrough for giving the first upper bound improvement showing that the long-standing $T^{2/3}$ rate was not optimal for marginal calibration.

\paragraph{Online multicalibration} Calibration is on its own a weak guarantee in that it marginalizes over the entire sequence, which substantially limits its applicability in contextual prediction settings. But it is possible to give stronger guarantees, asking for calibration not just marginally, but simultaneously on many different subsequences or weightings of the data that can be defined both by external context and the predictions themselves \citep{dawid1985calibration,lehrer2003approachability,sandroni2003calibration}. 

A modern CS formulation of this idea is called \emph{multicalibration}, introduced by \cite{hebert2018multicalibration}. Multicalibration reweights the residuals of the predictions by ``group functions'', which are simply mappings $g:X \times \mathbb{R} \rightarrow [0,1]$ from any pair $(x, v) =$ (context, learner's prediction) to a bounded weight $g(x,v)$. When $g(x, v)$ is independent of $v$, we will refer to such a group as \emph{prediction-independent}. The group- and prediction-conditional cumulative empirical bias is defined as $B_T(v,g) = \sum_{t : p^t = v}g(x^t,p^t)(p^t-y^t)$, and the group-conditional calibration error is then given by $\textrm{Err}_T(g) = \sum_{v \in \{p^1,\ldots,p^T\}}|B_T(v,g)|$. The multicalibration error with respect to a collection of group functions $G$ is defined as \[\textrm{MCerr}_T(G) = \max_{g \in G}\textrm{Err}_T(g).\] 

Multicalibration and related guarantees have found many applications in recent years, from learning in a loss-function agnostic manner \citep{gopalan2022omnipredictors, gopalan2023loss} to strengthening complexity theoretic constructions \citep{casacuberta2024complexity,dwork2025supersimulators} to low complexity algorithms for distributed information aggregation \citep{collina2025tractable,collina2026collaborative}. Moreover, similar techniques to those that have been used to derive algorithms guaranteeing marginal calibration in sequential adversarial settings have also been adapted to multicalibration, including methods based on multi-objective optimization and Blackwell approachability \citep{gupta2022online,lee2022online,noarov2023high,haghtalab2023unifying}, swap regret minimization \citep{globus2023multicalibration,gopalan2023swap,garg2024oracle}, and defensive forecasting \citep{perdomo2025defense}. 

\paragraph{What are the optimal multicalibration rates?} Just as for marginal calibration, the minimax online multicalibration rate has remained a difficult open challenge. Recently, \cite{noarov2023high} (and independently \cite{ghuge2025improved}) established  that online multicalibration can be obtained at the rate $\widetilde O(T^{2/3}\sqrt{\log |G|})$; very recently, \cite{hu2025efficient} gave corresponding \emph{oracle-efficient} rates (not just for means, but for any elicitable property; c.f.~\cite{noarov2023statistical}). In fact, given their benign $O(\sqrt{\log G})$ dependence on $|G|$, these algorithms guarantee $\widetilde O(T^{2/3})$ rates even for $poly(T)$-sized group families $G$.

However, to date no lower bound for online multicalibration has been obtained, beyond the $\Omega(T^{0.54389})$ lower bound inherited from the easier problem of marginal calibration \citep{dagan2025breaking}. Nor have any $O(T^{2/3-\epsilon})$-rate multicalibration algorithms been derived for any $\epsilon > 0$.

It is natural to wonder whether the answer to the online multicalibration rate problem is simply the (as yet unknown) minimax marginal calibration rate. Indeed, consider any \emph{constant-sized} collection $G$ of \emph{prediction-independent} groups\footnote{For convenience further assume the groups in $G$ are binary-valued, so that the Venn diagram corresponding to $G$ is well-defined for the purposes of this reduction. We note that our lower bounds hold even for binary groups, thus showing that binary groups are as hard as real-valued ones in the minimax sense.} (i.e., groups that only depend on context). Then it is possible to obtain multicalibration at the rate of marginal calibration: one can just instantiate, in parallel, $2^{|G|}$ copies of a minimax-optimal marginal calibration algorithm for all regions in the Venn diagram partition corresponding to $G$ (i.e., for all possible group intersection patterns).

However, this rate-preserving reduction breaks for group collections $G$ that are more complex than just described. First, if the groups in $G$ depend on the learner's predictions, this invalidates the reduction mechanism: on any given round, the set of active groups (and thus the active copy of the marginal calibration algorithm) will not be determined until the learner makes the prediction. Second, if the size of $G$ is not constant but instead grows even logarithmically with $T$, then combining all of the copies' guarantees will incur overhead that will destroy the optimal marginal calibration rate. 

Therefore, we have two challenging regimes in which to pin down the complexity of online multicalibration and to establish whether it is still exactly as hard as marginal calibration or strictly harder: (1) Prediction-dependent group collections; and (2) prediction-independent group collections whose size grows with $T$.
To summarize, in this paper our goal will be to answer the fundamental questions:
\begin{center}
    \emph{What are the minimax rates for sequential multicalibration? \\ Is sequential multicalibration a strictly harder problem than sequential marginal calibration?}
\end{center}



 \begin{table}[t]
  \centering
  \small
  \setlength{\tabcolsep}{4pt}
  \renewcommand{\arraystretch}{1.25}
  \begin{tabular}{@{}>{\raggedright\arraybackslash}p{0.17\linewidth} >{\raggedright\arraybackslash}p{0.21\linewidth} >{\raggedright\arraybackslash}p{0.30\linewidth} >{\raggedright\arraybackslash}p{0.24\linewidth}@{}}
    \hline
    \textbf{Setting} & \textbf{Groups} & \textbf{Upper bounds} & \textbf{Lower bounds} \\
    \hline
    Marginal\newline calibration & none  &
    $O\!\left(T^{2/3-\varepsilon}\right)$ (for some $\varepsilon>0$)\newline
    {\scriptsize\cite{dagan2025breaking}} &
    $\Omega\!\left(T^{0.54389}\right)$\newline
    {\scriptsize\cite{dagan2025breaking}} \\
    \hline
    Multicalibration\newline (general) & prediction-\newline dependent\newline $g(x,v)$ &
    $\tilde O\!\left(T^{2/3}\sqrt{\log|G|}\right)$\newline
    {\scriptsize\cite{noarov2023high}} &
    $\Omega\!\left(T^{2/3}\right)$ (even $|G|=3$ disjoint binary groups) \\
    \hline
    Multicalibration\newline (restricted) & prediction-\newline independent\newline $g(x)$, $|G|=O(1)$ & Rate preserving
    reduction to marginal calibration   &
    (no separation possible from marginal calibration) \\
    \hline
    Multicalibration\newline (restricted) & prediction-\newline independent\newline $g(x)$, $|G|=O(\log^3 T)$ &
    $\tilde O\!\left(T^{2/3}\sqrt{\log|G|}\right)$ \newline {\scriptsize\cite{noarov2023high}}  &
    $\tilde\Omega\!\left(T^{2/3}\right)$ \\
  \end{tabular}
  \caption{\textbf{Summary of regimes and rates.}
  We study online adversarial multicalibration.
  For general prediction-dependent groups $g(x,v)$ we prove an optimal $\Omega(T^{2/3})$ lower bound, separating multicalibration from marginal calibration.
  For prediction-independent groups $g(x)$, constant-sized families reduce to marginal calibration up to a $2^{|G|}$ factor, precluding a separation from marginal calibration. For a group family of size  $|G|=O(\log^3 T)$ we again prove an optimal $\tilde\Omega(T^{2/3})$ lower bound.}
  \label{tab:rates-summary}
\end{table}

\subsection{Our Results}
We answer the above questions in a strong sense: We prove $\widetilde \Omega(T^{2/3})$ online multicalibration lower bounds that match (up to logarithmic factors) the existing $\widetilde O(T^{2/3})$ upper bounds --- both in the prediction-independent case, and in the prediction-dependent case with growing-in-$T$ group family size. Therefore, our results \emph{strictly separate the complexity of marginal calibration from that of multicalibration}, while also separating two regimes of complexity for multicalibration --- \emph{prediction-dependent} and \emph{prediction-independent} groups --- that existing upper bounds have treated identically. 

The  work of \cite{hebert2018multicalibration} defined multicalibration in terms of what we will call binary prediction-independent groups, that depend on the context but not the prediction, and map to a binary range: $g:X\times \mathbb{R}\rightarrow \{0,1\}$ such that for all $v, v' \in \mathbb{R}$ and for all $x \in X$, $g(x,v) = g(x,v')$. Subsequent work generalized the notion of groups to allow them to be weighting functions with range $[0,1]$ rather than binary valued, and to explicitly depend on predictions $p^t$ \citep{gopalan2022low,kim2022universal,deng2023happymap} --- see also \cite{kakade2008deterministic} and \cite{sandroni2003calibration} for earlier work using similar grouping functions.  \cite{sandroni2003calibration} called prediction-dependent binary groups ``forecast based checking rules''. The algorithms that were developed for online multicalibration \citep{gupta2022online,noarov2023high,haghtalab2023unifying} are all able to support this general notion of group functions at the same rates as they support the special case of binary prediction-independent group functions. As foreshadowed by the above discussion, our lower bounds identify a distinction between the general case and prediction independent groupings.
\begin{enumerate}
    \item In Section \ref{sec:prediction-dependent} we show an optimal $\Omega(T^{2/3})$ lower bound for multicalibration in the general case, when groups can be defined in terms of predictions. This matches (up to logarithmic factors) the rate obtained by existing efficient algorithms \citep{noarov2023high,ghuge2025improved,hu2025efficient}, and so establishes the optimal statistical rate for multicalibration in the general case. Because of the $O(T^{2/3-\varepsilon})$ \emph{upper bound} for marginal calibration established by \cite{dagan2025breaking} it also formally separates the statistical complexity of multicalibration from marginal calibration. Our lower bound instance is realized by just 3 binary valued groups that are also disjoint (such that exactly one is active on any given round). Therefore, this lower bound is driven neither by the complexity nor by the intersectionality of the group family, but rather only by their prediction-dependent nature.
    \item We observe in Appendix \ref{sec:pred-independent} that no similar separation is possible for constant-sized families of binary groups if one further restricts them to be  prediction independent. Indeed, there is an extremely simple reduction from multicalibration to marginal calibration for prediction independent groups that, given binary groups $G$, instantiates a marginal calibration algorithm for each of the $2^{|G|}$ possible \emph{intersection patterns} of groups. At each round $t$ exactly one group intersection pattern is realized and (since the groups are prediction independent) we can route the round to the appropriate marginal calibration algorithm. This gives multicalibration at the best rate obtainable for marginal calibration, up to a $2^{|G|}$ factor (slightly more nuanced bounds are possible, and we give these in Appendix \ref{sec:pred-independent}). If $|G|$ is constant valued (independent of $T$), then this exponential-in-$|G|$ blowup is also only constant-valued, and does not affect the asymptotic rate. Hence any separation between marginal calibration and multicalibration for the special case of prediction independent groups must depend on group families with cardinality $|G|$ growing as a function of $T$. We note that \emph{upper bounds} \citep{gupta2022online,noarov2023high,ghuge2025improved, hu2025efficient} depend only logarithmically on $|G|$, so even families of size $|G| = \textrm{poly}(T)$ incur only logarithmic overhead.
    \item In Section \ref{sec:walsh-localrate-nonneg}, we give an optimal $\tilde \Omega(T^{2/3})$ lower bound for multicalibration even restricting to prediction independent binary groups, using a group family of size $|G| = O(\log^3 T)$. This once again matches existing upper bounds up to logarithmic factors \citep{noarov2023high,ghuge2025improved,hu2025efficient} and separates multicalibration (with group families with cardinality depending polylogarithmically on $T$) from marginal calibration \citep{dagan2025breaking}. This is the most technically challenging result of the paper.
\end{enumerate}
We note that all of our lower bound constructions use only \emph{binary} groups, which also establishes that these are already as hard as arbitrary weighted groups (which the upper bounds support). 
\subsection{Proof Overviews}

We now sketch our lower bound constructions and their analyses. Since it turns out that rates of $\tilde \Theta(T^{2/3})$ are the ``right answer'' for multicalibration, it is helpful to understand how these rates arise in upper bounds. This is most easily understood through the ``minimax'' lens of \cite{hart2025calibrated} in which the order-of-play of the learner and the adversary are reversed in the analysis using the minimax theorem. In the reversed order-of-play, the adversary first commits to a (possibly adaptive) strategy mapping histories to distributions over outcomes, and the learner has knowledge of this strategy before it must make  predictions. One option for the learner in this reversed order of play is the ``honest'' strategy that at every round predicts $p^t = \mathbb{E}[y^t]$ which is feasible, given that in this order of play, the learner knows the adversary's strategy. This is not a good strategy on its own, as the calibration error metric sums the magnitude of the empirical bias across all prediction values the learner uses, and it might be that $\mathbb{E}[y^t]$ takes on a distinct value for each $t$; in this case there would be no cancellations and the learner's calibration error would scale linearly with $T$. But the learner could \emph{round} their prediction $p^t \approx \mathbb{E}[y^t]$ to the nearest multiple of $1/m$, which would introduce bias at most $1/m$ at each round and cumulatively at most $T/m$ across all $T$ rounds. Standard (anti)-concentration arguments establish that if there is a value $v$ that the learner predicts $k$ times, then the empirical sum of the labels on the rounds in which $v$ is predicted will differ from its expectation by roughly $\approx \sqrt{k}$. The ``rounded'' honest learner uses at most $m$ different prediction values, and the worst-case for the learner is if they are all used equally frequently: $k \approx T/m$. In this case the summed noise magnitude of the learner's predictions scales as $m\sqrt{k}\approx \sqrt{mT}$. Picking $m$ to trade this noise term off against the bias term of $T/m$ results in a $\tilde O(T^{2/3})$ upper bound. This style of argument applies both to marginal calibration and multicalibration, because the ``rounded honest'' strategy obtains bounds of this form simultaneously on any subsequence of rounds. However, it does not give a lower bound because there may be a strategy for the learner that obtains better calibration error than ``honesty'' by cleverly setting up cancellations --- indeed, this is exactly what \cite{dagan2025breaking} show in the case of marginal calibration. At a very high level, our goal is to rule out that ``dishonest'' strategies can be beneficial. We note in passing a conceptual similarity to recent work on designing truthful calibration measures \citep{haghtalab2024truthfulness,qiao2025truthfulness,hartline2025perfectly}, although our settings are incomparable.

Both lower bounds we prove in Sections \ref{sec:prediction-dependent} and \ref{sec:walsh-localrate-nonneg} share a common pattern. The lower bound instances are both \emph{oblivious}/non-adaptive sequences of context/label pairs $(x^t,y^t)$ such that:
\begin{enumerate}
    \item The labels $y^t$ are independent random variables with $\mathbb{E}[y^t] = x^t$, and
    \item The contexts themselves $x^t$ are uniformly spread out in a grid in $[1/4,3/4]$. 
\end{enumerate}
Instances like this make it possible for the learner to make ``honest predictions'' of $p^t = \mathbb{E}[y^t] = x^t$, because the label mean is communicated to the learner through the context. However, as discussed, if the learner were to engage in this ``honest'' prediction strategy, their predictions (although unbiased) would incur high error because of noise. Our group constructions are designed  to punish dishonest strategies, and thus to force the learner into the high-error ``honest'' regime. For prediction dependent groups, there is a conceptually straightforward way to do this---although there are a number of technical obstacles to carrying the idea through formally. For prediction independent groups this is more complex.

\subsubsection{Lower Bound for the General Case
}
\paragraph{Hard instance.} We use a Bernoulli environment in which contexts $x^t$ cycle over a fixed grid in $[1/4, 3/4]$, and labels are drawn as
$$ y^t \sim \mathrm{Bernoulli}(x^t) $$
independently across time. We choose a grid size $m \approx T^{1/3}$ and a small margin parameter $\eta \approx \sqrt{m/T}$. The construction uses only three disjoint binary prediction-dependent groups that partition the prediction space according to whether the learner made a prediction that was approximately ``honest'', or was dishonest either by predicting substantially above the label mean (an ``overshoot'') or by predicting substantially below the label mean (an ``undershoot''):
\begin{align*}
    g_1(x,v) &= \mathbf{1}[v \ge x+\eta] \quad \text{(large overshoots),} \\
    g_2(x,v) &= \mathbf{1}[v \le x-\eta] \quad \text{(large undershoots),} \\
    g_3(x,v) &= \mathbf{1}[|v-x|<\eta] \quad \text{(approximately honest predictions).}
\end{align*}

\paragraph{Proof steps.} The proof shows that any algorithm must incur multicalibration error $\Omega(T^{2/3})$ on at least one of these groups, by splitting into two complementary cases.

1. \textbf{Partition rounds into big deviations and ``honest'' rounds.}
    For a fixed algorithm and realization, define
    \begin{align*}
        \text{big-deviation rounds } B &:= \{t : |p^t-x^t| \ge \eta\}, \\
        \text{honest rounds } H &:= \{t : |p^t-x^t| < \eta\},
    \end{align*}
    with counts $B_T=|B|$ and $H_T=|H|=T-B_T$.

    On $B$, exactly one of $g_1$ or $g_2$ is active. On $H$, only $g_3$ is active. We chose $\eta$ small enough relative to the grid spacing  so that the intervals $(x-\eta, x+\eta)$ around distinct grid points are disjoint; this will later let us localize the $g_3$ error by context.

2. \textbf{Many big deviations force large error on $g_1$ or $g_2$.}
    The first part of our analysis shows that if the algorithm predicts dishonestly often, it pays linearly in multicalibration error.

    Let $r^t := p^t-x^t$. On big-deviation rounds, either $r^t \ge \eta$ or $r^t \le -\eta$. For each fixed prediction $v$, the expected contribution of the overshoot rounds ($r^t \ge \eta$) to the calibration bias of $g_1$ is
    $$ \mathbb{E}\big[B_T(v,g_1)\big] = \mathbb{E}\Big[\sum_{t: p^t=v} \mathbf{1}[p^t\ge x^t+\eta](p^t-y^t)\Big] = \mathbb{E}\Big[\sum_{t: p^t=v} \mathbf{1}[p^t\ge x^t+\eta]r^t\Big], $$
    since $\mathbb{E}[y^t\mid x^t,p^t]=x^t$. Whenever $p^t \ge x^t+\eta$, we have $r^t \ge \eta$, so we get expected positive bias $\geq \eta$ on those rounds; a symmetric argument holds for $g_2$. Summing over $v$, this implies
    $$ \mathbb{E}[\mathrm{MCerr}_T] \gtrsim \eta \mathbb{E}[B_T]. $$
    So, if $B_T$ is large, we are already done: either $g_1$ or $g_2$ must have large calibration error.

3. \textbf{Few big deviations force many ``honest'' rounds per context.}
    The complementary case is when the algorithm mostly stays close to honest: If $\mathbb{E}[B_T]$ is not large, then $\mathbb{E}[H_T]$ is large. Contexts cycle on a regular grid, and so each grid point $x$ appears about $T/m$ times.

    We refine the partition by context to avoid cancellations: For each grid point $x$, let $H_x$ be the honest rounds with that context, and $n_x:=|H_x|$. Similarly define $B_x$ and $b_x$ for big deviations at context $x$, with $T_x = n_x + b_x$ the total number of times $x$ appears. If $B_T$ is small, the structure of the instance forces many contexts $x$ to have a substantial number of honest rounds $n_x$; this is where we will extract noise using a martingale argument.

4. \textbf{Honest rounds accumulate uncontrollable noise on $g_3$.}
    On honest rounds for a fixed context $x$, we decompose the calibration error for $g_3$ into:
    \begin{itemize}
        \item \textbf{Noise:} $N_x := \sum_{t\in H_x} (x^t - y^t)$, a sum of centered Bernoulli deviations around $x^t$; and
        \item \textbf{Bias:} $R_x := \sum_{t\in H_x} (p^t - x^t)$, which is small in magnitude because $|p^t-x^t|<\eta$ on $H_x$.
    \end{itemize}
    We show a structural lemma (Lemma~\ref{lem:g4-context-decomp}) that the calibration error on $g_3$ satisfies
    $$ \sum_{v} |B_T(v,g_3)| \ge \sum_x |N_x| - \sum_x |R_x|. $$
By construction, we know that the bias is small on honest rounds:  $\sum_x |R_x| \le \eta H_T$ which will be negligible at our choice of $\eta$.

    The core probabilistic step is a  martingale-transform lower bound applied context-wise: on the rounds where any fixed context $x$ occurs, the noise terms $Z_t := x^t-y^t$ form a martingale difference sequence with variance bounded away from zero. Using a martingale moment argument, we show (Proposition~\ref{prop:binomial-deviation}) that whenever a nontrivial fraction of the occurrences of $x$ are honest rounds (i.e., many indices in $H_x$), the noise magnitude $\mathbb{E}[|N_x|] \gtrsim \sqrt{n_x}$ is large.

    Summing over contexts and using that the $T_x$'s are all $\Theta(T/m)$, we obtain a tradeoff: either there are many big deviations (large $B_T$), or
    $$ \mathbb{E}\Big[\sum_x |N_x|\Big] \gtrsim \sqrt{mT}, $$
    which in turn forces the calibration error on $g_3$ to be of that order up to the small bias term. Choosing parameters to optimize the tradeoff yields our optimal $\Omega(T^{2/3})$ lower bound. 

\subsubsection{Lower Bound for Prediction Independent Groups}
At a high level, our approach to proving lower bounds while restricting to
\emph{prediction-independent} groups mirrors our approach for the general case:
(1) we use an oblivious stochastic instance in which label means are revealed by the
context, so that ``honest'' predictions are feasible but suffer high calibration error because of noise; (2) we construct groups
to punish ``dishonest'' strategies that try to cancel noise by
grouping predictions; and (3) we bound the multicalibration error by decomposing the marginal calibration error into a
\emph{bias} component (controlled by group constraints) and an unavoidable \emph{noise}
component (which we show must be large).

Unlike the prediction dependent case, we cannot directly detect and constrain deviations from honesty, since our groups must be defined only through context (not prediction). Instead, we define a family of groups derived from an orthonormal basis and show that low multicalibration error on these groups requires the learner to make predictions that are close to honest in an $\ell_1$ sense. 

\paragraph{Hard instance.} Concretely, we consider contexts $x^t$ that encode the label mean. Label means $x^t$ cycle 
through an $m$-point grid in $[1/4,3/4]$ with $m=\Theta(T^{1/3})$, and outcomes are
generated as
\[
y^t \;=\; x^t + \frac{\xi^t}{4},\qquad \xi^t\in\{\pm1\}\text{ i.i.d.}
\]
We define a prediction-independent group family $G$ consisting of:
(i) the constant group $g_{\mathrm{all}}$ (enforcing marginal calibration),
(ii) $O(\log^3 T)$ Walsh half-groups $g^{\Walsh,\pm}_\ell$ on the $m$-point grid. They are subsampled from a full family of Walsh half-groups, whose differences yield signed Walsh functionals $w_\ell=g^{\Walsh,+}_\ell-g^{\Walsh,-}_\ell$ forming an orthonormal basis.

\paragraph{Proof steps.} Our starting point is a bias--noise decomposition lower bound on the calibration error of the constant group. Our multicalibration lower bound will be extracted from this bound, with the help of the other (Walsh) groups. Specifically, letting $Z_t = x_t-y_t$,
we define an $\ell_1$-truthfulness/bias term, and bucket-wise noise terms:
\[
A := \sum_{t\le T}|p_t-x_t|,
\qquad
U_v := \sum_{t\le T:p_t=v} Z_t.
\]
Then, it is easily seen that for the constant group \(g_{\mathrm{all}}\), the calibration error satisfies
\[
\mathrm{Err}_T(g_{\mathrm{all}})
\ge
\sum_{v\in V_T}|U_v| - A.
\]
The remaining proof reduces to upper-bounding the bias penalty \(A\) and lower-bounding the
noise contribution \(\sum_v |U_v|\).
We accomplish this via the following several steps:

1. \textbf{Walsh groups enforce \(\ell_1\)-truthfulness, hence control the bias.} We show that the
sign pattern \(\mathrm{sign}(v-x_i)\) on the mean grid can be exactly expanded by an auxiliary
full Walsh group family with an \(\ell_1\) coefficient mass bounded by \(O(\log m)\). We further
prove the existence of a polylogarithmic-size subsampled Walsh family that approximately
expands the sign pattern, while maintaining the logarithmically bounded total coefficient
weight. Using this subsample for the final group family, we show that if the forecaster has
small multicalibration error, then the bias penalty \(A\) is small in expectation:
\[
\mathbb{E}[A]
\le
O(\log m)\cdot \mathbb{E}[\mathrm{MCerr}_T(G)].
\]
Intuitively: systematic deviations from \(x_t\) would either be detected by marginal
calibration on \(g_{\mathrm{all}}\) or be witnessed by some Walsh functional.

2. \textbf{\(\ell_1\)-truthfulness forces many moderately-used prediction values.} It turns out that on our hard instance, a forecaster would need to spread its forecasts across many relevant values \(v\) to maintain honesty. To formalize this, define a prediction diversity/spread parameter \(N\) as follows:
\[
N := \sum_{v\in V_T}\sqrt{n_v}, \qquad \text{where } n_v := |\{t\le T:p_t=v\}|.
\]
As we show, small
\(\ell_1\)-distance to honesty \(A\) prevents the forecaster from concentrating mass on only a
few prediction values, and forces spread-out predictions in the sense that, up to constants,
\[
\mathbb{E}[N]
\gtrsim
\frac{T}{\sqrt{\mathbb{E}[A]+T/m+1}}.
\]
Since \(x_t\) traverses a grid of size \(m=\Theta(T^{1/3})\),
this implies (for small \(A\)) that \(N\) is \(\approx \sqrt{mT}\approx T^{2/3}\).

3. \textbf{Bucket noise is lower-bounded even under adaptive bucketing.} The noise terms \(Z_t=x_t-y_t\)
are i.i.d.\ Rademachers scaled by \(1/4\). We analyze an arbitrary adaptive bucketing
strategy that, at each time, chooses a bucket based on the past noise realizations. Via a
potential-function argument for the bucket sums and a decomposition of the noise random
walk into excursions away from zero, we prove an adaptive noise bucketing theorem: up to
logarithmic factors,
\[
\mathbb{E}\bigg[\sum_{v\in V_T}|U_v|\bigg]
\gtrsim
\mathbb{E}[N].
\]
In other words, no matter how the algorithm routes the noise, the total bucketed noise
magnitude must be large whenever the bucket counts are diverse.

Finally, we combine the above steps. Plugging the bias bound and the adaptive-bucketing noise
bound back into the constant-group decomposition yields
\[
\mathbb{E}[\mathrm{MCerr}_T(G)]
\ge
\widetilde{\Omega}(T^{2/3}),
\]
matching upper bounds up to polylogarithmic factors for \(|G|=O(\log^3 T)\).

\section{Model and Definitions}
\label{sec:model}

Fix a time horizon $T\in\mathbb{N}$. On each round $t=1,\dots,T$:
\begin{enumerate}
  \item A context $x^t$ in a context space $X$ is revealed.
  \item The prediction algorithm outputs a distribution $P^t$ on $[0,1]$.
  \item An outcome $y^t\in[0,1]$ is selected (by the adversary/environment).
  \item A prediction $p^t\in[0,1]$ is drawn from $P^t$.
\end{enumerate}

We allow the algorithm to be adaptive: $P^t$ can be any (possibly randomized) function of
$(x^1,y^1,p^1,\dots,x^{t-1},y^{t-1},p^{t-1},x^t)$. In all of our lower bounds, the adversary will be oblivious --- the context/label sequence is selected independently of the interaction with the prediction algorithm.

Whenever a proof needs the realized prediction $p^t$ to be predictable before $y^t$ is revealed, we condition on the
learner's full internal random seed at time $0$ and enlarge the filtration to include this seed. After this
conditioning, the sampled prediction on round $t$ is a deterministic function of the pre-outcome history
$(x^1,y^1,p^1,\dots,x^{t-1},y^{t-1},p^{t-1},x^t)$, so $p^t$ is measurable with respect to the corresponding
pre-outcome filtration. All expectations in the paper are then obtained by averaging back over the seed.

\begin{definition}[Group functions]
A \emph{group function} is a map $g:X\times[0,1]\to[0,1]$.
Given a finite set $G$ of group functions, we will measure calibration error separately on each group.
Throughout, we use $G$ to denote a finite family of group functions and write $|G|$ for its cardinality.
\end{definition}

\begin{definition}[Binary and prediction-independent groups]
A group function $g:X\times[0,1]\to[0,1]$ is \emph{binary-valued} if $g(x,v)\in\{0,1\}$ for all $(x,v)\in X\times[0,1]$.
We say $g$ is \emph{prediction-independent} if there exists a function $h:X\to[0,1]$ such that $g(x,v)=h(x)$ for all $(x,v)$.
In this case we may identify $g$ with $h$ and write $g(x)$ instead of $g(x,v)$.
\end{definition}

We sometimes refer to general group functions $g:X\times[0,1]\to[0,1]$ that may depend on the prediction value $v$ as \emph{prediction-dependent} groups. All of the lower bounds we prove in this paper use only binary groups.

\begin{definition}[Empirical Bias and multicalibration error]
Given a sequence $(x^t,p^t,y^t)_{t=1}^T$ and a group $g$, the \emph{empirical bias} at prediction value $v\in[0,1]$ is
\[
  B_T(v,g)\;=\;\sum_{t=1}^T \mathbf{1}[p^t=v]\;g(x^t,p^t)\,(p^t-y^t).
\]
Let $V_T:=\{p^t : t=1,\dots,T\}$ be the (finite) set of prediction values actually used.
The expected calibration error for group $g$ is
\[
  \mathrm{Err}_T(g)\;:=\;\sum_{v\in V_T} |B_T(v,g)|.
\]

Given a finite family $G$ of groups, the \emph{expected multicalibration error} at time $T$ is
\[
  \mathrm{MCerr}_T(G)\;:=\;\max_{g\in G}\;\mathrm{Err}_T(g)
  \;=\;\max_{g\in G}\;\sum_{v\in V_T} |B_T(v,g)|.
\]
When the group family is clear from context  we abbreviate the multicalibration error as $\mathrm{MCerr}_T$.
\end{definition}

Randomness arises both from the algorithm (if it randomizes) and from the environment.
All expectations $\mathbb{E}[\cdot]$  are taken with respect to this joint randomness.

\section{Optimal Lower Bound for the General Case}
\label{sec:prediction-dependent}

In this section, we give a lower bound instance consisting of three binary prediction-dependent groups, showing that any algorithm must obtain multicalibration error over these groups scaling as  $\Omega(T^{2/3})$. This  matches the upper bound of \cite{noarov2023high} up to log factors. 

\paragraph{Proof overview.}
The hard instance reveals contexts $x^t$ that cycle through a grid of $m = \Theta(T^{1/3})$ values, with labels drawn as $y^t \sim \mathrm{Bernoulli}(x^t)$. The key insight is that prediction-dependent groups can directly detect when the learner deviates from ``honest'' predictions $p^t = x^t$:
\begin{itemize}
    \item Groups $g_1$ and $g_2$ activate when predictions overshoot or undershoot the context by more than $\eta$. Any such ``big deviation'' incurs expected bias of at least $\eta$, so many big deviations yield large error on $g_1$ or $g_2$.
    \item Group $g_3$ activates on ``$\eta$-honest'' rounds where $|p^t - x^t| < \eta$. On these rounds, predictions are approximately honest, so calibration error is driven by the inherent noise $x^t - y^t$. A martingale argument shows this noise accumulates to $\Omega(\sqrt{mT}) = \Omega(T^{2/3})$.
\end{itemize}
Either the algorithm makes many big deviations (punished by $g_1, g_2$) or mostly honest predictions (punished by $g_3$), yielding $\Omega(T^{2/3})$ error in both cases.

\begin{theorem}[Prediction-dependent lower bound]\label{thm:main} Let $(\mathcal{D}_{T,m}, G)$ be the hard instance defined in this section. There exists a constant $c>0$ and $T_0\in\mathbb{N}$ such that for all $T\ge T_0$, and for any (possibly randomized) prediction algorithm $A$:
\[
 \mathbb{E}_{\mathcal{D}_{T,m}}[\mathrm{MCerr}_T(G)] \ \;\ge\; c\,T^{2/3}.
\]
\end{theorem}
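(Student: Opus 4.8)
The plan is to fix an arbitrary (possibly randomized) algorithm, run it against the oblivious Bernoulli instance with grid size $m\asymp T^{1/3}$ and margin $\eta=\epsilon_0\sqrt{m/T}$ for a small absolute constant $\epsilon_0$ (chosen so that the $2\eta$-intervals around distinct grid points are disjoint --- which needs $4\epsilon_0 m^{3/2}<\sqrt{T}$, true for large $T$ --- and small enough that a ``bias'' slack below is negligible), and to show $\mathbb{E}[\mathrm{MCerr}_T(G)]=\Omega(\sqrt{mT})=\Omega(T^{2/3})$. Write $r^t:=p^t-x^t$; call round $t$ a \emph{big-deviation} round if $|r^t|\ge\eta$ and \emph{honest} otherwise, with counts $B_T$ and $H_T=T-B_T$. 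The argument splits on whether $\mathbb{E}[B_T]$ is large, with the two halves detected by $\{g_1,g_2\}$ and by $g_3$ respectively.

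\textbf{Big deviations are punished by $g_1,g_2$.} Since $g_1$ is active exactly on overshoot rounds ($r^t\ge\eta$) and $g_2$ exactly on undershoot rounds ($r^t\le-\eta$), and since $|B_T(v,g_1)|\ge B_T(v,g_1)$ and $|B_T(v,g_2)|\ge -B_T(v,g_2)$ for every $v$, summing over $v\in V_T$ telescopes (each round contributes to the single value $v=p^t$) to give $\mathrm{Err}_T(g_1)\ge\sum_{t:\,r^t\ge\eta}(p^t-y^t)$ and $\mathrm{Err}_T(g_2)\ge\sum_{t:\,r^t\le-\eta}(y^t-p^t)$. Taking expectations and using that the environment is oblivious Bernoulli, so $\mathbb{E}[y^t\mid x^{1:t},y^{1:t-1},p^{1:t}]=x^t$ while the indicator $\mathbf{1}[r^t\ge\eta]$ is measurable with respect to that same history, each surviving summand contributes at least $\eta$ in expectation; hence $\mathbb{E}[\mathrm{Err}_T(g_1)]+\mathbb{E}[\mathrm{Err}_T(g_2)]\ge\eta\,\mathbb{E}[B_T]$ and therefore $\mathbb{E}[\mathrm{MCerr}_T]\ge\tfrac{\eta}{2}\mathbb{E}[B_T]$. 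Thus in the case $\mathbb{E}[B_T]\ge T/4$ we are done: $\mathbb{E}[\mathrm{MCerr}_T]\gtrsim\eta T\asymp\sqrt{mT}$.

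\textbf{Few deviations are punished by $g_3$.} Suppose $\mathbb{E}[B_T]<T/4$; then by Markov $\Pr[B_T<T/2]>1/2$. Because contexts cycle over the $m$-point grid, each grid point $x$ occurs $T_x=\Theta(T/m)$ times; a counting argument shows that on the event $\{B_T<T/2\}$ at least $\Omega(m)$ grid points have $n_x\ge n_0:=\Omega(T/m)$ honest occurrences. To lower bound $\mathrm{Err}_T(g_3)$ I invoke the context-localization lemma (Lemma~\ref{lem:g4-context-decomp}): since $2\eta$ is below the grid spacing, any prediction value $v$ with $g_3(x^t,v)=1$ on some honest round forces $x^t$ to be the unique grid point within $\eta$ of $v$, so the $B_T(v,g_3)$'s associated to grid point $x$ sum to $N_x+R_x$ with $N_x:=\sum_{t\in H_x}(x^t-y^t)$, $R_x:=\sum_{t\in H_x}r^t$; disjointness of the grid intervals then gives $\mathrm{Err}_T(g_3)\ge\sum_x|N_x|-\sum_x|R_x|\ge\sum_x|N_x|-\eta T$. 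For the noise term, on the occurrences of a fixed $x$ the increments $Z_j:=x-y^{t_j}$ form a bounded martingale difference sequence with conditional variance $\ge 3/16$, and the honesty indicators are predictable (the prediction $p^{t_j}$ is drawn before $y^{t_j}$); stopping at the $n_0$-th honest occurrence and combining a second-moment lower bound with a fourth-moment (Burkholder-type) upper bound through H\"older --- this is Proposition~\ref{prop:binomial-deviation} --- yields $\mathbb{E}[|N_x|]\gtrsim\sqrt{n_0}\,\bigl(\Pr[n_x\ge n_0]\bigr)^{3/2}$. Summing over the $\Omega(m)$ good grid points and using the power-mean inequality on those probabilities gives $\mathbb{E}[\sum_x|N_x|]\gtrsim\sqrt{n_0}\cdot m\asymp\sqrt{mT}$; choosing $\epsilon_0$ small enough that $\eta T=\epsilon_0\sqrt{mT}$ is at most half of this bound yields $\mathbb{E}[\mathrm{MCerr}_T]\ge\mathbb{E}[\mathrm{Err}_T(g_3)]\gtrsim\sqrt{mT}\asymp T^{2/3}$.

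\textbf{Main obstacle.} The crux is this last anti-concentration step: the honest index set $H_x$ is itself random and correlated with the label noise through the algorithm's adaptive predictions, so $N_x$ is not a clean sum of $n_x$ fixed i.i.d.\ terms, and the stopping-time-plus-moment argument of Proposition~\ref{prop:binomial-deviation} is exactly what converts such a martingale transform into a $\sqrt{n_x}$-scale deviation lower bound. A secondary but necessary bookkeeping point is calibrating $\eta$ (and absorbing the $\lfloor T/m\rfloor$ vs.\ $\lceil T/m\rceil$ rounding in $T_x$) so that the inevitable $\eta T$ bias slack is dominated by the $\sqrt{mT}$ noise --- which is why $\eta$ is a small constant multiple of $\sqrt{m/T}$ rather than $\sqrt{m/T}$ itself.
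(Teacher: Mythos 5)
Your proposal follows the same overall strategy as the paper's proof: the same hard instance ($m\asymp T^{1/3}$, $\eta\asymp\sqrt{m/T}$), the same case split on $\mathbb{E}[B_T]$, the same mechanism for the big-deviation case (each overshoot/undershoot round contributes $\ge\eta$ expected bias to $g_1$ or $g_2$, yielding $\mathbb{E}[\mathrm{MCerr}_T]\ge\tfrac{\eta}{2}\mathbb{E}[B_T]$), the same context-localization of the $g_3$ error (your Lemma~\ref{lem:g4-context-decomp}), and the same moment-method/martingale-transform deviation bound as the core engine. The only genuinely different piece is how you handle the randomness of $n_x$: the paper identifies a \emph{deterministic} set $D$ of ``dense'' contexts with $\mathbb{E}[n_x]\ge T_x/2$ and total mass $\ge T/2$ (Lemma~\ref{lem:dense-contexts}), and then applies Proposition~\ref{prop:binomial-deviation} --- whose hypothesis is on $\mathbb{E}[n_x]$, not a probability --- directly to each $x\in D$. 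You instead go through Markov ($\Pr[B_T<T/2]>1/2$), count the random set of contexts with $n_x\ge n_0$ on that event, derive a probability-weighted deviation bound $\mathbb{E}|N_x|\gtrsim\sqrt{n_0}\,\Pr[n_x\ge n_0]^{3/2}$, and aggregate by power-mean. Both routes land at $\mathbb{E}[\sum_x|N_x|]\gtrsim\sqrt{mT}$; the paper's is cleaner because the ``good'' set is non-random and no power-mean step is needed.

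One presentational imprecision to fix: the ``stopping at the $n_0$-th honest occurrence'' step, if taken literally, gives moment bounds for the stopped sum $M_\tau$, which is \emph{not} $N_x$ on the event $\{n_x>n_0\}$ --- so it does not by itself yield $\mathbb{E}[|N_x|]\gtrsim\sqrt{n_0}\,\Pr[n_x\ge n_0]^{3/2}$. The fix is easy and actually simpler: don't stop. Work directly with the terminal sum $N_x=M_{T_x}$. Use $\mathbb{E}[N_x^2]=\mathbb{E}[\langle M\rangle_{T_x}]\ge\sigma^2\,\mathbb{E}[n_x]\ge\sigma^2 n_0\,\Pr[n_x\ge n_0]$ (since $n_x\ge n_0\mathbf{1}[n_x\ge n_0]$ pathwise), and $\mathbb{E}[N_x^4]\lesssim T_x^2$ from Burkholder--Rosenthal with the pathwise bound $\langle M\rangle_{T_x}\le T_x$; then H\"older $\mathbb{E}|N_x|\ge(\mathbb{E}N_x^2)^{3/2}/(\mathbb{E}N_x^4)^{1/2}$ gives your claimed $3/2$-power bound. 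Also note your parenthetical ``this is Proposition~\ref{prop:binomial-deviation}'' is not quite right: the paper's proposition is stated with a hypothesis on $\mathbb{E}[n]$ and a conclusion of the form $\mathbb{E}|N|\ge c\sqrt{L}$, not in the probability-weighted form you derive; it is the same moment-method engine, but a different (cleaner) specialization.
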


\subsection{The Hard Instance}
We will now define both elements of our proposed hard instance: the distribution over contexts and labels, followed by the group family.

\paragraph{The Hard Distribution}
First, we define the hard distribution over contexts and labels, $\mathcal{D}_{T,m}$. For some $T\ge 1$ and $m\ge 8$, define grid points
\[
  z_j \;:=\; \frac{j}{m}, \qquad j=1,\dots,m-1.
\]
We restrict attention to the ``interior'' grid points
\[
  J \;:=\;\Big\{j\in\{1,\dots,m-1\}: z_j\in\big[\tfrac14,\tfrac34\big]\Big\},
\]
and set
\[
  X_0\;:=\;\{x_j : j\in J\}, \qquad x_j := z_j.
\]
For $m\ge 4$ we have the uniform bound
\[
  |J|
  = \bigl|\{j\in\{1,\dots,m-1\}: j/m\in[1/4,3/4]\}\bigr|
  \;\ge\; \Big\lfloor\frac{3m}{4}\Big\rfloor - \Big\lceil\frac{m}{4}\Big\rceil + 1
  \;\ge\; \frac{m-1}{2}.
\]
In particular, for all $m\ge 4$, $|J|\ge \frac{3}{8} \cdot m$.

Let $m_0 := |J|$. We will only use the contexts in $X_0$.

\begin{definition}[Hard distribution $\mathcal{D}_{T,m}$]
Fix $T,m$ and $X_0$ as above.
Define $(x^t,y^t)_{t=1}^T$ as follows:
\begin{itemize}
  \item Contexts: $x^t$ cycles through $X_0$ in round-robin order.
        Formally, fix any enumeration $(x^{(1)},\dots,x^{(m_0)})$ of $X_0$, and set
        \[
          x^t := x^{(k)}, \quad\text{where } k \equiv t \pmod{m_0},\ k\in\{1,\dots,m_0\}.
        \]
        Thus each $x^{(k)}$ appears either $\lfloor T/m_0 \rfloor$ or $\lceil T/m_0 \rceil$ times.
  \item Labels: given the context $(x^t)$, draw $y^1,\dots,y^T$ independently with
        \[
        	  y^t \sim \mathrm{Bernoulli}(x^t)
        \]
\end{itemize}
We denote by $\mathcal{D}_{T,m}$ the joint distribution of $(x^t,y^t)_{t=1}^T$ constructed above. For our purposes, the order of the contexts will not matter: the important properties of this distribution are that each context is approximately equally frequent, and that it encodes the label mean at each round. 
\end{definition}
We observe a useful property about this distribution. 
For each $t$, define
  $Z_t := x^t - y^t$, the residuals of ``honest predictions''. 
Then $(Z_t)_{t=1}^T$ are independent, mean zero, and have nontrivial variance because $x^t\in [1/4,3/4]$ for all $t$:
\[
  \mathbb{E}[Z_t\mid x^t] = 0, \qquad
  \mathrm{Var}(Z_t\mid x^t) = x^t(1-x^t) \in \big[\tfrac{3}{16},\tfrac{1}{4}\big].
\]

\paragraph{The Hard Group Family}
We now define the family of disjoint groups $G = \{g_1,g_2,g_3\}$ used for the lower bound. The groups are designed to create a dilemma for the forecaster: deviating from honest predictions is detected by $g_1$ and $g_2$, while staying honest exposes the forecaster to noise accumulation on $g_3$.

\noindent
Let $\eta > 0$ be a threshold (it will later be set to $\delta \sqrt{\frac{m}{T}}$ for a carefully selected $\delta$). The groups are:
\begin{align*}
  g_1(x,v) := \mathbf{1}[v \ge x + \eta], \qquad
  g_2(x,v) := \mathbf{1}[v \le x - \eta], \qquad
  g_3(x,v) := \mathbf{1}[|v-x| < \eta].
\end{align*}

Thus:
\begin{itemize}
  \item $g_1$ is active on rounds where the prediction overshoots the context by at least $\eta$;
  \item $g_2$ is active on rounds where the prediction undershoots by at least $\eta$;
  \item $g_3$ is active on rounds where the prediction is $\eta$-close to the context.
\end{itemize}

\subsection{Probabilistic Tools}

The main technical tool we need is a lower bound on the deviation of a ``filtered'' martingale. When analyzing the $\eta$-honest rounds (group $g_3$), we must show that the noise terms $Z_t = x^t - y^t$ accumulate to large magnitude even when summed only over an adaptively-chosen subset of rounds. The key challenge is that the forecaster's predictions---and hence which rounds are $\eta$-honest---can depend on past noise realizations, potentially allowing cancellations.

Proposition~\ref{prop:binomial-deviation} below shows this cannot help much: as long as at least a constant fraction of rounds are included (in expectation), the filtered sum still has $\Omega(\sqrt{L})$ expected magnitude.

We now record some basic properties of a martingale difference sequence $(Z_t)$ that arise from the Bernoulli environment we use in our lower bound. 
Let $(x^t)_{t=1}^T \subset [1/4,3/4]$, and given $(x^t)$ let $y^1,\dots,y^T$ be independent with $y^t\sim\mathrm{Bernoulli}(x^t)$ for each $t$.
 $Z_t := x^t-y^t$. Consequently,
\[
  x^t-y^t \in \Big[-\tfrac{3}{4},-\tfrac{1}{4}\Big]\cup\Big[\tfrac{1}{4},\tfrac{3}{4}\Big]
  \qquad\text{and hence}\qquad
  \tfrac{1}{4} \,\le\, |Z_t| \,\le\, \tfrac{3}{4}
\]
for every $t$.

The contexts $(x^t)$ are deterministic (fixed independently of the history, as the adversary is non-adaptive/oblivious), and the labels
$y^1,\dots,y^T$ are independent with $y^t\sim\mathrm{Bernoulli}(x^t)$.
Let
\[
  \mathcal{F}_t := \sigma(y^1,\dots,y^t),\qquad t=0,1,\dots,T,
\]
with the convention that $\mathcal{F}_0$ is the trivial $\sigma$-algebra.
Then $Z_t = x^t-y^t$ is $\mathcal{F}_t$-measurable and
\[
  \mathbb{E}[Z_t\mid \mathcal{F}_{t-1}]
  = x^t - \mathbb{E}[y^t\mid \mathcal{F}_{t-1}]
  = x^t - \mathbb{E}[y^t]
  = x^t - x^t
  = 0,
\]
since $y^t$ is independent of $(y^1,\dots,y^{t-1})$ and has mean $x^t$.
Thus $(Z_t,\mathcal{F}_t)$ is a martingale difference sequence.
Moreover, for a fixed  prediction algorithm, each prediction $p^t$ is a measurable
function of $(x^1,\dots,x^t,y^1,\dots,y^{t-1})$.
 Because the contexts $(x^t)$ are deterministic under $\mathcal{D}_{T,m}$, this implies
  $p^t$ is $\mathcal{F}_{t-1}$-measurable for every $t$.

 Multicalibration error adds up the magnitude of \emph{empirical bias} the algorithm has obtained over various subsequences of the data, and those subsequences can be defined by the predictions of the algorithm itself, which in turn can depend on the history of the sequence in arbitrary ways. To reason about this we will use the following deviation bound for a martingale transform $N = \sum_{t=1}^L I_t Z_t$,
 where $(Z_t)$ is a martingale difference sequence and the predictable indicators $(I_t)$ select a
 dense subset of times (in expectation at least $\alpha L$).

\begin{proposition}[Dense martingale transform deviation]\label{prop:binomial-deviation}
Fix constants $0<\sigma\le 1$ and $\alpha\in(0,1]$.
Let $(Z_t)_{t=1}^L$ be a sequence of real-valued random variables adapted to a
filtration $(\mathcal{F}_t)_{t=0}^L$ such that
\begin{equation}\label{eq:Z-assumptions-dense}
  \mathbb{E}[Z_t\mid\mathcal{F}_{t-1}] = 0, \qquad
  \mathbb{E}[Z_t^2\mid\mathcal{F}_{t-1}] \ge \sigma^2, \qquad
  |Z_t|\le 1
  \quad\text{almost surely for all }t=1,\dots,L.
\end{equation}
Let $(I_t)_{t=1}^L$ be any predictable $\{0,1\}$-valued sequence, i.e.\ $I_t$
is $\mathcal{F}_{t-1}$-measurable for each $t$, and define
\[
  N := \sum_{t=1}^L I_t Z_t,
  \qquad
  n := \sum_{t=1}^L I_t.
\]
If $\mathbb{E}[n] \ge \alpha L$, then there exists a constant
$c_{\sigma,\alpha}>0$ (depending only on $\sigma$ and $\alpha$) such that
\[
  \mathbb{E}|N| \;\ge\; c_{\sigma,\alpha}\,\sqrt{L}.
\]
\end{proposition}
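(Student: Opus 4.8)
The plan is to bound $\mathbb{E}|N|$ from below by comparing the second and fourth moments of $N$ through H\"older's inequality, reducing the statement to two estimates: $\mathbb{E}[N^2]\gtrsim L$ and $\mathbb{E}[N^4]\lesssim L^2$ (with constants depending on $\sigma,\alpha$ for the former and an absolute constant for the latter). The structural fact that makes this work is that, because $I_t$ is $\mathcal{F}_{t-1}$-measurable, the increments $\Delta_t:=I_tZ_t$ again form a martingale difference sequence for $(\mathcal{F}_t)$; thus $N=\sum_{t=1}^L\Delta_t$ is a martingale with bounded increments $|\Delta_t|\le 1$, and the ``filtering'' by the predictable set $(I_t)$ cannot be used to cancel the fresh randomness in $Z_t$. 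This is precisely why $\mathbb{E}[N^2]$ stays pinned to $\Theta(\mathbb{E}[n])$, which under the density hypothesis is $\Theta(L)$.

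For the lower bound on $\mathbb{E}[N^2]$, orthogonality of martingale increments (for $s<t$, $\mathbb{E}[\Delta_s\Delta_t]=\mathbb{E}[I_sZ_sI_t\,\mathbb{E}[Z_t\mid\mathcal{F}_{t-1}]]=0$ since $I_s,Z_s,I_t$ are $\mathcal{F}_{t-1}$-measurable) gives $\mathbb{E}[N^2]=\sum_{t=1}^L\mathbb{E}[I_t^2Z_t^2]=\sum_{t=1}^L\mathbb{E}\big[I_t\,\mathbb{E}[Z_t^2\mid\mathcal{F}_{t-1}]\big]\ge\sigma^2\sum_{t=1}^L\mathbb{E}[I_t]=\sigma^2\,\mathbb{E}[n]\ge\sigma^2\alpha L$, using $I_t^2=I_t$, $I_t\ge 0$, and the conditional-variance bound in \eqref{eq:Z-assumptions-dense}. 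For the upper bound on $\mathbb{E}[N^4]$, since $|\Delta_t|\le 1$ almost surely, Azuma--Hoeffding yields $\mathbb{P}(|N|\ge\lambda)\le 2e^{-\lambda^2/(2L)}$ for all $\lambda>0$, and integrating $\mathbb{E}[N^4]=\int_0^\infty 4\lambda^3\mathbb{P}(|N|\ge\lambda)\,d\lambda$ gives $\mathbb{E}[N^4]\le C_0L^2$ for an absolute constant $C_0$. (Equivalently one can expand $\mathbb{E}[N^4]=\sum_{t_1,\dots,t_4}\mathbb{E}[\prod_i I_{t_i}Z_{t_i}]$ directly: by the martingale property only tuples whose largest index is repeated survive, leaving ``all-equal'', ``two-pairs'', and ``triple--single'' patterns, each bounded by $O(L^2)$ via $|Z_t|\le 1$, $I_t\in\{0,1\}$, and the triangle inequality; either route is routine.)

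Finally, applying H\"older with exponents $3/2$ and $3$ to the factorization $N^2=|N|^{2/3}\cdot|N|^{4/3}$ gives $\mathbb{E}[N^2]\le(\mathbb{E}|N|)^{2/3}(\mathbb{E}[N^4])^{1/3}$, hence $\mathbb{E}|N|\ge(\mathbb{E}[N^2])^{3/2}/(\mathbb{E}[N^4])^{1/2}\ge(\sigma^2\alpha L)^{3/2}/(C_0L^2)^{1/2}=c_{\sigma,\alpha}\sqrt{L}$ with $c_{\sigma,\alpha}:=\sigma^3\alpha^{3/2}/\sqrt{C_0}$, as claimed. I do not expect any genuinely hard step here: the fourth-moment estimate is the most calculation-heavy piece but is entirely standard, and the only conceptual point to get right is confirming that the predictability of $(I_t)$ keeps $(\Delta_t)$ a martingale difference sequence, so that both the second-moment identity and the concentration bound apply unchanged regardless of how adaptively the inclusion indicators were chosen.
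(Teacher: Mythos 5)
Your proof is correct, and it follows the same high-level strategy as the paper's: (i) show $\mathbb{E}[N^2]\ge\sigma^2\alpha L$ using the martingale orthogonality of the increments $I_tZ_t$ and the conditional-variance lower bound, (ii) show $\mathbb{E}[N^4]\lesssim L^2$, and (iii) combine via a second-versus-fourth-moment comparison to conclude $\mathbb{E}|N|\gtrsim\sqrt{L}$. Where you diverge is in the tools for the last two steps. For the fourth-moment bound, the paper invokes the Burkholder--Rosenthal inequality together with the pointwise bound $\langle M\rangle_L\le L$, whereas you use Azuma--Hoeffding on the bounded-increment martingale $N$ and integrate the sub-Gaussian tail; this is more elementary and self-contained, and it is valid precisely because $|I_tZ_t|\le 1$, the same bound the paper relies on. For the comparison step, the paper applies Paley--Zygmund to $X=M_L^2$ and multiplies by the event threshold, whereas you directly apply H\"older with exponents $(3/2,3)$ to get $\mathbb{E}|N|\ge(\mathbb{E}[N^2])^{3/2}/(\mathbb{E}[N^4])^{1/2}$; the two are essentially interchangeable here, and your route actually yields a cleaner constant of order $\sigma^3\alpha^{3/2}$ compared to the paper's $\sigma^5\alpha^{5/2}$. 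Both proofs use the same essential structural observation — predictability of $(I_t)$ preserves the martingale difference property — so the only genuine difference is that your proof avoids citing Burkholder--Rosenthal, at no cost.
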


\begin{proof}
Define the martingale $(M_t)_{t=0}^L$ by $M_0=0$ and
$
  M_t := \sum_{s=1}^t I_s Z_s, \, t=1,\dots,L.
$

Since $I_s$ is $\mathcal{F}_{s-1}$-measurable and~\eqref{eq:Z-assumptions-dense}
gives $\mathbb{E}[Z_s\mid\mathcal{F}_{s-1}]=0$, we have
\[
  \mathbb{E}[M_t\mid\mathcal{F}_{t-1}]
  = M_{t-1} + \mathbb{E}[I_t Z_t\mid\mathcal{F}_{t-1}]
  = M_{t-1} + I_t\,\mathbb{E}[Z_t\mid\mathcal{F}_{t-1}]
  = M_{t-1},
\]
so $(M_t,\mathcal{F}_t)$ is a martingale. Its increments are $D_t = I_t Z_t$,
and its predictable quadratic variation is
\[
  \langle M\rangle_L
  := \sum_{t=1}^L \mathbb{E}[D_t^2\mid\mathcal{F}_{t-1}]
  = \sum_{t=1}^L I_t\,\mathbb{E}[Z_t^2\mid\mathcal{F}_{t-1}].
\]
By~\eqref{eq:Z-assumptions-dense}, whenever $I_t=1$ we have
$\mathbb{E}[Z_t^2\mid\mathcal{F}_{t-1}] \ge \sigma^2$, while if $I_t=0$
the corresponding term is zero. Therefore, pathwise
\[
  \langle M\rangle_L \;\ge\; \sigma^2 \sum_{t=1}^L I_t
  = \sigma^2 n.
\]
Taking expectations yields
the  identity $\mathbb{E}[M_L^2] = \mathbb{E}[\langle M\rangle_L]$, since
the cross terms in $M_L^2 = (\sum_{t=1}^L D_t)^2$ vanish by $\mathbb{E}[D_t\mid\mathcal{F}_{t-1}]=0$.
Thus  using the assumption $\mathbb{E}[n]\ge \alpha L$,
\begin{equation}\label{eq:EN2-lower}
  \mathbb{E}[M_L^2] = \mathbb{E}[\langle M\rangle_L]
  \;\ge\; \sigma^2 \mathbb{E}[n]
  \;\ge\; \sigma^2 \alpha L.
\end{equation}

On the other hand, since $|Z_t|\le 1$ we have $|D_t| \le |I_t||Z_t| \le 1$ almost surely.
Also $D_t^2 \le I_t$, so $\langle M\rangle_L \le \sum_{t=1}^L I_t = n \le L$ almost surely.
Since $|M_L| \le \sup_{0\le t\le L} |M_t|$, we have
\[
  \mathbb{E}[M_L^4] \;\le\; \mathbb{E}\big[\sup_{0\le t\le L} |M_t|^4\big].
\]

\noindent
We will now make use of the Burkholder--Rosenthal inequality.

\begin{lemma}[Burkholder--Rosenthal Inequality~\citep{burkholder1973distribution}]
Let $(M_t)_{t=0}^T$ be a martingale with increments $D_t = M_t - M_{t-1}$ and predictable quadratic variation $\langle M \rangle_T = \sum_{t=1}^T \mathbb{E}[D_t^2 \mid \mathcal{F}_{t-1}]$. For any $p \ge 2$, there exists a constant $C_p$ depending only on $p$ such that:
\[
\mathbb{E}\left[ \sup_{0 \le t \le T} |M_t|^p \right] \le C_p \left( \mathbb{E}\left[ \langle M \rangle_T^{p/2} \right] + \mathbb{E}\left[ \sup_{1 \le t \le T} |D_t|^p \right] \right).
\]
\end{lemma}

\noindent
Applying the Burkholder--Rosenthal inequality to the above with $p=4$ gives, assuming $L\ge 1$, that
\begin{equation}\label{eq:EN4-upper}
  \mathbb{E}[M_L^4]
  \;\le\; C_4\,\Big(\mathbb{E}\left[\langle M\rangle_L^2\right] + \mathbb{E}\big[\sup_{1\le t\le L} |D_t|^4\big]\Big)
  \;\le\; C_4\,(L^2 + 1)
  \;\le\; 2C_4 L^2.
\end{equation}

To finish the proof, define the random variable $X := M_L^2$.
From \eqref{eq:EN2-lower} we have
$\mathbb{E}[X] = \mathbb{E}[M_L^2]\ge \sigma^2 \alpha L$, and from
\eqref{eq:EN4-upper} we have
$\mathbb{E}[X^2] = \mathbb{E}[M_L^4]\le 2 C_4 L^2$. Now, consider the event $\{X\ge \tfrac12\,\mathbb{E}[X]\}$. On this event, we have:
\[
  |M_L| = \sqrt{X}
  \;\ge\; \sqrt{\tfrac12\,\mathbb{E}[X]}
  \;\ge\; \sqrt{\tfrac12\,\sigma^2\alpha L}
  = \sigma\sqrt{\tfrac{\alpha}{2}}\,\sqrt{L}.
\]

\noindent
Recall the Paley-Zygmund inequality: for any r.v.\ $\widetilde{X} \ge 0$ with $0<\mathbb{E}[\widetilde{X}^2]<\infty$, it holds for any $\theta\in(0,1)$ that $ \mathbb{P}\big(\widetilde{X} \ge \theta\,\mathbb{E}[\widetilde{X}]\big)
  \;\ge\; (1-\theta)^2\,\frac{\mathbb{E}[\widetilde{X}]^2}{\mathbb{E}[\widetilde{X}^2]}.$
Applying this to $X$ with $\theta=1/2$, we get
\[
  \mathbb{P}\Big(X \ge \tfrac12\,\mathbb{E}[X]\Big)
  \;\ge\; (1-\tfrac12)^2\,\frac{\mathbb{E}[X]^2}{\mathbb{E}[X^2]}
  \;\ge\; \frac{1}{4}\,\frac{(\sigma^2\alpha L)^2}{2 C_4 L^2}
  =: p_0 > 0,
\]
where $p_0$ depends only on $\sigma$ and $\alpha$.
Hence, the proposition holds for $c_{\sigma,\alpha} := \sigma p_0 \sqrt{\alpha/2}$, since:
\[
  \mathbb{E}|N|
  = \mathbb{E}|M_L|
  \;\ge\; \sigma\sqrt{\tfrac{\alpha}{2}}\sqrt{L}\cdot
          \mathbb{P}\Big(X \ge \tfrac12\,\mathbb{E}[X]\Big)
  \;\ge\; \sigma\sqrt{\tfrac{\alpha}{2}}\sqrt{L}\cdot p_0. \qedhere
\]
\end{proof}

\subsection{Proof of Theorem~\ref{thm:main}}
We are now ready to prove our lower bound for prediction-dependent groups, using the instance $(\mathcal{D}_{T,m},G)$ defined above.

Given an algorithm and a realization of $(x^t,p^t)_{t=1}^T$, we will partition the rounds as follows.

\begin{definition}[Big deviations and $\eta$-honest rounds]
Define the index sets
\begin{align*}
  B &:= \{t : |p^t - x^t| \ge \eta\}, \\
  H &:= \{t : |p^t - x^t| < \eta\},
\end{align*}
and denote their sizes by
\[
  B_T := |B|, \qquad H_T := |H|.
\]
We refer to $t\in B$ as \emph{big-deviation rounds} and $t\in H$ as 
\emph{$\eta$-honest rounds}.
\end{definition}

Note that $B_T+H_T=T$ holds pathwise.

\medskip

 The proof of Theorem \ref{thm:main} reduces to two main components which we will combine at the end:

  \begin{enumerate}
  \item Big-deviation rounds incur large error under $g_1$ or $g_2$ (Section~\ref{sec:big-lies}).
  \item Many $\eta$-honest rounds incur large error under $g_3$ (Section~\ref{sec:g4}).
  \end{enumerate}

\subsubsection{Big Deviations are Punished by \texorpdfstring{$g_1$}{g1} and \texorpdfstring{$g_2$}{g2}}\label{sec:big-lies}

In this section we prove that big deviations (rounds $t$ with $|p^t-x^t|\ge\eta$) necessarily incur large expected error on $g_1$ or $g_2$.

We first note that since our lower bound instance is fixed and non-adaptive, it suffices to consider deterministic predictors.

\begin{lemma}[Reduction to deterministic predictors]\label{lem:deterministic}
Fix $T,m$ and the distribution $\mathcal{D}_{T,m}$.
Let $A$ be any (possibly randomized) prediction algorithm.
Then there exists a deterministic algorithm $A'$ such that
\[
  \mathbb{E}_{\mathcal{D}_{T,m}}[\mathrm{MCerr}_T(A')]
  \;\le\; \mathbb{E}_{\mathcal{D}_{T,m},A}[\mathrm{MCerr}_T],
\]
where the first expectation is over $\mathcal{D}_{T,m}$ only (since $A'$ is
deterministic), and the second expectation is over both the randomness of $A$
and $\mathcal{D}_{T,m}$.
\end{lemma}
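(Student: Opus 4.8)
The plan is to use a standard seed-fixing (averaging) argument, which works here precisely because the adversary is oblivious: the distribution $\mathcal{D}_{T,m}$ over $(x^t,y^t)_{t=1}^T$ is fixed in advance and does not depend on the algorithm's behavior.

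First I would package all of the algorithm's randomness — both any internal coins used to produce the distributions $P^t$ and the draws $p^t\sim P^t$ — into a single random seed $\omega$ (e.g.\ an i.i.d.\ sequence of uniform $[0,1]$ variables) that is independent of $\mathcal{D}_{T,m}$. Conditioned on $\omega=\omega_0$, the algorithm becomes a deterministic map: by induction on $t$, the prediction $p^t$ is a fixed function of $(x^1,y^1,\dots,x^{t-1},y^{t-1},x^t)$ and $\omega_0$, since the earlier predictions $p^1,\dots,p^{t-1}$ are themselves already determined. Denote the resulting deterministic algorithm $A_{\omega_0}$.

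Next, since $\mathrm{MCerr}_T$ is a measurable function of the realized triples $(x^t,p^t,y^t)_{t=1}^T$, and $\omega$ is independent of $\mathcal{D}_{T,m}$, Fubini's theorem gives
\[
  \mathbb{E}_{\mathcal{D}_{T,m},A}[\mathrm{MCerr}_T]
  \;=\; \mathbb{E}_{\omega}\Big[\,\mathbb{E}_{\mathcal{D}_{T,m}}\big[\mathrm{MCerr}_T(A_{\omega})\big]\,\Big].
\]
The right-hand side is an average over $\omega$ of the quantities $\mathbb{E}_{\mathcal{D}_{T,m}}[\mathrm{MCerr}_T(A_{\omega})]$, so there must exist a seed value $\omega^\star$ with $\mathbb{E}_{\mathcal{D}_{T,m}}[\mathrm{MCerr}_T(A_{\omega^\star})]\le \mathbb{E}_{\mathcal{D}_{T,m},A}[\mathrm{MCerr}_T]$. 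Taking $A':=A_{\omega^\star}$, which is deterministic, yields the claim.

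The argument is entirely routine; the only point that genuinely matters is the use of obliviousness. If the adversary were adaptive, fixing the algorithm's seed could change the conditional law of the contexts and labels, and the displayed identity would fail. Here $\mathcal{D}_{T,m}$ is specified independently of the interaction, so conditioning on $\omega$ leaves the environment's distribution unchanged, and no measurability subtleties arise beyond the standard fact that all spaces involved are standard Borel.
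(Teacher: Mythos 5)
Your proof is correct and matches the paper's argument essentially verbatim: both package all internal randomness into an independent seed, use obliviousness of $\mathcal{D}_{T,m}$ to apply Fubini, and then fix a seed by averaging. No discrepancies.
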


\begin{proof}
We model all of the algorithm's internal randomization (including any sampling
of $p^t$ from an internal distribution $P^t$) by an independent random seed $R$.
For each outcome of $R$, the resulting algorithm $A_R$ produces predictions
$p^t$ as deterministic functions of the history
$(x^1,y^1,\dots,x^{t-1},y^{t-1},x^t)$. The seed $R$ is independent of
$(x^t,y^t)_{t=1}^T$ (as the lower bound instance is fixed and non-adaptive), so we can view $(x^t,y^t,R)$ as the environment's sample
space. Then
\[
  \mathbb{E}_{\mathcal{D}_{T,m},A}[\mathrm{MCerr}_T]
  = \mathbb{E}_R\big[\mathbb{E}_{\mathcal{D}_{T,m}}[\mathrm{MCerr}_T(A_R)]\big]
\]
is an average of the quantities $\mathbb{E}_{\mathcal{D}_{T,m}}[\mathrm{MCerr}_T(A_R)]$ over all outcomes $R$.
Therefore there exists at least one outcome $R'$ such that
\[
  \mathbb{E}_{\mathcal{D}_{T,m}}[\mathrm{MCerr}_T(A_{R'})]
  \;\le\; \mathbb{E}_{\mathcal{D}_{T,m},A}[\mathrm{MCerr}_T].
\]
Taking $A' := A_{R'}$ yields the claim.
\end{proof}

Thus it suffices to fix an arbitrary deterministic algorithm and analyze its error.

\medskip

Fix a deterministic prediction algorithm and $\mathcal{D}_{T,m}$.
For each $t$, $p^t$ is a deterministic function of $(x^1,y^1,\dots,x^{t-1},y^{t-1},x^t)$.
Let $B_T$ be the (random) number of big-deviation rounds:
\[
  B_T := |\{t : |p^t - x^t| \ge \eta\}| = |B|.
\]

First we show that the multicalibration error of any algorithm grows linearly with the number of big deviations, witnessed by either $g_1$ or $g_2$.

\begin{lemma}\label{lem:big-lies}
For any deterministic prediction algorithm and any $\eta>0$,
under $\mathcal{D}_{T,m}$ we have
\[
  \mathbb{E}[\mathrm{MCerr}_T]
  \;\ge\; \frac{\eta}{2}\,\mathbb{E}[B_T].
\]
\end{lemma}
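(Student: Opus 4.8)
The plan is to lower bound $\mathrm{MCerr}_T$ using only the two ``dishonesty detectors'' $g_1$ and $g_2$, each of which is guaranteed to accumulate \emph{one-signed} bias precisely on the big-deviation rounds where it is active. Since $\mathrm{MCerr}_T \ge \max\{\mathrm{Err}_T(g_1),\mathrm{Err}_T(g_2)\} \ge \tfrac12\bigl(\mathrm{Err}_T(g_1)+\mathrm{Err}_T(g_2)\bigr)$, it suffices to prove $\mathbb{E}[\mathrm{Err}_T(g_1)]+\mathbb{E}[\mathrm{Err}_T(g_2)]\ge \eta\,\mathbb{E}[B_T]$.

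First I would collapse the sum over prediction values. Writing $r^t:=p^t-x^t$ and using the triangle inequality together with the fact that regrouping the double sum $\sum_{v\in V_T}\sum_{t:p^t=v}$ is just $\sum_{t=1}^T$,
\[
  \mathrm{Err}_T(g_1)=\sum_{v\in V_T}\bigl|B_T(v,g_1)\bigr|\;\ge\;\Bigl|\sum_{v\in V_T}B_T(v,g_1)\Bigr|\;=\;\Bigl|\sum_{t=1}^T \mathbf{1}[p^t\ge x^t+\eta]\,(p^t-y^t)\Bigr|.
\]
Taking expectations and applying Jensen's inequality pulls the expectation inside the absolute value. Now I use that the algorithm is deterministic, so $p^t$ — and hence the indicator $\mathbf{1}[p^t\ge x^t+\eta]$ — is $\mathcal{F}_{t-1}$-measurable, while $\mathbb{E}[y^t\mid\mathcal{F}_{t-1}]=x^t$. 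Conditioning round by round therefore replaces $p^t-y^t$ by $p^t-x^t=r^t$; and on any overshoot round ($p^t\ge x^t+\eta$) we have $r^t\ge\eta\ge 0$, so every summand is nonnegative and in fact at least $\eta\,\mathbf{1}[p^t\ge x^t+\eta]$. This yields $\mathbb{E}[\mathrm{Err}_T(g_1)]\ge\eta\,\mathbb{E}\bigl[|\{t:p^t\ge x^t+\eta\}|\bigr]$. The symmetric computation for $g_2$ gives $\mathbb{E}\bigl[\sum_{v}B_T(v,g_2)\bigr]=\mathbb{E}\bigl[\sum_t\mathbf{1}[p^t\le x^t-\eta]r^t\bigr]\le -\eta\,\mathbb{E}\bigl[|\{t:p^t\le x^t-\eta\}|\bigr]$, and after taking the absolute value we get $\mathbb{E}[\mathrm{Err}_T(g_2)]\ge\eta\,\mathbb{E}\bigl[|\{t:p^t\le x^t-\eta\}|\bigr]$.

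Finally, the overshoot rounds $\{t:p^t\ge x^t+\eta\}$ and undershoot rounds $\{t:p^t\le x^t-\eta\}$ are disjoint and their union is exactly $B$, so adding the two bounds gives $\mathbb{E}[\mathrm{Err}_T(g_1)]+\mathbb{E}[\mathrm{Err}_T(g_2)]\ge\eta\,\mathbb{E}[B_T]$, and the max-versus-average step then gives $\mathbb{E}[\mathrm{MCerr}_T]\ge\tfrac{\eta}{2}\mathbb{E}[B_T]$. There is no substantive obstacle here; the only points requiring care are (i) that the set $V_T$ of used prediction values is random, which is harmless because $B_T(v,g_1)=0$ off $V_T$ and the collapsed quantity is literally a sum over $t$, and (ii) the measurability bookkeeping that allows swapping $y^t$ for its conditional mean $x^t$ — this is exactly where determinism of the predictor (and obliviousness of the adversary, which makes $(Z_t,\mathcal{F}_t)$ a martingale difference sequence) is used.
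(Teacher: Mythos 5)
Your proof is correct and follows essentially the same route as the paper's: both exploit the one-signedness of the per-round bias on overshoot/undershoot rounds, use the conditional mean $\mathbb{E}[y^t\mid\mathcal{F}_{t-1}]=x^t$ to replace $p^t-y^t$ with $r^t$, and finish with the max-versus-average step $\mathrm{MCerr}_T\ge\tfrac12(\mathrm{Err}_T(g_1)+\mathrm{Err}_T(g_2))$. The only (immaterial) difference is that you collapse the sum over $v$ via the triangle inequality \emph{before} applying Jensen once at the aggregate level, whereas the paper applies Jensen to each $B_T(v,g_1)$ separately and then sums over $v$; since all the per-$v$ terms share the same sign, the two orderings yield identical bounds.
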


\begin{proof}
Write $r^t := p^t - x^t$ to denote the round $t$ deviation from ``honest'' prediction. Define the indicators
\[
  I_t^+ := \mathbf{1}[p^t \ge x^t + \eta],\qquad
  I_t^- := \mathbf{1}[p^t \le x^t - \eta].
\]
to indicate whether round $t$ represented a big deviation in either the positive or negative direction. 
Then $|p^t-x^t|\ge\eta$ implies $I_t^+ + I_t^- = 1$.
Hence
\[
  B_T = \sum_{t=1}^T (I_t^+ + I_t^-).
\]

Consider group $g_1$. For a fixed $v\in[0,1]$,
\[
  B_T(v,g_1)
  = \sum_{t=1}^T \mathbf{1}[p^t=v]\;g_1(x^t,p^t)\,(p^t-y^t)
  = \sum_{t=1}^T \mathbf{1}[p^t=v]\;I_t^+\,(p^t-y^t).
\]

For each $t$, let $\mathcal{F}_t := \sigma(x^1,y^1,\dots,x^{t-1},y^{t-1},x^t,p^t)$ be the history
available just before $y^t$ is revealed.
Then $\mathbf{1}[p^t=v]$ and $I_t^+$ are $\mathcal{F}_t$-measurable, and under $\mathcal{D}_{T,m}$
the label $y^t$ satisfies
\[
  \mathbb{E}[y^t \mid \mathcal{F}_t] = \mathbb{E}[y^t \mid x^t] = x^t,
\]
because $y^t$ is drawn independently of the past given $x^t$.
Thus
\[
  \mathbb{E}[p^t-y^t \mid \mathcal{F}_t] = p^t - x^t = r^t.
\]
Using the tower property and linearity of expectation,
\begin{align*}
  \mathbb{E}[B_T(v,g_1)]
  &= \mathbb{E}\Big[\sum_{t=1}^T \mathbf{1}[p^t=v]\,I_t^+\,(p^t-y^t)\Big] \\
  &= \sum_{t=1}^T \mathbb{E}\big[\mathbf{1}[p^t=v]\,I_t^+\,\mathbb{E}[p^t-y^t \mid \mathcal{F}_t]\big] \\
  &= \sum_{t=1}^T \mathbb{E}\big[\mathbf{1}[p^t=v]\,I_t^+\,r^t\big].
\end{align*}
Whenever $I_t^+=1$, we have $r^t=p^t-x^t\ge\eta$, so
\[
  \mathbb{E}[B_T(v,g_1)]
  \;\ge\; \eta\,\mathbb{E}\Big[\sum_{t=1}^T \mathbf{1}[p^t=v]\,I_t^+\Big].
\]
In particular, $\mathbb{E}[B_T(v,g_1)]\ge 0$, so Jensen's inequality gives
\[
  \mathbb{E}\bigl[|B_T(v,g_1)|\bigr]
  \;\ge\; \big|\mathbb{E}[B_T(v,g_1)]\big|
  = \mathbb{E}[B_T(v,g_1)].
\]
Summing over $v$ and using $\sum_{v\in V_T} \mathbf{1}[p^t=v]=1$,
\[
  \sum_{v\in V_T} B_T(v,g_1)
  = \sum_{t=1}^T I_t^+\,(p^t-y^t),
\]
so Jensen's inequality gives
\begin{align*}
  \mathbb{E}\Big[\sum_{v\in V_T} |B_T(v,g_1)|\Big]
  &\ge \mathbb{E}\Big[\Big|\sum_{v\in V_T} B_T(v,g_1)\Big|\Big] \\
  &\ge \Big|\mathbb{E}\Big[\sum_{v\in V_T} B_T(v,g_1)\Big]\Big| \\
  &= \Big|\sum_{t=1}^T \mathbb{E}\big[I_t^+\,(p^t-y^t)\big]\Big| \\
  &= \sum_{t=1}^T \mathbb{E}[I_t^+\,r^t] \\
  &\ge \eta\,\mathbb{E}\Big[\sum_{t=1}^T I_t^+\Big].
\end{align*}

Equivalently, for $g_2$, we have that 

\begin{align*}
  \mathbb{E}[B_T(v,g_2)]
  &= \sum_{t=1}^T \mathbb{E}\big[\mathbf{1}[p^t=v]\,I_t^- r^t\big].
\end{align*}
Whenever $I_t^-=1$, we have $r^t\le -\eta$, so
\[
  \mathbb{E}[B_T(v,g_2)]
  \;\le\; -\eta\,\mathbb{E}\Big[\sum_{t=1}^T \mathbf{1}[p^t=v]\,I_t^-\Big].
\]
In particular, $\mathbb{E}[B_T(v,g_2)]\le 0$, so
\[
  \mathbb{E}\bigl[|B_T(v,g_2)|\bigr]
  \;\ge\; \big|\mathbb{E}[B_T(v,g_2)]\big|
  = -\mathbb{E}[B_T(v,g_2)].
\]
Summing over $v$ and using $\sum_{v\in V_T} \mathbf{1}[p^t=v]=1$,
\[
  \sum_{v\in V_T} B_T(v,g_2)
  = \sum_{t=1}^T I_t^-\,(p^t-y^t),
\]
so again by Jensen's inequality,
\begin{align*}
  \mathbb{E}\Big[\sum_{v\in V_T} |B_T(v,g_2)|\Big]
  &\ge \mathbb{E}\Big[\Big|\sum_{v\in V_T} B_T(v,g_2)\Big|\Big] \\
  &\ge \Big|\mathbb{E}\Big[\sum_{v\in V_T} B_T(v,g_2)\Big]\Big| \\
  &= \Big|\sum_{t=1}^T \mathbb{E}\big[I_t^-\,(p^t-y^t)\big]\Big| \\
  &= -\sum_{t=1}^T \mathbb{E}[I_t^-\,r^t] \\
  &\ge \eta\,\mathbb{E}\Big[\sum_{t=1}^T I_t^-\Big].
\end{align*}

Combine the two:
\[
  \mathbb{E}\Big[\sum_{v\in V_T} |B_T(v,g_1)| + \sum_{v\in V_T} |B_T(v,g_2)|\Big]
  \;\ge\;\eta\,\mathbb{E}\Big[\sum_{t=1}^T (I_t^+ + I_t^-)\Big]
  = \eta\,\mathbb{E}[B_T].
\]

Since
\[
  \mathrm{MCerr}_T \;\ge\; \max\Big\{ \sum_{v} |B_T(v,g_1)|,\ \sum_{v} |B_T(v,g_2)|\Big\}
  \;\ge\; \frac12\Big(\sum_v |B_T(v,g_1)| + \sum_v |B_T(v,g_2)|\Big),
\]
we obtain
\[
  \mathbb{E}[\mathrm{MCerr}_T]
  \;\ge\; \frac12\,
  \mathbb{E}\Big[\sum_{v} |B_T(v,g_1)| + \sum_{v} |B_T(v,g_2)|\Big]
  \;\ge\; \frac{\eta}{2}\,\mathbb{E}[B_T],
\]
as claimed.
\end{proof}

\subsubsection{Many \texorpdfstring{$\eta$}{eta}-Honest Rounds are Punished by \texorpdfstring{$g_3$}{g3}}\label{sec:g4}

We now show that if the algorithm does not make many big deviations (i.e., if
the number $B_T$ of big-deviation rounds is not too large), then it must
accumulate large error on group $g_3$, which records the calibration error in the $\eta$-honest rounds.

Again fix a deterministic prediction algorithm and $\mathcal{D}_{T,m}$.
Recall that $H = \{t : |p^t-x^t|<\eta\}$ and $B = \{t : |p^t-x^t|\ge\eta\}$, with
$H_T = |H|$ and $B_T = |B| = T-H_T$.

For each context $x\in X_0$, we denote the rounds in which the context was $x$ and the prediction was $\eta$-honest (respectively a big deviation) as:
\[
  H_x := \{t\in H : x^t = x\}, \qquad n_x := |H_x|,
\]
and
\[
  B_x := \{t : x^t=x,\ |p^t-x^t|\ge \eta\}, \qquad b_x := |B_x|.
\]
Thus $T_x := n_x + b_x$ is the (deterministic) number of times the context $x$
appears in the sequence $(x^t)_{t=1}^T$. Per the definition of
$\mathcal{D}_{T,m}$, each $T_x$ is either $\lfloor T/m_0\rfloor$ or
$\lceil T/m_0\rceil$, so there exist constants $c_{\mathrm{occ}},C_{\mathrm{occ}}>0$
such that
\begin{equation}\label{eq:Tx-bounds}
  c_{\mathrm{occ}}\,\frac{T}{m_0}
  \;\le\; T_x \;\le\; C_{\mathrm{occ}}\,\frac{T}{m_0}
  \qquad\text{for all }x\in X_0.
\end{equation}
We also have
\[
  B_T = \sum_{x\in X_0} b_x,
  \qquad
  H_T = \sum_{x\in X_0} n_x,
  \qquad
  \sum_{x\in X_0} T_x = T.
\]

On $\eta$-honest rounds for context $x$, we define the noise and drift
contributions
\[
  N_x := \sum_{t\in H_x} Z_t = \sum_{t\in H_x} (x^t-y^t),
  \qquad
  R_x := \sum_{t\in H_x} (p^t-x^t).
\]
Note that $|p^t-x^t|<\eta$ on $H_x$, so $|R_x|\le \eta n_x$.

For group $g_3$, the contribution of context $x$ to the bias is
\[
  S_x := \sum_{t\in H_x} (p^t-y^t) = N_x + R_x.
\]

We next show that the calibration error over group $g_3$ is always at least the summed magnitude of the ``honest'' noise terms (summed over contexts $x$) minus the summed magnitude of the ``honest'' drift terms. 

\begin{lemma}\label{lem:g4-context-decomp}
For any realization, if $\eta \leq 1/(2m)$:
\[
  \sum_{v\in V_T} |B_T(v,g_3)|
  \;\ge\; \sum_{x\in X_0} |N_x| \;-\; \sum_{x\in X_0} |R_x|.
\]
\end{lemma}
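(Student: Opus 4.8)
The plan is to exploit that the threshold condition $\eta\le 1/(2m)$ makes the $\eta$-neighborhoods of distinct grid points pairwise disjoint, so that on $\eta$-honest rounds the prediction value determines the context; after that the statement follows from two applications of the triangle inequality. First I would unpack the definition of $g_3$: for any $v\in V_T$,
\[
  B_T(v,g_3)=\sum_{t:\,p^t=v}\mathbf{1}[|p^t-x^t|<\eta]\,(p^t-y^t),
\]
so the indicator annihilates every round $t$ with $p^t=v$ for which $|v-x^t|\ge\eta$; only $\eta$-honest rounds survive. Since the grid points of $X_0$ lie on the lattice $\{j/m\}$ and hence any two distinct ones are at distance $\ge 1/m\ge 2\eta$, the open intervals $(x-\eta,x+\eta)$, $x\in X_0$, are pairwise disjoint. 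Consequently, if $t\in H$ and $p^t=v$, the context $x^t$ is the unique grid point within distance $\eta$ of $v$; write $x(v)$ for this context when it exists. It follows that $B_T(v,g_3)=\sum_{t\in H_{x(v)}:\,p^t=v}(p^t-y^t)$, and $B_T(v,g_3)=0$ whenever $v$ is not $\eta$-honest for any context.

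Next I would regroup $\sum_{v\in V_T}|B_T(v,g_3)|$ by context. Because, within $H$, the partition by prediction value refines the partition by context (each $\eta$-honest round's value pins down its context), we have $H_x=\bigsqcup_{v:\,x(v)=x}\{t\in H:p^t=v\}$ for every $x\in X_0$, and therefore
\[
  \sum_{v\in V_T}|B_T(v,g_3)|
  \;=\;\sum_{x\in X_0}\ \sum_{v:\,x(v)=x}\Bigl|\sum_{t\in H_x:\,p^t=v}(p^t-y^t)\Bigr|.
\]
Applying the triangle inequality inside each context block collapses the inner sum, giving $\sum_{v:\,x(v)=x}\bigl|\cdots\bigr|\ge\bigl|\sum_{t\in H_x}(p^t-y^t)\bigr|=|S_x|$, since $S_x=\sum_{t\in H_x}(p^t-y^t)$. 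Finally, using $S_x=N_x+R_x$ and the reverse triangle inequality $|S_x|\ge|N_x|-|R_x|$, then summing over $x\in X_0$, yields $\sum_{v\in V_T}|B_T(v,g_3)|\ge\sum_x|N_x|-\sum_x|R_x|$, as claimed.

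The only step requiring real care — and the main (modest) obstacle — is the disjointness claim: one must confirm that $\eta\le 1/(2m)$ genuinely forces $x(v)$ to be well-defined, i.e.\ that no prediction value lies within $\eta$ of two distinct grid points. This is immediate from $|x-x'|\ge 1/m\ge 2\eta$ for distinct $x,x'\in X_0$, which handles even the boundary case $\eta=1/(2m)$ (as the neighborhoods are open). This is precisely where the ``localize the $g_3$ error by context'' design choice from the proof overview is cashed in; everything else is bookkeeping with the triangle inequality.
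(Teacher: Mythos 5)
Your proof is correct and follows essentially the same route as the paper's: both exploit that $\eta\le 1/(2m)$ makes the $\eta$-neighborhoods of grid points pairwise disjoint (so each $\eta$-honest prediction value determines its context), use this to regroup $\sum_v |B_T(v,g_3)|$ by context, and then apply the triangle inequality followed by the reverse triangle inequality on $S_x = N_x + R_x$. The only cosmetic difference is bookkeeping — you introduce the map $x(v)$ where the paper introduces the per-(context, value) sums $S_{x,v}$ and observes at most one is nonzero for each $v$ — but the underlying argument is identical.
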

The condition $\eta \le 1/(2m)$ ensures that the $\eta$-neighborhoods around different grid points $x \in X_0$ are disjoint, so each prediction value $v$ can be $\eta$-close to at most one context. This allows us to decompose the $g_3$ error cleanly by context.

\begin{proof}
For each context $x\in X_0$ and prediction value $v$, let
\[
  H_{x,v} := \{t\in H_x : p^t = v\},
  \qquad
  S_{x,v} := \sum_{t\in H_{x,v}} (p^t-y^t).
\]
Then $H_x = \bigsqcup_v H_{x,v}$ and
\[
  S_x = \sum_{t\in H_x} (p^t-y^t) = \sum_v S_{x,v} = N_x+R_x.
\]

\noindent
By the definition of $g_3$,
\[
  B_T(v,g_3) = \sum_{t=1}^T \mathbf{1}[p^t=v]\,g_3(x^t,p^t)\,(p^t-y^t)
  = \sum_{x\in X_0} S_{x,v}.
\]
If $\eta\le 1/(2m)$ then for each prediction value $v$
there is at most one context $x\in X_0$ such that $|v-x|<\eta$. Hence, for each $v$,
at most one of the sets $H_{x,v}$ is nonempty, and therefore at most one $S_{x,v}$
is nonzero. Thus $B_T(v,g_3)=S_{x,v}$ for that $x$, and we obtain
\[
  \sum_{v\in V_T} |B_T(v,g_3)|
  = \sum_{v\in V_T} \sum_{x\in X_0} |S_{x,v}|
  = \sum_{x\in X_0} \sum_{v\in V_T} |S_{x,v}|.
\]
For each fixed $x$, by the triangle inequality,
\[
  \sum_{v\in V_T} |S_{x,v}|
  \;\ge\; \Big|\sum_{v\in V_T} S_{x,v}\Big|
  = |S_x|
  = |N_x+R_x|
  \;\ge\; |N_x| - |R_x|.
\]
Summing over $x\in X_0$ yields the claimed bound.
\end{proof}

Since $|R_x|\le \eta n_x$ and $\sum_x n_x = H\le T$, we have
\begin{equation}
  \sum_{x\in X_0} |R_x|
  \;\le\; \eta \sum_{x\in X_0} n_x
  = \eta H
  \;\le\; \eta T.
\end{equation}
Taking expectations in Lemma~\ref{lem:g4-context-decomp} and using
this inequality gives
\begin{equation}\label{eq:g4-main-reduction}
  \mathbb{E}\Big[\sum_{v\in V_T} |B_T(v,g_3)|\Big]
  \;\ge\; \mathbb{E}\Big[\sum_{x\in X_0} |N_x|\Big] - \eta T.
\end{equation}
As such, it remains to obtain a lower bound on $\mathbb{E}\sum_x |N_x|$.

\subsection{Context-wise Tradeoff between Big Deviations and Noise}

We now establish the key tradeoff: for each context $x$, either the algorithm makes many big deviations (contributing to $g_1/g_2$ error) or it makes mostly $\eta$-honest predictions (contributing noise to $g_3$ error). By aggregating over contexts, we will show that at least one of these error sources must be large.

For each $x\in X_0$, the sequence of labels $\{y^t : x^t=x\}$ consists of
$T_x$ independent draws from $\mathrm{Bernoulli}(x)$, and therefore
$Z_t = x^t-y^t$ takes values in $[-3/4,-1/4]\cup[1/4,3/4]$ with
\[
  \mathbb{E}[Z_t\mid x^t=x] = 0,
  \qquad
  \mathbb{E}[Z_t^2\mid x^t=x] = x(1-x) \in \big[\tfrac{3}{16},\tfrac{1}{4}\big].
\]
Fix a context $x\in X_0$ and focus only on the subsequence of rounds with $x^t=x$.
We will reindex these rounds in their own ``local time'' and define filtrations that,
at each such step, contain exactly the information revealed after the prediction on that
round but before its label is drawn.

For each $t$, let
\[
  \mathcal{H}_t := \sigma\big(x^1,\dots,x^t, y^1,\dots,y^{t-1}, p^1,\dots,p^t\big)
\]
be the $\sigma$-field generated by the history just after the prediction $p^t$
is chosen and just before $y^t$ is revealed.
Under $\mathcal{D}_{T,m}$ the label $y^t$ is independent of $\mathcal{H}_t$
with mean $x^t$, so for $Z_t=x^t-y^t$ we have
\begin{equation}\label{eq:Z-Ht}
  \mathbb{E}[Z_t\mid \mathcal{H}_t] = 0,
  \qquad
  \mathbb{E}[Z_t^2\mid \mathcal{H}_t] = x^t(1-x^t) \in \big[\tfrac{3}{16},\tfrac{1}{4}\big].
\end{equation}

For each $x\in X_0$, let $t_1<\dots<t_{T_x}$ be the times with $x^{t_i}=x$.
We define a local filtration
\[
  \mathcal{F}^{(x)}_0 := \mathcal{H}_{t_1},
  \qquad
  \mathcal{F}^{(x)}_i := \mathcal{H}_{t_{i+1}} \text{ for } i=1,\dots,T_x-1,
\]
and set $\mathcal{F}^{(x)}_{T_x} := \sigma(\mathcal{H}_{t_{T_x}}, y^{t_{T_x}})$.
Clearly $(\mathcal{F}^{(x)}_i)_{i=0}^{T_x}$ is a filtration and, for
each $i=1,\dots,T_x$, the random variable
\[
  Z^{(x)}_i := x-y^{t_i}
\]
is $\mathcal{F}^{(x)}_i$-measurable. Moreover, by~\eqref{eq:Z-Ht} and the fact
that $\mathcal{F}^{(x)}_{i-1}$ contains $\mathcal{H}_{t_i}$, we have
\[
  \mathbb{E}[Z^{(x)}_i\mid \mathcal{F}^{(x)}_{i-1}] = 0,
  \qquad
  \mathbb{E}\bigl[(Z^{(x)}_i)^2\mid \mathcal{F}^{(x)}_{i-1}\bigr]
  \in \big[\tfrac{3}{16},\tfrac{1}{4}\big],
  \qquad
  |Z^{(x)}_i|\le 1.
\]

Finally, define
\[
  I^{(x)}_i := \mathbf{1}[t_i\in H_x], \qquad i=1,\dots,T_x.
\]
Membership $t_i\in H_x$ depends only on the context $x^{t_i}$ and the
prediction $p^{t_i}$, both of which are $\mathcal{H}_{t_i}$-measurable; hence
$I^{(x)}_i$ is $\mathcal{F}^{(x)}_{i-1}$-measurable.
Thus the sequence $(Z^{(x)}_i,\mathcal{F}^{(x)}_i)$ together with the
predictable indicators $I^{(x)}_i$ satisfies the assumptions
\eqref{eq:Z-assumptions-dense} of Proposition~\ref{prop:binomial-deviation}
with $\sigma^2=3/16$ and $L=T_x$.
Moreover,
\[
  N_x = \sum_{t\in H_x} Z_t
  = \sum_{i=1}^{T_x} I^{(x)}_i Z^{(x)}_i,
  \qquad
  n_x = \sum_{i=1}^{T_x} I^{(x)}_i.
\]

We define
\[
  \overline{B}_x := \mathbb{E}[b_x].
\]
We know that $\sum_x b_x = B_T$ pathwise, hence
$\sum_x \overline{B}_x = \mathbb{E}[B_T]$.

Intuitively, if the total expected number of big-deviation rounds $\mathbb{E}[B_T]$ is small, then big deviations cannot be spread across too many contexts---they must concentrate on a small subset. The remaining ``dense'' contexts have mostly $\eta$-honest predictions, which means the noise terms $N_x$ on these contexts are sums over a dense (constant-fraction) subset of the $T_x$ rounds. This is precisely the setting where Proposition~\ref{prop:binomial-deviation} applies, giving $\mathbb{E}|N_x| = \Omega(\sqrt{T_x})$ for each dense context.

\begin{lemma}\label{lem:dense-contexts} 
Suppose $\mathbb{E}[B_T] \le \frac{T}{4}$. Then there exists a subset
$D\subseteq X_0$ of contexts with
\begin{equation}\label{eq:D-Tx-mass}
  \sum_{x\in D} T_x \;\ge\; \frac{T}{2},
\end{equation}
such that for every $x\in D$ we have
\begin{equation}\label{eq:dense-Ex}
  \mathbb{E}[n_x] \;\ge\; \frac{T_x}{2}.
\end{equation}
\end{lemma}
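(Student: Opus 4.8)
The statement is a Markov-type averaging argument. The plan is to bound the expected total mass of ``big-deviation-heavy'' contexts and show it is a minority of $T$, so the complement carries at least $T/2$ of the occurrence mass; then apply Markov's inequality context-by-context after fixing the heavy set appropriately.

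\textbf{Step 1: Identify heavy contexts.} Call a context $x \in X_0$ \emph{heavy} if $\mathbb{E}[b_x] > T_x/2$, and \emph{dense} otherwise (i.e.\ $\mathbb{E}[b_x] \le T_x/2$). Since $T_x = n_x + b_x$ pathwise, a dense context automatically satisfies $\mathbb{E}[n_x] = T_x - \mathbb{E}[b_x] \ge T_x/2$, which is exactly \eqref{eq:dense-Ex}. So I set $D$ to be the set of dense contexts, and all that remains is to verify the mass bound \eqref{eq:D-Tx-mass}.

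\textbf{Step 2: Bound the mass of heavy contexts.} Let $D^c = X_0 \setminus D$ be the heavy contexts. For each $x \in D^c$ we have $T_x < 2\,\mathbb{E}[b_x]$, so
\[
  \sum_{x\in D^c} T_x \;<\; 2\sum_{x\in D^c} \mathbb{E}[b_x] \;\le\; 2\sum_{x\in X_0}\mathbb{E}[b_x] \;=\; 2\,\mathbb{E}[B_T] \;\le\; 2\cdot \frac{T}{4} \;=\; \frac{T}{2},
\]
using the hypothesis $\mathbb{E}[B_T]\le T/4$ and the pathwise identity $\sum_x b_x = B_T$. Since $\sum_{x\in X_0} T_x = T$, it follows that $\sum_{x\in D} T_x = T - \sum_{x\in D^c} T_x > T/2$, giving \eqref{eq:D-Tx-mass}.

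\textbf{Main obstacle.} There is essentially no obstacle here — this is a clean one-line Markov argument — so the only thing to be careful about is the bookkeeping: the set $D$ must be chosen deterministically (it is, since it is defined through the \emph{expectations} $\mathbb{E}[b_x]$, which are numbers, not random variables), so that later applications of Proposition~\ref{prop:binomial-deviation} are applied to a fixed deterministic family of contexts. I would also double-check the (harmless) edge case where $D$ could in principle be empty: the mass bound forces $\sum_{x\in D} T_x > T/2 > 0$, so $D$ is nonempty whenever $T \ge 1$, and $T_x = \Theta(T/m_0)$ by \eqref{eq:Tx-bounds} ensures each dense context is genuinely used. That completes the proof.
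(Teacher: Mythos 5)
Your proof is correct and follows essentially the same route as the paper's: both define the dense set by the threshold $\mathbb{E}[b_x]\le T_x/2$, derive \eqref{eq:dense-Ex} from $n_x = T_x - b_x$, and bound the heavy mass via $\sum_x \mathbb{E}[b_x] = \mathbb{E}[B_T]$. Your remark that $D$ is deterministic (since it is defined through expectations) is a useful point of care, but there is no substantive difference in the argument.
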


\begin{proof}
Define the set of ``sparse'' contexts
\[
  S := \Big\{x\in X_0 : \overline{B}_x > \frac{1}{2} T_x\Big\},
  \qquad
  D := X_0\setminus S.
\]
Then
\[
  \sum_{x\in S} \overline{B}_x > \frac{1}{2} \sum_{x\in S} T_x.
\]
On the other hand, $\sum_x \overline{B}_x = \mathbb{E}[B_T]\le \frac{T}{4}
= \frac{1}{4}\sum_x T_x$, so
\[
  \frac{1}{2} \sum_{x\in S} T_x
  < \sum_{x\in S} \overline{B}_x
  \le \sum_x \overline{B}_x
  \le \frac{1}{4}\sum_x T_x.
\]
Multiplying by $2$ yields
\[
  \sum_{x\in S} T_x < \frac{1}{2}\sum_x T_x = \frac{T}{2},
\]
and therefore \eqref{eq:D-Tx-mass} holds for $D=X_0\setminus S$.

For any $x\in D$ we have $\overline{B}_x\le \frac{T_x}{2}$, and since
$n_x = T_x-b_x$ pathwise,
\[
  \mathbb{E}[n_x] = T_x - \overline{B}_x \ge \frac{T_x}{2},
\]
establishing \eqref{eq:dense-Ex}.
\end{proof}

We can now apply Proposition~\ref{prop:binomial-deviation} to each dense context.

\begin{lemma}\label{lem:sum-Nx-lower} 
Assume $\mathbb{E}[B_T]\le \frac{T}{4}$. For some $c>0$ (independent of $T$ and $m$), the
following holds:
\[
  \mathbb{E}\Big[\sum_{x\in X_0} |N_x|\Big]
  \;\ge\; c \sqrt{m_0 T}.
\]
\end{lemma}

\begin{proof}
Let $D\subseteq X_0$ be the set of dense contexts from Lemma~\ref{lem:dense-contexts}.
For each $x\in D$, the process $\{Z^{(x)}_i\}$, the indicators $\{I^{(x)}_i\}$,
and the horizon $L=T_x$ satisfy the assumptions of
Proposition~\ref{prop:binomial-deviation} with $\sigma^2=3/16$ and
$\alpha = \frac{1}{2}$ (so that $\mathbb{E}[n_x]\ge \frac{T_x}{2}$).
Applying the proposition with $L=T_x$ and this value of $\alpha$ yields
\(
  \mathbb{E}|N_x| \ge c_{\sigma,\alpha}\sqrt{T_x},
\)
for a constant $c_{\sigma,\alpha}>0$ depending only on the variance lower
bound and $\alpha$. Hence there exists a constant $c'>0$  such that
\[
  \mathbb{E}|N_x|
  = \mathbb{E}\Big|\sum_{i=1}^{T_x} I^{(x)}_i Z^{(x)}_i\Big|
  \;\ge\; c' \sqrt{T_x}
  \qquad\text{for all }x\in D.
\]

We therefore have
\[
  \mathbb{E}\Big[\sum_{x\in X_0} |N_x|\Big]
  \;\ge\; \sum_{x\in D} \mathbb{E}|N_x|
  \;\ge\; c \sum_{x\in D} \sqrt{T_x}.
\]
Using \eqref{eq:Tx-bounds} and \eqref{eq:D-Tx-mass}, we obtain
\[
  \sum_{x\in D} \sqrt{T_x}
  \;\ge\; \sqrt{T_x^{\min}} \sum_{x\in D} 1
  \;\ge\; \sqrt{T_x^{\min}}\,
          \frac{\sum_{x\in D} T_x}{T_x^{\max}}
  \;\ge\; \sqrt{c_{\mathrm{occ}}\frac{T}{m_0}}\,
          \frac{T/2}{C_{\mathrm{occ}} T/m_0}
  = \frac{\sqrt{c_{\mathrm{occ}}}}{2C_{\mathrm{occ}}}\,\sqrt{m_0 T}.
\]
(Here $T_x^{\min}$ and $T_x^{\max}$ denote the minimum and maximum of the
$T_x$, controlled by \eqref{eq:Tx-bounds}.)
Thus
\[
  \mathbb{E}\Big[\sum_{x\in X_0} |N_x|\Big]
  \;\ge\; c'\,\frac{\sqrt{c_{\mathrm{occ}}}}{2C_{\mathrm{occ}}}\,\sqrt{m_0 T}
  =: c \sqrt{m_0 T},
\]
as claimed.
\end{proof}

\subsubsection{Lower Bound for the ``Honest'' Group}

We can now combine the previous bounds to obtain a lower bound on the $g_3$
contribution.

\begin{lemma}\label{lem:g4}
Assume $T\ge m_0$ and $\eta\le 1/(2m)$. Then there exists a constant $c>0$ (independent of $T$ and $m$) such that, for any deterministic prediction algorithm under $\mathcal{D}_{T,m}$,
either
\begin{equation}\label{eq:g4-case1}
  \mathbb{E}[B_T] \;\ge\; \frac{T}{4}
  \qquad\text{and hence}\qquad
  \mathbb{E}[\mathrm{MCerr}_T]
  \;\ge\; \frac{\eta}{8}\,T,
\end{equation}
or else $\mathbb{E}[B_T]< \frac{T}{4}$ and
\begin{equation}\label{eq:g4-case2}
  \mathbb{E}\Big[\sum_{v\in V_T} |B_T(v,g_3)|\Big]
  \;\ge\; c \sqrt{m_0 T} - \eta T.
\end{equation}
\end{lemma}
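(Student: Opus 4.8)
The plan is to derive Lemma~\ref{lem:g4} as a short dichotomy that assembles the three ingredients already in hand: the linear-in-$B_T$ lower bound of Lemma~\ref{lem:big-lies}; the context-wise decomposition \eqref{eq:g4-main-reduction} (obtained from Lemma~\ref{lem:g4-context-decomp} together with the pathwise bound $|R_x|\le\eta n_x$ and $\sum_x n_x \le T$, which is where the hypothesis $\eta\le 1/(2m)$ enters, via Lemma~\ref{lem:g4-context-decomp}); and the noise lower bound of Lemma~\ref{lem:sum-Nx-lower}. I split on the size of $\mathbb{E}[B_T]$.

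\textbf{Case 1: $\mathbb{E}[B_T]\ge T/4$.} Here I invoke Lemma~\ref{lem:big-lies} directly: $\mathbb{E}[\mathrm{MCerr}_T]\ge \tfrac{\eta}{2}\,\mathbb{E}[B_T]\ge \tfrac{\eta}{2}\cdot\tfrac{T}{4}=\tfrac{\eta T}{8}$, which is exactly the conclusion \eqref{eq:g4-case1}. \textbf{Case 2: $\mathbb{E}[B_T]<T/4$.} In particular $\mathbb{E}[B_T]\le T/4$, so the hypothesis of Lemma~\ref{lem:sum-Nx-lower} is met, giving $\mathbb{E}\big[\sum_{x\in X_0}|N_x|\big]\ge c\sqrt{m_0 T}$ with $c$ the absolute constant from that lemma. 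Since $\eta\le 1/(2m)$, inequality \eqref{eq:g4-main-reduction} holds, and substituting the noise bound yields $\mathbb{E}\big[\sum_{v\in V_T}|B_T(v,g_3)|\big]\ge c\sqrt{m_0 T}-\eta T$, which is \eqref{eq:g4-case2}. The standing assumption $T\ge m_0$ guarantees that every context in $X_0$ occurs at least once in the round-robin schedule, so the occupancy bounds \eqref{eq:Tx-bounds} used inside Lemma~\ref{lem:sum-Nx-lower} are non-vacuous; the constant $c$ appearing in the statement of Lemma~\ref{lem:g4} can simply be taken to be the one produced by Lemma~\ref{lem:sum-Nx-lower}.

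\textbf{Main obstacle.} At this level there is essentially nothing hard: all of the probabilistic content --- the filtered-martingale deviation estimate (Proposition~\ref{prop:binomial-deviation}) and the dense-context counting argument (Lemma~\ref{lem:dense-contexts}) feeding into Lemma~\ref{lem:sum-Nx-lower} --- has already been carried out. The only points that need attention are bookkeeping: checking that the strict inequality $\mathbb{E}[B_T]<T/4$ suffices to trigger Lemma~\ref{lem:sum-Nx-lower} (whose hypothesis is the non-strict $\le T/4$), and confirming that the two standing hypotheses $\eta\le 1/(2m)$ and $T\ge m_0$ are precisely what licenses \eqref{eq:g4-main-reduction} and the occupancy estimates, respectively. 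I do not expect this proof to require more than a few lines.
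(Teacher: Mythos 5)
Your proof is correct and follows exactly the paper's own argument: a dichotomy on $\mathbb{E}[B_T]$, with Case~1 handled by Lemma~\ref{lem:big-lies} and Case~2 by plugging Lemma~\ref{lem:sum-Nx-lower} into \eqref{eq:g4-main-reduction}. The bookkeeping observations you flag (strict versus non-strict threshold, and which hypothesis licenses which ingredient) are accurate but not load-bearing beyond what the paper already handles implicitly.
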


\begin{proof}
If $\mathbb{E}[B_T]\ge \frac{T}{4}$, then Lemma~\ref{lem:big-lies} yields
\[
  \mathbb{E}[\mathrm{MCerr}_T]
  \;\ge\; \frac{\eta}{2}\,\mathbb{E}[B_T]
  \;\ge\; \frac{\eta}{8}\,T,
\]
which is \eqref{eq:g4-case1}.

Otherwise, if $\mathbb{E}[B_T]< \frac{T}{4}$, then Lemma~\ref{lem:sum-Nx-lower} gives
\[
  \mathbb{E}\Big[\sum_{x\in X_0} |N_x|\Big]
  \;\ge\; c \sqrt{m_0 T},
\]
and plugging this into \eqref{eq:g4-main-reduction} yields
\[
  \mathbb{E}\Big[\sum_{v\in V_T} |B_T(v,g_3)|\Big]
  \;\ge\; c \sqrt{m_0 T} - \eta T.
\]
This gives us \eqref{eq:g4-case2}. 
\end{proof} 

\subsubsection{Putting it All Together}

We now combine all the pieces to prove Theorem~\ref{thm:main}. The argument proceeds by case analysis:
\begin{itemize}
    \item If the algorithm makes many big deviations ($\mathbb{E}[B_T] \ge T/4$), then Lemma~\ref{lem:big-lies} gives $\mathbb{E}[\mathrm{MCerr}_T] \ge \frac{\eta}{8} T = \Omega(T^{2/3})$.
    \item If the algorithm makes few big deviations ($\mathbb{E}[B_T] < T/4$), then most rounds are $\eta$-honest. Lemma~\ref{lem:g4} shows the noise contribution on group $g_3$ is $\Omega(\sqrt{mT}) = \Omega(T^{2/3})$.
\end{itemize}
The threshold $\eta = \Theta(\sqrt{m/T}) = \Theta(T^{-1/3})$ is chosen to balance these two cases.

  Fix $T\ge 1$ and let $m:=\lfloor T^{1/3}\rfloor$.
  For $T$ sufficiently large, we have $m\ge 8$ and $m\le T^{1/3}$. 

Let $c_0>0$ be the density constant from the hard distribution construction (so $m_0\ge c_0 m$), and let $c>0$ be the constant from
Lemma~\ref{lem:g4}.
Set
\[
  \eta := \delta \sqrt{\frac{m}{T}},
\]
where $\delta$ is a constant $>0$ such that: 
\begin{equation}
  \delta \;\le\; \frac{c\sqrt{c_0}}{4}
  \qquad\text{and}\qquad
  \delta \;\le\; \frac{1}{2}.
  \label{eq:delta-constraints}
\end{equation}
Using $m_0\ge c_0 m$, we have for all sufficiently large $T$,
\[
  \eta T = \delta\sqrt{mT}
  \;\le\; \delta\,\frac{1}{\sqrt{c_0}}\sqrt{m_0 T}
  \;\le\; \frac{c}{4}\sqrt{m_0 T}
  \;\le\; \frac{c}{2}\sqrt{m_0 T},
\]
so the drift term $\eta T$ is always at most $\frac{c}{2}\sqrt{m_0 T}$.
Moreover, since $m\le T^{1/3}$ we have $\eta = \delta\sqrt{m/T}\le \delta T^{-1/3}$, and the bound
$\delta\le 1/2$ in~\eqref{eq:delta-constraints} yields
\[
  \eta \;\le\; \frac{1}{2}T^{-1/3} \;\le\; \frac{1}{2m}
\]
for all sufficiently large $T$, so the condition $\eta\le 1/(2m)$ needed in Lemmas~\ref{lem:g4-context-decomp} and~\ref{lem:g4} also holds.

\medskip

Let $A$ be any prediction algorithm.
By Lemma~\ref{lem:deterministic}, we may assume without loss of generality that $A$ is deterministic. We distinguish two cases according to the size of $\mathbb{E}[B_T]$.

\paragraph{Case 1: $\mathbb{E}[B_T]\ge \frac{T}{4}$.}
In this case, Lemma~\ref{lem:big-lies} yields
\[
  \mathbb{E}[\mathrm{MCerr}_T]
  \;\ge\; \frac{\eta}{2}\,\mathbb{E}[B_T]
  \;\ge\; \frac{\eta}{8}\,T
  = \frac{\delta}{8}\sqrt{mT}.
\]
Since $m =  \Theta(T^{1/3})$, this gives
\[
  \mathbb{E}[\mathrm{MCerr}_T] =  \Omega(T^{2/3}).
\]

\paragraph{Case 2: $\mathbb{E}[B_T]< \frac{T}{4}$.}
In this case, Lemma~\ref{lem:g4} (specifically \eqref{eq:g4-case2}) gives
\[
  \mathbb{E}\Big[\sum_{v\in V_T} |B_T(v,g_3)|\Big]
  \;\ge\; c \sqrt{m_0 T} - \eta T
  \;\ge\; \frac{c}{2}\sqrt{m_0 T},
\]
where the last inequality uses $\eta T \le (c/2)\sqrt{m_0 T}$ from our choice of $\delta$ in
\eqref{eq:delta-constraints}. Thus the drift term subtracts at most half of the noise lower bound, and the
right-hand side remains bounded below by a constant multiple of $\sqrt{m_0 T}$.
Since $m_0\ge c_0 m$ for a constant $c_0>0$, we have
\[
  \sqrt{m_0 T} \;\ge\; \sqrt{c_0}\,\sqrt{mT} = \Theta(\sqrt{mT}),
\]
and therefore
\[
  \mathbb{E}\Big[\sum_{v\in V_T} |B_T(v,g_3)|\Big]
  \;\ge\; \frac{c\sqrt{c_0}}{2}\sqrt{mT}
  = \Omega(\sqrt{mT}) = \Omega(T^{2/3}).
\]
Since $\mathrm{MCerr}_T$ is at least the $g_3$ contribution, this again implies
\[
  \mathbb{E}[\mathrm{MCerr}_T] = \Omega(T^{2/3}).
\]

Combining the two cases, we have established
\[
  \mathbb{E}[\mathrm{MCerr}_T] \;\ge\; c'\,T^{2/3}
\]
for some constant $c'>0$ and all sufficiently large $T$ (depending only on the fixed constants
$c_0,c,\delta$). 

Thus, even for a family of only three simple groups, no online prediction algorithm can guarantee
expected multicalibration error $o(T^{2/3})$ against adversarially chosen contexts and outcomes,
matching (up to logarithmic factors) known online multicalibration upper bounds~\citep{noarov2023high}, and separating the statistical complexity of multicalibration from marginal calibration for which $O(T^{2/3-\varepsilon})$ upper bounds are known \citep{dagan2025breaking}.

\section{Optimal Lower Bound for Prediction-Independent Groups}
\label{sec:walsh-localrate-nonneg}

This section gives a lower bound for online multicalibration using only
\emph{prediction-independent} group functions $g:\mathcal{C}\to[0,1]$, where $\mathcal{C}$ is the context space.

Specifically, this section is devoted to proving the following theorem: 
\begin{theorem}[Prediction-independent lower bound]
\label{thm:walsh-localrate-nonneg}
There exist universal constants $c,C>0$ and $T_0\in\mathbb{N}$ such that for all $T\ge T_0$,
under the distribution and prediction-independent group family $G$ defined in this section,
every (possibly randomized) online forecaster satisfies
\[
\mathbb{E}\big[\mathrm{MCerr}_{T}(G)\big]
\;\ge\;
c\cdot \frac{T^{2/3}}{\log^C(T+1)}.
\]
\end{theorem}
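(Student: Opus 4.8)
The plan is to argue by contradiction: suppose $\mathbb{E}\big[\mathrm{MCerr}_{T'}(G)\big] \le c\,T^{2/3}/\log^{C}(T+1)$ for a small constant $c$ and a large constant $C$ (to be fixed at the end), and derive a contradiction. The skeleton has three links. (i) Small multicalibration error forces the forecaster to be close to \emph{honest} ($p^t\approx x^t$) in the aggregate $\ell_1$ sense $A:=\sum_{t}|p^t-x^t|$. (ii) $\ell_1$-honesty forces the forecaster to \emph{spread} its predictions, so that $N:=\sum_v\sqrt{n_v}=\Omega(\sqrt{mT})=\Omega(T^{2/3})$. (iii) Once the predictions are spread out, the Rademacher noise $Z_t=x^t-y^t=-\xi^t/4$ accumulates more total signed mass across prediction buckets than the (necessarily compressible) bias $\delta_t:=p^t-x^t$ can cancel, so some functional built from $G$ carries error $\widetilde\Omega(T^{2/3})$. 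The two orthonormal systems play complementary roles: the global Walsh functionals $w_\ell=g^{\Walsh,+}_\ell-g^{\Walsh,-}_\ell$ on the mean grid are the ``detectors'' for step (i), and the blockwise Hadamard functionals $h_{a,j}=g^{+}_{a,j}-g^{-}_{a,j}$ on the disjoint time blocks $J_a$ are the workhorse for step (iii). Since each of $w_\ell$ and $h_{a,j}$ is a difference of two members of $G$, we always have $\mathrm{Err}_{T'}(w_\ell)\le 2\,\mathrm{MCerr}_{T'}(G)$ and $\mathrm{Err}_{T'}(h_{a,j})\le 2\,\mathrm{MCerr}_{T'}(G)$.

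For step (i) I would use that, on the $m$-point mean grid, $x\mapsto\mathrm{sign}(v-x)$ is a threshold function whose Walsh expansion $\sum_\ell \widehat f^{\,v}_\ell\,w_\ell$ has spectral $\ell_1$ norm $\sum_\ell \sup_v|\widehat f^{\,v}_\ell|=O(\log m)$ (the constant level is absorbed by $g_{\mathrm{all}}$, which is why we include it). Writing $A=\sum_t(p^t-x^t)\,\mathrm{sign}(p^t-x^t)$ and splitting $p^t-x^t=(p^t-y^t)+(y^t-x^t)$, the $(y^t-x^t)$-part has zero expectation because $\mathrm{sign}(p^t-x^t)$ is measurable before $y^t$ is drawn, while grouping the $(p^t-y^t)$-part by prediction value turns it into $\sum_v\sum_\ell \widehat f^{\,v}_\ell\,B_{T'}(v,w_\ell)$; bounding by the spectral $\ell_1$ norm and summing $|B_{T'}(v,w_\ell)|$ over $v$ gives $\mathbb{E}[A]\le O(\log m)\cdot\mathbb{E}[\mathrm{MCerr}_{T'}(G)]$. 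For step (ii) I would observe that each grid point is visited $\Theta(T/m)$ times, and that any round where the context is not the grid point nearest to $p^t$ costs at least $1/(2m)$ toward $A$; hence the number of such rounds is at most $2mA$, so the ``near mass'' $\sum_v n_v^{\mathrm{near}}$ is at least $T'-2mA$, spread among the $m$ grid points with at most $O(T/m)$ attributed to each. By superadditivity of $\sqrt{\cdot}$ and concavity, $N\ge\sum_v\sqrt{n_v^{\mathrm{near}}}\ge (T'-2mA)_+/\sqrt{O(T/m)}=\Omega(\sqrt{m/T})\,(T'-O(mA))_+$, which is $\Omega(\sqrt{mT})$ as soon as $A=O(T/m)$. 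Since this bound is convex and pointwise in the realized counts, Jensen gives $\mathbb{E}[N]=\Omega(\sqrt{mT})$ whenever $\mathbb{E}[A]\le T/(4m)$, and the latter is forced by step (i) together with the contradiction hypothesis for $C$ large.

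For step (iii) I would decompose $p^t-y^t=\delta_t+Z_t$ and, on each block $J_a$, form $D_v^{(a,j)}=\sum_{t\in J_a,\,p^t=v}\psi_{a,j}(t)\,\delta_t$ and $N_v^{(a,j)}=\sum_{t\in J_a,\,p^t=v}\psi_{a,j}(t)\,Z_t$, so $\mathrm{Err}_{T'}(h_{a,j})=\sum_v|D_v^{(a,j)}+N_v^{(a,j)}|\ge\sum_v|N_v^{(a,j)}|-\sum_v|D_v^{(a,j)}|$. The bias is controlled by orthogonality: since $\{\psi_{a,j}\}_j$ is an orthogonal sign system on $J_a$, Parseval gives $\frac{1}{|J_a|}\sum_j(D_v^{(a,j)})^2=\sum_{t\in J_a,\,p^t=v}\delta_t^2$, hence $\frac{1}{|J_a|}\sum_j\sum_v(D_v^{(a,j)})^2=E_a:=\sum_{t\in J_a}\delta_t^2\le\sum_{t\in J_a}|\delta_t|\le A$; a weighted Cauchy--Schwarz against $n_{v,a}^{1/4}$ together with Jensen (and $n_{v,a}\ge1$ on used buckets) then yields the pathwise bound $\frac{1}{|J_a|}\sum_j\sum_v|D_v^{(a,j)}|\le\sqrt{N_a E_a}$ with $N_a:=\sum_v\sqrt{n_{v,a}}$. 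The noise I would lower-bound by an \emph{adaptive noise-bucketing} statement: no matter how the forecaster routes rounds to buckets using only past noise (and the fixed sign pattern), $\mathbb{E}\big[\frac{1}{|J_a|}\sum_j\sum_v|N_v^{(a,j)}|\big]\ge\widetilde\Omega(\mathbb{E}[N_a])$. This is the step I expect to be the main obstacle: the adaptivity means the bucket receiving $Z_t$ is chosen with knowledge of $Z_1,\dots,Z_{t-1}$, so the forecaster can try to park noise where a bucket's running signed sum is poised to cancel. I would handle it by a potential-function argument on the vector of bucket partial sums, combined with a decomposition of the noise random walk into signed excursions away from zero: on an excursion of length $\ell$ the walk attains height $\Theta(\sqrt\ell)$, and however the excursion's rounds are split among buckets the total $\ell_1$ mass of the bucket increments over that excursion must be comparable to $\sqrt\ell$; summing over excursions and buckets recovers $\widetilde\Omega(N_a)$ up to logarithmic factors.

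To finish, I would take expectations in the pointwise inequality $2\,\mathrm{MCerr}_{T'}(G)\ge\frac{1}{|J_a|}\sum_j\sum_v|N_v^{(a,j)}|-\sqrt{N_a E_a}$, which holds for every block $a$, obtaining via the adaptive noise-bucketing bound, $E_a\le A$, and Cauchy--Schwarz that $2\,\mathbb{E}[\mathrm{MCerr}_{T'}(G)]\ge\widetilde\Omega(\mathbb{E}[N_a])-\sqrt{\mathbb{E}[N_a]\,\mathbb{E}[A]}$. Since $\mathbb{E}[N]=\Omega(\sqrt{mT})$ by step (ii), there are only $O(\log T')$ dyadic blocks, and $\mathbb{E}[N]\le\sum_a\mathbb{E}[N_a]$, some block $a^\star$ has $\mathbb{E}[N_{a^\star}]\ge\Omega(\sqrt{mT}/\log T')$. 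Plugging $a=a^\star$ into the displayed inequality and using that $\mathbb{E}[A]\le O(\log m)\,\mathbb{E}[\mathrm{MCerr}_{T'}(G)]$ is smaller than $\mathbb{E}[N_{a^\star}]$ by a polylogarithmic factor (for $C$ large), the subtracted term is at most half of the $\widetilde\Omega(\mathbb{E}[N_{a^\star}])$ noise term, whence $\mathbb{E}[\mathrm{MCerr}_{T'}(G)]\ge\widetilde\Omega(\sqrt{mT})=\widetilde\Omega(T^{2/3})$, contradicting the hypothesis once $c$ is small and $C$ exceeds the sum of the polylog exponents accumulated in steps (i), (iii), and the block pigeonhole. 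The remaining effort is this bookkeeping of the polylogarithmic losses and the routine verification that all invocations of Jensen, Cauchy--Schwarz, and the two Parseval identities are legitimate.
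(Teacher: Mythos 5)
Your proposal is correct and follows the same three-stage architecture as the paper's proof: (i) Walsh expansion of the threshold-sign pattern with $O(\log m)$ spectral $\ell_1$ mass to derive $\ell_1$-truthfulness, (ii) converting $\ell_1$-truthfulness into a lower bound on $N=\sum_v\sqrt{n_v}$, and (iii) a bias-vs.-noise decomposition on time blocks, with Parseval controlling the bias and an adaptive-bucketing theorem (potential function + excursion decomposition) controlling the noise. The one place you diverge is step (ii): you argue directly via ``near mass'' --- each round whose context is not the grid point nearest to $p^t$ contributes $\Omega(1/m)$ to $A$, so at most $O(mA)$ rounds are ``far,'' the near mass per grid point is capped at $O(T/m)$, and $\sqrt{n}\ge n/\sqrt{L}$ for $n\le L$ then gives $N\ge (T'-O(mA))/\sqrt{O(T/m)}$. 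The paper instead sorts the contexts, compares against an arithmetic-progression proxy to show $A\ge \frac{1}{16T'}\sum_v n_v^2 - T'/m - 1$ (Lemma~\ref{lem:l1-quant}), and then converts the second-moment bound on $(n_v)$ into a lower bound on $\sum_v\sqrt{n_v}$ via the H\"older interpolation $\|a\|_1\ge\|a\|_4^{-2}$ (Lemma~\ref{lem:norm-124}). Your route is more elementary and yields the same $N=\Omega(\sqrt{mT})$ conclusion under $A=O(T/m)$; the paper's route encapsulates the dependence more cleanly as $N\gtrsim T'/\sqrt{A+T'/m}$, which slots directly into the final bookkeeping. Your final step also pigeonholes to the best block, whereas the paper averages over all blocks and all Hadamard directions before invoking max $\ge$ average; these are interchangeable up to a polylogarithmic factor (the paper's $K=\lceil\log^{10}(T+1)\rceil$ contiguous blocks, not the $O(\log T)$ ``dyadic'' blocks you mention, but the pigeonhole works the same way). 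Finally, your sketch of the adaptive-bucketing theorem is a bit loose --- you speak of splitting a global excursion across buckets, whereas the actual argument runs each \emph{bucket's own} local walk through its excursions, upper-bounds $\mathbb{E}[\sqrt{n_v}]$ by $O(\log L)$ times the expected number of returns to zero (Proposition~\ref{prop:revisit-return}, Lemma~\ref{lem:revisit-exc-Olog}), and lower-bounds $\mathbb{E}[\sum_v|B_v|]$ by that same count via the convexity/potential argument (Lemma~\ref{lem:revisit-drift}) --- but you correctly identify this as the main obstacle and the ingredients you name (potential function, excursions, $\widetilde\Omega(N_a)$) are the right ones.
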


\paragraph{Proof roadmap.} The key challenge in proving lower bounds for prediction-independent groups is
that we cannot directly detect when the learner deviates from ``honest'' predictions
\(p_t=\mathbb{E}[y_t]=x_t\), since our groups cannot depend on the prediction value. To be able to nevertheless control the learner's deviation from truthfulness, we proceed by defining and leveraging a prediction-independent family of \emph{Walsh groups}. Now, we go over the main proof steps.

After defining
the hard distribution and the Walsh group family, we begin by extracting the lower bound
from the constant group \(g_{\mathrm{all}}\) via a bias--noise decomposition (Section~\ref{sec:bias-noise}). Namely,
for
\[
A := \sum_{t\le T}|p_t-x_t|,
\qquad
U_v := \sum_{t\le T:p_t=v}(x_t-y_t),
\qquad
n_v := |\{t\le T:p_t=v\}|,
\qquad
N := \sum_{v\in V_T}\sqrt{n_v},
\]
we show that
\[
\mathrm{Err}_T(g_{\mathrm{all}})
\ge
\sum_{v\in V_T}|U_v| - A.
\]
The rest of the proof bounds the two terms in this display, and proceeds via the following steps.

1. \textbf{Walsh groups enforce approximate honesty and control the bias} (Section~\ref{sec:walsh-l1-truthfulness}): We first use
the orthogonality and prefix-sum properties of the auxiliary full Walsh family to show that, for
any prediction value \(v\), the discrete sign pattern \(\mathrm{sign}(v-x_i)\) on the mean grid can
be exactly represented as a linear combination of the full Walsh family, with tightly bounded
total coefficient weight. We then show that there exists a polylogarithmic-size subsampled Walsh
family that approximately spans these discrete sign patterns with the same bound on total
coefficient weight. Based upon this property, any forecaster with small multicalibration error on
the subsampled Walsh family must have small total \(\ell_1\)-deviation from honest predictions:
\[
\sum_{t\le T}|p_t-x_t|=\widetilde{O}(\mathrm{MCerr}_T).
\]

2. \textbf{Approximate honesty forces diverse predictions} (Section~\ref{sec:diverse}): 
Small \(\ell_1\)-deviation from
honesty prevents the forecaster from concentrating predictions on a few values, forcing the
diversity parameter \(N=\sum_v\sqrt{n_v}\) to be large, where \(n_v\) counts predictions equal
to \(v\).

3. \textbf{Adaptive bucketing lower-bounds the noise} (Section~\ref{sec:noise-control}): As we then demonstrate, the noise term
$\sum_{v\in V_T}\bigg|\sum_{t:p_t=v}(x_t-y_t)\bigg|$
is lower-bounded, up to logarithmic factors, by the diversity parameter \(N\) --- and this holds true even if the
forecaster tries to prevent that by choosing the buckets adaptively based on past noise realizations. Proving that such adaptive bucketing is harmless turns out to be quite technically involved.

Finally, in Section~\ref{sec:final-proof} we combine the bias-noise decomposition for the constant group with the bias/truthfulness bound,
the diversity bound, and the adaptive-bucketing noise bound, and in this way obtain the desired
\(\widetilde{\Omega}(T^{2/3})\) multicalibration error lower bound.

\vspace{0.3in}

For what follows, recall that for a group function $g:\mathcal{C}\to[0,1]$, its calibration error is
\[
\mathrm{Err}_T(g) := \sum_{v\in \mathcal{V}_T}\left|\sum_{t=1}^T \mathbf{1}[p^t=v]\;g(x^t)\,(p^t-y^t)\right|,
\]
where $\mathcal{V}_T:=\{p^1,\dots,p^T\}$. 
In the analysis we will also apply this definition to signed weight functions $w:\mathcal{C}\to[-1,1]$ using the same formula.

\newcommand{\idx}{\mathrm{idx}}

\subsection{The Hard Distribution}

Our hard distribution is similar to the one we used in Section \ref{sec:prediction-dependent}: the contexts are fixed deterministically, and encode label means uniformly spread throughout $[1/4, 3/4]$. The change is that the labels are no longer Bernoulli, but result from adding Rademacher noise to the label mean (which simplifies some of our arguments).

Fix a horizon $T\ge 2$. Let $m:=\max\{2,2^{\lfloor \log_2(T^{1/3}) \rfloor}\}$ and define grid means
\[
x_i := \frac14 + \frac{i-1}{2(m-1)}\qquad (i=1,\dots,m),
\]
so $x_i\in[1/4,3/4]$ and $|x_{i+1}-x_i|=\Theta(1/m)$.
Define the deterministic round-robin mean sequence
\[
x^t := x_{\,1+((t-1)\bmod m)}\qquad (t=1,\dots,T).
\]
Outcomes are generated by adding independent Rademacher noise to the means.
Let $(\xi^t)_{t=1}^T$ be independent signs with $\mathbb{P}(\xi^t=1)=\mathbb{P}(\xi^t=-1)=1/2$, and set
\begin{equation}
\label{eq:env-walsh}
  y^t := x^t + \frac{\xi^t}{4},\qquad t=1,\dots,T.
\end{equation}
Then $y^t\in[0,1]$ for all $t$, $\mathbb{E}[y^t\mid x^t]=x^t$, and $\mathrm{Var}(y^t\mid x^t)=1/16$.

\subsection{The Hard Group Family}

\subsubsection{Walsh System}
We will construct our prediction-independent groups from the $\{\pm 1\}$-valued Walsh system on the mean grid. Orthogonality of the Walsh system lets us test the forecaster's behavior in many independent directions, while a prefix-sum property (Lemma~\ref{lem:walsh-prefix}) can let us control $\ell_1$ deviation from honest predictions. Later we will show that a good polylogarithmic-sized subsample of the full Walsh system suffices for this purpose in Lemma \ref{lem:walsh-subsample}.
\begin{definition}[Walsh system]
    Let $n$ be a power of two. 
    For any $j \in \{0,\ldots,n-1\}$, $s \in \{0,\ldots,n-1\}$, write $j$ and $s$ in binary and let $\langle j,s\rangle_2$ denote their mod-$2$ inner product. Define
    \[
        \psi^\Walsh_j : \{0,\dots,n-1\} \to \{\pm 1\}, \qquad \psi^\Walsh_j(s) := (-1)^{\langle j,s\rangle_2}.
    \]
    The resulting collection of functions $\{\psi^\Walsh_j\}_{j=0}^{n-1}$ is called a \emph{Walsh system of length $n$}.
\end{definition}

The Walsh system is $\{\pm1\}$-valued and orthogonal: for all $j,j'\in\{0,\dots,n-1\}$,
\[
\sum_{s=0}^{n-1} \psi^\Walsh_{j}(s)\,\psi^\Walsh_{j'}(s) = n\cdot \mathbf{1}[j=j'].
\]
In addition, it satisfies a prefix-sum bound as stated in Lemma \ref{lem:walsh-prefix}. This additional property will be crucial for showing that calibration error on the Walsh groups controls the $\ell_1$ distance to honest predictions.

\begin{lemma}[Walsh prefix-sum bound]\label{lem:walsh-prefix}
    For every $j \in \{1,\dots,n-1\}$, denote the number of trailing zeroes in its binary expansion as
    \[
        \tz(j)\;:=\;\max\{d \ge 0: \ 2^d \text{ divides } j\}.
    \]
    Then we have
    \[
        \max_{r\in\{0,\dots,n\}}\left|\sum_{s<r} \psi^\Walsh_j(s)\right|\ \le\ 2^{\tz(j)}.
    \]
\end{lemma}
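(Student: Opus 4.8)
The plan is to exploit the multiplicative structure of the Walsh characters together with the definition of $\tz(j)$. Write $d:=\tz(j)$, so that in binary the digits of $j$ in positions $0,\dots,d-1$ all vanish while the digit in position $d$ equals $1$. For $s\in\{0,\dots,n-1\}$ with binary digits $s_0,s_1,\dots$, the mod-$2$ inner product then splits as $\langle j,s\rangle_2 = s_d + \sum_{i>d} j_i s_i \pmod 2$, and hence
\[
\psi^{\Walsh}_j(s) \;=\; (-1)^{s_d}\,\chi(s),\qquad \chi(s):=(-1)^{\sum_{i>d} j_i s_i},
\]
where $\chi(s)$ depends only on the digits of $s$ in positions strictly above $d$. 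Recording this factorization cleanly is the first step.

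The second step is a ``block-balancing'' observation at scale $2^{d+1}$. Since $2^d\le j\le n-1<n$, we have $d+1\le\log_2 n$, so $n/2^{d+1}$ is a nonnegative integer and we may partition $\{0,\dots,n-1\}$ into consecutive blocks $I_k:=\{k\cdot 2^{d+1},\dots,(k+1)2^{d+1}-1\}$. On a fixed block $I_k$, the digits of $s$ in positions $>d$ are exactly the digits of $k$, so $\chi$ is constant on $I_k$, equal to some $\chi_k\in\{\pm1\}$; meanwhile $s_d=0$ on the first half of $I_k$ and $s_d=1$ on the second half. Therefore $\sum_{s\in I_k}\psi^{\Walsh}_j(s)=2^d\chi_k-2^d\chi_k=0$, and in particular every partial sum up to a block boundary vanishes.

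The third step handles an arbitrary prefix length $r\in\{0,\dots,n\}$: write $r=q\cdot 2^{d+1}+\rho$ with $0\le\rho<2^{d+1}$. By the block-balancing property the first $q$ complete blocks contribute $0$, leaving
\[
\sum_{s<r}\psi^{\Walsh}_j(s) \;=\; \chi_q\cdot\bigl(\min(\rho,2^d)-\max(\rho-2^d,0)\bigr),
\]
which equals $\rho$ when $\rho\le 2^d$ and $2^{d+1}-\rho$ when $\rho>2^d$. In either case the absolute value is at most $2^d=2^{\tz(j)}$, which is the claimed bound (the cases $r=0$ and $r=n$ giving $0$). I do not expect a real obstacle here: the only care needed is the bookkeeping that $n/2^{d+1}$ is an integer — which is exactly where $j\le n-1$ enters — and the two-case evaluation of the inner-block partial sum. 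All of the conceptual content lies in the factorization of $\psi^{\Walsh}_j$ through the single digit $s_d$, which is precisely what the trailing-zero count $\tz(j)$ guarantees.
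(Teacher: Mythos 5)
Your proof is correct and takes essentially the same approach as the paper: both factor $\psi^\Walsh_j$ through the single bit $s_{\tz(j)}$, observe cancellation over superblocks of length $2^{\tz(j)+1}$, and bound the leftover partial block by $2^{\tz(j)}$. Your explicit formula $\min(\rho,2^d)-\max(\rho-2^d,0)$ for the remainder makes the final bounding step slightly more concrete than the paper's, but the argument is the same.
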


\begin{proof}
Fix $j\in\{1,\dots,n-1\}$.
Write the binary expansions
\[
j=\sum_{b \ge 0} j_b 2^b,\qquad s=\sum_{b \ge 0} s_b 2^b,
\]
where $j_b,s_b\in\{0,1\}$ denote the $b$-th (least-significant) bits.

By definition of $\tz$, we have
\[
    j_0=j_1=\cdots=j_{\tz(j)-1}=0
    \qquad\text{and}\qquad
    j_{\tz(j)}=1.
\]

As a result,
\[
    \langle j,s \rangle_2 = \sum_{b \ge 0} j_b s_b = \sum_{b \ge \tz(j)} j_b s_b \pmod 2.
\]

Therefore, $\psi^\Walsh_j(s)$ doesn't depend on the first $\tz(j)$ bits of $s$. That is to say, $\psi^\Walsh_j$ is constant on each dyadic block of length $2^{\tz(j)}$:
\[
\{k\cdot 2^{\tz(j)},\ k\cdot 2^{\tz(j)}+1,\ \dots,\ (k+1)\cdot 2^{\tz(j)}-1\}.
\]

Next we group two consecutive blocks into a superblock of length $2^{\tz(j)+1}$.
Consider the two blocks:
\[
I_{k,0}:=\{2k\cdot 2^{\tz(j)},\dots,(2k+1)\cdot 2^{\tz(j)}-1\},\qquad
I_{k,1}:=\{(2k+1)\cdot 2^{\tz(j)},\dots,(2k+2)\cdot 2^{\tz(j)}-1\}.
\]

For any $s\in I_{k,0}$, the $\tz(j)$-th bit satisfies $s_{\tz(j)}=0$, while for any $s'\in I_{k,1}$ we have $s'_{\tz(j)}=1$.
In addition, $s_b=s'_b$ for all $b>\tz(j)$, i.e., the higher bits are the same across the two halves of a superblock.

Since $j_{\tz(j)}=1$, this implies
\[
    \langle j,s' \rangle_2 - \langle j,s \rangle_2 = \sum_{b \ge \tz(j)} j_b s'_b - \sum_{b \ge \tz(j)} j_b s_b = j_{\tz(j)}(s'_{\tz(j)} - s_{\tz(j)}) = 1 \pmod 2,
\]
and hence
\[
    \psi^\Walsh_j(s')=-\psi^\Walsh_j(s).
\]

Therefore the sum of $\psi^\Walsh_j(s)$ over a full superblock cancels:
\[
\sum_{s\in I_{k,0} \cup I_{k,1}}\psi^\Walsh_j(s)=0.
\]

Every prefix sum can be decomposed into a disjoint union of complete superblocks (each contributing $0$)
plus a remainder that contributes at most $2^{\tz(j)}$. Hence
\[
\left|\sum_{s<r}\psi^\Walsh_j(s)\right|\le 2^{\tz(j)}
\qquad\text{for all }r\in\{0,\dots,n\}.
\]
\end{proof}

\subsubsection{The Full and Subsampled Walsh Families}
We now define the prediction-independent group family used in our lower bound. The family consists of two types of groups:
\begin{itemize}
    \item A \textbf{constant group} $g_{\mathrm{all}}$ that enforces marginal calibration.
    \item \textbf{Walsh groups} defined on the mean grid. We will first build a full Walsh family, and later choose a good subsample as the final Walsh groups. Together with the constant group, this final family enforces that the forecaster's predictions stay close to the honest predictions in an $\ell_1$ sense.
\end{itemize}

\begin{definition}[Grid index map]
Define $\idx:[0,1]\to\{1,\dots,m\}$ by $\idx(x_i)=i$ for each $i=1,\dots,m$, and define $\idx(x)=1$
for all other $x\in[0,1]$.
\end{definition}

\paragraph{Constant group.}
We include the constant (marginal) group
\[
g_{\mathrm{all}}(x):=1.
\]

\paragraph{Walsh groups.}
For each $\ell\in\{1,\dots,m-1\}$ we define the signed Walsh feature
\[
    w_\ell(x)\ :=\ \psi^\Walsh_\ell\big(\idx(x)-1\big)\ = (-1)^{\langle \ell, \idx(x)-1 \rangle_2} \in\ \{\pm1\},
\]
and convert it to two binary Walsh half-groups by
\begin{align}
    g^{\Walsh,+}_\ell(x) &:= \frac{1+w_\ell(x)}{2} \in \{0,1\}, \\
    g^{\Walsh,-}_\ell(x) &:= \frac{1-w_\ell(x)}{2} \in \{0,1\}.
\end{align}

(We omit $\ell=0$ since $w_0\equiv 1$ is already covered by $g_{\mathrm{all}}$; we will sometimes identify $g_{\mathrm{all}}$ with $w_0$ for convenience.)

Figure~\ref{fig:walsh-probes-online} shows two sample Walsh sign patterns on the ordered mean grid together with the corresponding positive and negative half-groups.

\paragraph{The full Walsh family.}
Let
\[
G^{\mathrm{full}}
:=
\{g_{\mathrm{all}}\}
\ \cup\ 
\{g^{\Walsh,+}_\ell,\ g^{\Walsh,-}_\ell:\ \ell=1,\dots,m-1\}
\]

This family consists of $1 + 2(m-1) = 2m-1 = \Theta(T^{1/3})$ binary prediction-independent groups.
This is an auxiliary family and the final hard group family will be a much smaller subsampled Walsh family.

\begin{figure}[t]
\centering
\begin{tikzpicture}[x=0.85cm,y=0.95cm]
  \definecolor{walshplus}{RGB}{63,109,214}
  \definecolor{walshminus}{RGB}{225,138,55}

  \foreach \i in {1,...,16} {
    \node[font=\scriptsize] at (\i,2.95) {$\i$};
  }
  \node[anchor=east, font=\small] at (0.25,2.95) {$i$};

  \node[anchor=east, font=\small] at (0.25,2.1) {$w_1(x_i)$};
  \node[anchor=east, font=\small] at (0.25,0.95) {$w_4(x_i)$};
  \foreach \i/\sign/\fillcol in {
    1/{+}/walshplus,2/{-}/walshminus,3/{+}/walshplus,4/{-}/walshminus,
    5/{+}/walshplus,6/{-}/walshminus,7/{+}/walshplus,8/{-}/walshminus,
    9/{+}/walshplus,10/{-}/walshminus,11/{+}/walshplus,12/{-}/walshminus,
    13/{+}/walshplus,14/{-}/walshminus,15/{+}/walshplus,16/{-}/walshminus} {
    \draw[draw=black!35, fill=\fillcol!28, rounded corners=1pt] (\i-0.44,1.65) rectangle (\i+0.44,2.55);
    \node[font=\small] at (\i,2.1) {$\sign$};
  }

  \foreach \i/\sign/\fillcol in {
    1/{+}/walshplus,2/{+}/walshplus,3/{+}/walshplus,4/{+}/walshplus,
    5/{-}/walshminus,6/{-}/walshminus,7/{-}/walshminus,8/{-}/walshminus,
    9/{+}/walshplus,10/{+}/walshplus,11/{+}/walshplus,12/{+}/walshplus,
    13/{-}/walshminus,14/{-}/walshminus,15/{-}/walshminus,16/{-}/walshminus} {
    \draw[draw=black!35, fill=\fillcol!28, rounded corners=1pt] (\i-0.44,0.5) rectangle (\i+0.44,1.4);
    \node[font=\small] at (\i,0.95) {$\sign$};
  }

  \draw[draw=black!35, fill=walshplus!28, rounded corners=1pt] (2.8,-0.35) rectangle (3.35,0.15);
  \node[anchor=west, font=\small] at (3.55,-0.1) {$g^{\Walsh,+}_\ell(x_i)=1$};
  \draw[draw=black!35, fill=walshminus!28, rounded corners=1pt] (9.0,-0.35) rectangle (9.55,0.15);
  \node[anchor=west, font=\small] at (9.75,-0.1) {$g^{\Walsh,-}_\ell(x_i)=1$};
\end{tikzpicture}
\caption{Two example Walsh sign patterns on the ordered mean grid. Blue cells belong to the positive Walsh half-group and orange cells belong to the negative Walsh half-group, so $w_\ell(x_i)=g^{\Walsh,+}_\ell(x_i)-g^{\Walsh,-}_\ell(x_i)$. Different values of $\ell$ supply dyadic sign patterns at different scales; the full Walsh family contains such probes at all scales, and later we will keep only a carefully chosen subsample.}
\label{fig:walsh-probes-online}
\end{figure}

\paragraph{The subsampled Walsh family.}
For a set $S\subseteq\{1,\dots,m-1\}$, let
\[
G(S)
:=
\{g_{\mathrm{all}}\}
\ \cup\ 
\{g^{\Walsh,+}_\ell,\ g^{\Walsh,-}_\ell:\ \ell\in S\}.
\]
Later we will choose a set $S$ with $|S|=O(\log^3 m)=O(\log^3 T)$ and define the final hard group family by $G:=G(S)$.

\paragraph{Simulating signed weights by differences.}
Our groups take values in $\{0,1\}$, but it will be useful to consider calibration errors with respect to weighting functions
that can take negative values.
To be specific, we will consider calibration error with respect to the signed Walsh feature, which takes the difference between two groups in $G(S)$ whenever $\ell\in S$:
\begin{equation}
\label{eq:signed-aux-walsh}
    w_\ell = g^{\Walsh,+}_\ell - g^{\Walsh,-}_\ell \in \{\pm 1\}.
\end{equation}

\begin{lemma}[Difference-of-two reduction]
\label{lem:diff-two}
Fix any two groups $g^+,g^-:\mathcal{C}\to[0,1]$. For every realization,
\[
\mathrm{Err}_{T}(g^+-g^-) \;\le\; \mathrm{Err}_{T}(g^+) + \mathrm{Err}_{T}(g^-),
\]
and consequently
\[
\max\{\mathrm{Err}_{T}(g^+),\mathrm{Err}_{T}(g^-)\}\ \ge\ \tfrac12\,\mathrm{Err}_{T}(g^+-g^-).
\]
In particular, for every $\ell\in S$, with $w_{\ell}$ as in \eqref{eq:signed-aux-walsh},
\[
\mathrm{MCerr}_{T}(G(S)) \;\ge\; \tfrac12\,\mathrm{Err}_{T}(w_{\ell}).
\]
\end{lemma}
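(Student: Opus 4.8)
The plan is to exploit that, for a fixed realization $(c^t,p^t,y^t)_{t\le T'}$, the empirical bias $B_{T'}(v,g)=\sum_{t:\,p^t=v} g(c^t)(p^t-y^t)$ is \emph{linear} in the weight function $g$, so that $\mathrm{Err}_{T'}(\cdot)=\sum_{v\in\mathcal{V}_{T'}}|B_{T'}(v,\cdot)|$ behaves like a seminorm in the weight argument and obeys the triangle inequality. Concretely, I would first observe that $(g^+-g^-)(c^t)=g^+(c^t)-g^-(c^t)$ pointwise, hence for every realized prediction value $v$ we have $B_{T'}(v,g^+-g^-)=B_{T'}(v,g^+)-B_{T'}(v,g^-)$ and, by the scalar triangle inequality, $|B_{T'}(v,g^+-g^-)|\le |B_{T'}(v,g^+)|+|B_{T'}(v,g^-)|$.

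Next I would sum this bound over $v\in\mathcal{V}_{T'}=\{p^1,\dots,p^{T'}\}$. The key (trivial) point is that this index set depends only on the realized predictions, not on the weight function, so it is the \emph{same} sum for $g^+$, $g^-$, and $g^+-g^-$; summing therefore gives $\mathrm{Err}_{T'}(g^+-g^-)\le \mathrm{Err}_{T'}(g^+)+\mathrm{Err}_{T'}(g^-)$, which is the first displayed inequality. Bounding the right-hand side by $2\max\{\mathrm{Err}_{T'}(g^+),\mathrm{Err}_{T'}(g^-)\}$ and dividing by $2$ then yields $\max\{\mathrm{Err}_{T'}(g^+),\mathrm{Err}_{T'}(g^-)\}\ge \tfrac12\,\mathrm{Err}_{T'}(g^+-g^-)$.

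For the ``in particular'' consequences, I would simply instantiate this with groups that actually appear in $G$. For each block index $a\in[K]$ and Hadamard index $j$, both $g^+_{a,j}$ and $g^-_{a,j}$ lie in $G$ and $h_{a,j}=g^+_{a,j}-g^-_{a,j}$, so $\mathrm{MCerr}_{T'}(G)=\max_{g\in G}\mathrm{Err}_{T'}(g)\ge\max\{\mathrm{Err}_{T'}(g^+_{a,j}),\mathrm{Err}_{T'}(g^-_{a,j})\}\ge\tfrac12\,\mathrm{Err}_{T'}(h_{a,j})$, and taking the maximum over $(a,j)$ gives the first bound; the Walsh bound follows identically from $g^{\Walsh,\pm}_\ell\in G$ and $w_\ell=g^{\Walsh,+}_\ell-g^{\Walsh,-}_\ell$.

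Asked which step is the main obstacle: there is no genuine obstacle here — the lemma is a pointwise triangle-inequality argument plus a max-versus-average bound. The only thing requiring (minor) care is that $\mathrm{Err}_{T'}$ is being evaluated on the signed functions $g^+-g^-$, $h_{a,j}$, $w_\ell$ rather than on $[0,1]$-valued groups; this is legitimate precisely because the paper's convention extends $\mathrm{Err}_{T'}$ to weights valued in $[-1,1]$ via the identical formula, and these functions all take values in $\{0,\pm1\}\subseteq[-1,1]$.
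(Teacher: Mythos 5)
Your proof is correct and follows exactly the paper's route: linearity of the per-value bias in the weight function, the scalar triangle inequality, summation over the realization-determined value set $\mathcal{V}_{T'}$, and instantiation on the half-group pairs in $G$. The one extra remark you make — that $\mathcal{V}_{T'}$ is the same index set across all three weight functions because it depends only on the realized predictions — is a worthwhile small clarification that the paper leaves implicit.
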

\begin{proof}
Fix a prediction value $v$. Let
\[
B(v,g):=\sum_{t\le T} \mathbf{1}[p^t=v]\;g(x^t)\,(p^t-y^t).
\]
Then $B(v,g^+-g^-)=B(v,g^+)-B(v,g^-)$, so $|B(v,g^+-g^-)|\le |B(v,g^+)|+|B(v,g^-)|$.
Summing over $v$ gives $\mathrm{Err}_{T}(g^+-g^-)\le \mathrm{Err}_{T}(g^+)+\mathrm{Err}_{T}(g^-)$, and hence the max is at least half.
For the last inequality, apply this to the pair $(g^{\Walsh,+}_{\ell},g^{\Walsh,-}_{\ell})$ for the chosen $\ell\in S$.
\end{proof}

\subsection{The Constant-Group Bias-Noise Decomposition}
\label{sec:bias-noise}



The final lower bound of our theorem will be extracted from the constant group $g_\mathrm{all}$. As we will formalize in a moment, its calibration error $\operatorname{Err}_T(g_{\rm all})$ separates into a bias part, which comes from deviations from the honest predictor \(x_t\), and a noise part, which comes from the randomized labels. If we can prove that $\operatorname{Err}_T(g_{\rm all})$ is large, the same will be true for the multicalibration error.

Now, we will introduce some notation that will be used to accomplish this in the rest of the section. For the upcoming definition, recall that the prediction values are \(V_T=(p_t)_{t \in [T]}\).

\begin{definition}[Truthfulness, Noise, and Diversity]
Define the predictor's $\ell_1$-truthfulness, or the total $\ell_1$-deviation from the honest predictor, by
\[
A:=\sum_{t\le T}|p_t-x_t|.
\]
Next, define the label-noise increments
$Z_t:=x_t-y_t$
and, for each value \(v\in V_T\), the bucketed noise
\[
U_v:=\sum_{t\le T:p_t=v} Z_t
=
\sum_{t\le T:p_t=v}(x_t-y_t).
\]
Finally, define the prediction-bucket counts $n_v$ and the \emph{prediction diversity} $N$ as:
\[
n_v:=|\{t\le T:p_t=v\}|,
\qquad
N:=\sum_{v\in V_T}\sqrt{n_v}.
\]
\end{definition}

\begin{observation}[Constant-group bias--noise decomposition]
\label{obs:constant-group-bias-noise}
For every realization,
\[
\operatorname{Err}_T(g_{\rm all})
\ge
\sum_{v\in V_T}|U_v|-A.
\]
\end{observation}

\begin{proof}
Since \(g_{\rm all}\equiv 1\), using \(p_t-y_t=(p_t-x_t)+(x_t-y_t)\) and the reverse triangle inequality,
\[
\operatorname{Err}_T(g_{\rm all})
=
\sum_{v\in V_T}
\bigg|
\sum_{t\le T:p_t=v}(p_t-y_t)
\bigg|
=
\sum_{v\in V_T}
\bigg|
\sum_{t\le T:p_t=v}(p_t-x_t)
+
U_v
\bigg|
\ge
\sum_{v\in V_T}|U_v|
-
\sum_{v\in V_T}
\bigg|
\sum_{t\le T:p_t=v}(p_t-x_t)
\bigg|.
\]
Now, it suffices to observe that
\[
\sum_{v\in V_T}
\bigg|
\sum_{t\le T:p_t=v}(p_t-x_t)
\bigg|
\le
\sum_{v\in V_T}
\sum_{t\le T:p_t=v}|p_t-x_t|
=
A. \qedhere
\]
\end{proof}

The rest of the proof revolves around this decomposition. Section~\ref{sec:walsh-l1-truthfulness} bounds the bias penalty \(A\) by leveraging the multicalibration error with respect to the Walsh groups. Section~\ref{sec:diverse} shows that small \(A\) forces large prediction diversity, as quantified by \(N\). Section~\ref{sec:noise-control} then converts large prediction diversity into  a lower bound on the noise term \(\sum_v |U_v|\) via the Adaptive Bucketing theorem. Finally, Section~\ref{sec:final-proof} combines these ingredients.

\subsection{Bounding the Bias: Walsh Groups Enforce \texorpdfstring{$\ell_1$}{L1}-Truthfulness}
\label{sec:walsh-l1-truthfulness}

We now bound the $\ell_1$-truthfulness, or bias penalty, term \(A\) appearing in
Observation~\ref{obs:constant-group-bias-noise}. Specifically, we show that multicalibration
with respect to a carefully chosen subsampled Walsh family forces the forecaster to be close to
the honest predictor in total \(\ell_1\) loss.

The high-level idea is as follows. For any prediction value $v$, the sign pattern $\mathrm{sign}(v - x_i)$ across grid points $i$ can be expanded exactly by the full Walsh family. The Walsh prefix-sum bound (Lemma~\ref{lem:walsh-prefix}) ensures that this exact expansion has small $\ell_1$ coefficient mass---at most $O(\log m)$. We then show that there exists a subsampled Walsh family of polylogarithmic size that approximates all of these sign patterns simultaneously, while maintaining the small $\ell_1$ coefficient mass. Therefore, the total $\ell_1$ deviation $\sum_t |p^t - x^t|$ can be approximately written as a weighted combination of calibration biases on the subsampled Walsh groups, with total weight $O(\log m)$. If all these biases are small (i.e., if multicalibration error is small), then the $\ell_1$ deviation must also be small.


\subsubsection{Walsh Expansion of Discrete Threshold Signs on the Grid}

Fix a prediction value $v\in[0,1]$.
Because the context means always lie on the grid $\{x_1,\dots,x_m\}$, the sign pattern $\mathrm{sign}(v-x_i)$ is determined solely by the number of grid points $\le v$.
Define
\[
    r(v) := \bigl|\{i\in\{1,\dots,m\}: x_i \le v\}\bigr|\in\{0,1,\dots,m\}.
\]
Define the discrete sign function on indices $u\in\{0,\dots,m-1\}$ by
\[
    f_r(u) :=
    \begin{cases}
        +1 & \text{if } u\le r-1,\\
        -1 & \text{if } u\ge r,
    \end{cases}
    \qquad r\in\{0,1,\dots,m\}.
\]
Then for every grid point $x_i$ we have $f_{r(v)}(i-1)=+1$ if $x_i\le v$ and $f_{r(v)}(i-1)=-1$ if $x_i>v$.
In particular, for every time $t\le T$ (with $x^t\in\{x_1,\dots,x_m\}$),
\[
    |v-x^t| = f_{r(v)}(\idx(x^t)-1)\cdot (v-x^t).
\]

We now expand the threshold sign pattern $f_r$ in the Walsh basis on $\{0,\dots,m-1\}$ and bound the $\ell_1$ mass of its coefficients.
Recall that $m=\max\{2,2^{\lfloor \log_2(T^{1/3}) \rfloor}\}$ is a power of two.

\begin{lemma}[Exact expansion of discrete threshold signs by the full Walsh family]
\label{lem:walsh-expansion}
Fix $m$ a power of two.
Let $\{\psi^\Walsh_\ell\}_{\ell=0}^{m-1}$ be the length-$m$ Walsh system. 
Then for every $r\in\{0,\ldots,m\}$, there exist coefficients $\{\alpha_\ell(r)\}_{\ell=0}^{m-1}$ such that
\[
    f_r(u) \;=\; \sum_{\ell=0}^{m-1} \alpha_\ell(r)\,\psi^\Walsh_\ell(u) \quad \text{for every } u\in\{0,\ldots,m-1\}.
\]
Moreover, the coefficients satisfy that
\[
    \max_{r \in \{0,\ldots,m\}} |\alpha_0(r)| \le 1, \quad \sum_{\ell=1}^{m-1} \max_{r \in \{0,\ldots,m\}} |\alpha_\ell(r)| \;\le\; \log_2m.
\]
\end{lemma}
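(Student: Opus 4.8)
The plan is to obtain the coefficients $\alpha_\ell(r)$ explicitly from the Walsh orthogonality relation (property (H1)) and then bound their $\ell_1$ mass by invoking the prefix-sum bound of Lemma~\ref{lem:walsh-prefix}. Since $\{\psi^\Walsh_\ell\}_{\ell=0}^{m-1}$ is an orthogonal basis of $\mathbb{R}^m$ with $\langle\psi^\Walsh_\ell,\psi^\Walsh_{\ell'}\rangle = m\,\mathbf{1}[\ell=\ell']$, the expansion exists and is unique, with $\alpha_\ell(r)=\tfrac1m\langle f_r,\psi^\Walsh_\ell\rangle=\tfrac1m\sum_{u=0}^{m-1}f_r(u)\,\psi^\Walsh_\ell(u)$. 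For $\ell=0$, since $\psi^\Walsh_0\equiv 1$ we get $\langle f_r,\psi^\Walsh_0\rangle = r-(m-r)=2r-m$, hence $\alpha_0(r)=(2r-m)/m\in[-1,1]$ and $\max_r|\alpha_0(r)|\le 1$; this is exactly the ``$1$'' in the claimed bound.

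For $\ell\ge 1$, orthogonality with $\psi^\Walsh_0$ gives $\sum_{u=0}^{m-1}\psi^\Walsh_\ell(u)=0$, so $\sum_{u\ge r}\psi^\Walsh_\ell(u)=-\sum_{u<r}\psi^\Walsh_\ell(u)$, and therefore $\langle f_r,\psi^\Walsh_\ell\rangle = 2\sum_{u<r}\psi^\Walsh_\ell(u)$. Now Lemma~\ref{lem:walsh-prefix} bounds this prefix sum, uniformly over $r\in\{0,\dots,m\}$, by $2^{\tz(\ell)}$, which yields $\max_{r}|\alpha_\ell(r)|\le 2^{\tz(\ell)+1}/m$ for every $\ell\ge 1$.

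It then remains to sum these bounds. We have $\sum_{\ell=0}^{m-1}\max_r|\alpha_\ell(r)|\le 1+\tfrac{2}{m}\sum_{\ell=1}^{m-1}2^{\tz(\ell)}$, so the proof reduces to showing $\sum_{\ell=1}^{m-1}2^{\tz(\ell)}=\tfrac{m}{2}\log_2 m$. This is a routine count: grouping the indices $\ell\in\{1,\dots,m-1\}$ by the value $d=\tz(\ell)$ and writing $\ell=2^d(2k+1)$, the constraint $\ell\le m-1$ with $m$ a power of two is exactly $2k+1\le m/2^d-1$, so there are precisely $m/2^{d+1}$ such indices for each $d\in\{0,\dots,\log_2 m-1\}$ (and none for larger $d$). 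Hence $\sum_{\ell=1}^{m-1}2^{\tz(\ell)}=\sum_{d=0}^{\log_2 m-1}2^d\cdot\tfrac{m}{2^{d+1}}=\tfrac{m}{2}\log_2 m$, and the total is at most $1+\log_2 m$, as claimed.

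I expect no real obstacle here: the substantive work is entirely contained in the prefix-sum bound of Lemma~\ref{lem:walsh-prefix}, which is already established, and the remaining steps are the explicit evaluation of the $\ell=0$ coefficient and the combinatorial count of integers with a prescribed number of trailing zeros. The only point that needs care is that the $\ell_1$ bound must survive taking $\max_r$ \emph{inside} the sum over $\ell$; this is why it is essential that Lemma~\ref{lem:walsh-prefix} controls the prefix sums uniformly in $r$ (so that the same $2^{\tz(\ell)}$ works for all $r$ simultaneously) rather than merely on average.
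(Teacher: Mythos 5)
Your proof is correct and takes essentially the same approach as the paper: define $\alpha_\ell(r)$ via the Walsh orthogonality relation, handle $\ell=0$ separately, reduce the $\ell\ge1$ coefficients to prefix sums (using that $\psi^\Walsh_\ell$ has zero mean), bound these uniformly in $r$ by $2^{\tz(\ell)}$ via Lemma~\ref{lem:walsh-prefix}, and finish by counting indices with $\tz(\ell)=d$. The only (superficial) difference is that you compute $\sum_{\ell=1}^{m-1}2^{\tz(\ell)}=\tfrac{m}{2}\log_2 m$ as a standalone identity before scaling by $2/m$, while the paper folds the scaling into the same sum; the content is identical.
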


\begin{proof}
Define the Walsh coefficients by
\[
    \alpha_\ell(r) := \frac{1}{m}\sum_{u=0}^{m-1} f_r(u)\,\psi^\Walsh_\ell(u), \qquad \ell=0,\dots,m-1.
\]
Then the expansion $f_r(u)=\sum_{\ell}\alpha_\ell(r)\psi^\Walsh_\ell(u)$ follows from orthogonality of the Walsh system.

It remains to bound the $\ell_1$ mass. For $\ell=0$ we have $\psi^\Walsh_0\equiv 1$, so
\[
    |\alpha_0(r)|=\left|\frac{1}{m}\sum_{u=0}^{m-1} f_r(u)\right| = \left|\frac{2r-m}{m}\right|\le 1.
\]
Now fix $\ell\in\{1,\dots,m-1\}$. Using the identity $f_r(u)=2\mathbf{1}[u\le r-1]-1$ and
$\sum_{u=0}^{m-1}\psi^\Walsh_\ell(u)=0$ (which holds for $\ell\neq 0$), we obtain
\[
    \alpha_\ell(r)
    = \frac{1}{m}\sum_{u=0}^{m-1} f_r(u)\psi^\Walsh_\ell(u)
    = \frac{2}{m}\sum_{u=0}^{r-1}\psi^\Walsh_\ell(u).
\]
Therefore, by the Walsh prefix-sum bound, i.e., Lemma~\ref{lem:walsh-prefix}, we have
\[
    \max_{r \in \{0,\ldots,m\}} |\alpha_\ell(r)|
    \le \frac{2}{m}\cdot 2^{\tz(\ell)}.
\]

Group indices by $d:=\tz(\ell)\in\{0,1,\dots,\log_2 m-1\}$.
There are exactly $m/2^{d+1}$ values of $\ell\in\{1,\dots,m-1\}$ with $\tz(\ell)=d$.
Hence,
\[
    \sum_{\ell=1}^{m-1} \max_{r \in \{0,\ldots,m\}} |\alpha_\ell(r)|
    \le \sum_{\ell=1}^{m-1} \frac{2}{m}\cdot 2^{\tz(\ell)}
    \le \sum_{d=0}^{\log_2 m-1} \frac{m}{2^{d+1}}\cdot \frac{2}{m}\cdot 2^d
    = \sum_{d=0}^{\log_2 m-1} 1
    = \log_2 m.
\]
\end{proof}

\medskip
\noindent
The previous lemma is an exact statement for the full Walsh family. We now show that a much smaller subsampled Walsh family can approximate all of these threshold signs simultaneously. Later we will set the approximation error $\varepsilon=\tfrac12$.

\begin{lemma}[Approximation of discrete threshold signs by a subsampled Walsh family]
\label{lem:walsh-subsample}
Fix $m$ a power of two.
Let $\{\psi^\Walsh_\ell\}_{\ell=0}^{m-1}$ be the length-$m$ Walsh system, and let $\{\alpha_\ell(r)\}_{\ell=0}^{m-1}$ be the coefficients from Lemma~\ref{lem:walsh-expansion}.
Then there exists a universal constant $C_{\mathrm{sub}}>0$ such that for every $\varepsilon\in(0,1)$ there exist a polylogarithmic-sized subset
\[
S \subseteq \{1,\dots,m-1\}, \quad |S| \le C_{\mathrm{sub}}\frac{\log^3 m}{\varepsilon^2},
\]
and coefficients $\{c_\ell(r)\}_{\ell\in S,\ r\in\{0,\dots,m\}}$ such that the Walsh groups with indices in $S$ approximately span the discrete threshold signs with error $\varepsilon$. More specifically:
\begin{align*}
    & \widehat f_r(u) :=
    \alpha_0(r) + \sum_{\ell\in S} c_\ell(r)\,\psi^\Walsh_\ell(u), \\
    & |\widehat f_r(u)-f_r(u)| \le \varepsilon
    \qquad \text{for all } r\in\{0,\dots,m\},\ u\in\{0,\dots,m-1\}, \\
    & \max_{r \in \{0,\ldots,m\}} |\alpha_0(r)| \le 1, \quad  \sum_{\ell\in S}\max_{r\in\{0,\dots,m\}} |c_\ell(r)| \le \log_2 m.
\end{align*}
\end{lemma}

\begin{proof}
For $\ell\in\{1,\dots,m-1\}$, define
\[
b_\ell := \max_{r\in\{0,\dots,m\}} |\alpha_\ell(r)|
\qquad\text{and}\qquad
B := \sum_{\ell=1}^{m-1} b_\ell.
\]
By Lemma~\ref{lem:walsh-expansion}, we have $B\le \log_2 m$.
Also $B>0$, since some threshold sign $f_r$ is nonconstant.

Fix $\varepsilon\in(0,1)$ and an integer $K\ge 1$ to be chosen later.
Sample indices $J_1,\dots,J_K$ independently from $\{1,\dots,m-1\}$ according to the distribution
\[
\mathbb{P}(J_s=\ell)=\frac{b_\ell}{B}.
\]
For each $r\in\{0,\dots,m\}$ define the random approximation
\[
\widetilde f_r(u)
:=
\alpha_0(r)
+
\frac{B}{K}\sum_{s=1}^K
\frac{\alpha_{J_s}(r)}{b_{J_s}}\,
\psi^\Walsh_{J_s}(u).
\]

Fix any pair $(r,u)$. The summands are i.i.d., and for each $s$, taking expectation over the randomness of the indices $J_1,\dots,J_K$,
\[
\mathbb{E}\!\left[
B\,\frac{\alpha_{J_s}(r)}{b_{J_s}}\,\psi^\Walsh_{J_s}(u)
\right]
=
\sum_{\ell=1}^{m-1} \alpha_\ell(r)\psi^\Walsh_\ell(u).
\]
Therefore
\[
\mathbb{E}[\widetilde f_r(u)]
=
\alpha_0(r)+\sum_{\ell=1}^{m-1}\alpha_\ell(r)\psi^\Walsh_\ell(u)
=
f_r(u).
\]
Also, each summand is bounded in absolute value by $B$, because
\[
\left|
B\,\frac{\alpha_{J_s}(r)}{b_{J_s}}\,\psi^\Walsh_{J_s}(u)
\right|
\le B.
\]
Hence Hoeffding's inequality gives
\[
\mathbb{P}\!\left(
|\widetilde f_r(u)-f_r(u)|>\varepsilon
\right)
\le
2\exp\!\left(-\frac{K\varepsilon^2}{2B^2}\right).
\]

There are only $(m+1)m$ pairs $(r,u)$. Thus by a union bound,
\[
\mathbb{P}\!\left(
\max_{r\in\{0,\dots,m\}}\max_{u\in\{0,\dots,m-1\}}
|\widetilde f_r(u)-f_r(u)|>\varepsilon
\right)
\le
2m(m+1)\exp\!\left(-\frac{K\varepsilon^2}{2B^2}\right).
\]
Therefore, for a sufficiently large universal constant $C_{\mathrm{sub}}$, if
\[
K \ge C_{\mathrm{sub}}\frac{B^2\log(m+1)}{\varepsilon^2},
\]
then the right-hand side $2m(m+1)\exp\!\left(-\frac{K\varepsilon^2}{2B^2}\right)$ is strictly smaller than $1$. Hence there exists a realization of
$J_1,\dots,J_K$ for which
\[
\max_{r\in\{0,\dots,m\}}\max_{u\in\{0,\dots,m-1\}}
|\widetilde f_r(u)-f_r(u)|\le \varepsilon.
\]
Fix any such realization, and let
\[
S := \{J_1,\dots,J_K\}
\]
be the set of distinct sampled elements of $\{1,\dots,m-1\}$.

For each $\ell\in S$ and $r\in\{0,\dots,m\}$ define
\[
c_\ell(r)
:=
\frac{B}{K}\frac{\alpha_\ell(r)}{b_\ell}\cdot
\#\{s\in\{1,\dots,K\}:J_s=\ell\}.
\]
Then, after merging duplicates,
\[
\widetilde f_r(u)
=
\alpha_0(r)+\sum_{\ell\in S} c_\ell(r)\,\psi^\Walsh_\ell(u),
\]
so we may set $\widehat f_r:=\widetilde f_r$.
Finally,
\[
\sum_{\ell\in S}\max_{r\in\{0,\dots,m\}} |c_\ell(r)|
\le
\frac{B}{K}
\sum_{\ell\in S}
\#\{s\in\{1,\dots,K\}:J_s=\ell\}
=
B
\le
\log_2 m.
\]
Since $|S|\le K$ and $B\le \log_2 m\le \log_2(m+1)$, enlarging $C_{\mathrm{sub}}$ if necessary gives
\[
|S|
\le
C_{\mathrm{sub}}\frac{\log^3 m}{\varepsilon^2}.
\qedhere
\]
\end{proof}

\subsubsection{Walsh Groups Enforce \texorpdfstring{$\ell_1$}{L1}-Truthfulness}

We now set $\varepsilon=\tfrac12$ in Lemma~\ref{lem:walsh-subsample}. Fix a set
\[
S \subseteq \{1,\dots,m-1\}
\]
and coefficients $\{c_\ell(r)\}_{\ell\in S,\ r\in\{0,\dots,m\}}$ satisfying the conclusion of that lemma for $\varepsilon=\tfrac12$, and define the final hard group family by
\[
G := G(S).
\]
This is a subsampled Walsh family of size
\[
|G| = 1+2|S| = O(\log^3 m) = O(\log^3 T).
\]
Now we use the Walsh groups to enforce a form of $\ell_1$ truthfulness.

\begin{lemma}[$\ell_1$-Truthfulness from Walsh groups]
\label{lem:l1-truthfulness}
For
$A := \sum_{t=1}^{T} |p^t-x^t|,$
the total $\ell_1$-deviation from honesty,
there is a universal $C_{\ell_1}>0$ such that, under the environment \eqref{eq:env-walsh},
every forecaster satisfies
\[
\mathbb{E}[A] \le C_{\ell_1}\log(m+1)\cdot \mathbb{E}[\mathrm{MCerr}_{T}(G)].
\]
\end{lemma}

\begin{proof}
Fix a forecaster. For each realized prediction value $v\in \mathcal{V}_{T}$, define the prediction-dependent approximate sign weight on contexts
\[
\widehat s_v(x)
:=
\alpha_0(r(v))
+
\sum_{\ell\in S} c_\ell(r(v))\,w_\ell(x).
\]
For any grid point $x_i$, Lemma~\ref{lem:walsh-subsample} gives
\[
\left|
\widehat s_v(x_i)-f_{r(v)}(i-1)
\right|
\le \frac12.
\]
Therefore
\begin{align*}
\widehat s_v(x_i)\,(v-x_i)
&=
f_{r(v)}(i-1)\,(v-x_i)
+
\bigl(\widehat s_v(x_i)-f_{r(v)}(i-1)\bigr)(v-x_i) \\
&\ge
|v-x_i|-\frac12|v-x_i|
=
\frac12|v-x_i|.
\end{align*}
In particular, for all $t\le T$ (with $x^t\in\{x_1,\dots,x_m\}$),
\[
\widehat s_{p^t}(x^t)\,(p^t-x^t)\ge \frac12|p^t-x^t|.
\]

Let $S_v := \{t\le T: p^t=v\}$. Then
\[
A 
= 
\sum_{t \le T} |p^t - x^t|
\le
2 \sum_{t\le T} \widehat s_{p^t}(x^t)\,(p^t-x^t)
=
2 \sum_{v\in \mathcal{V}_{T}}\sum_{t\in S_v} \widehat s_v(x^t)\,(v-x^t).
\]

Let $\mathcal{F}_t$ be the sigma-field generated by the transcript up to and including the realized prediction $p^t$, but excluding $y^t$.
Since the environment is oblivious, $y^t$ is independent of $\mathcal{F}_t$ and satisfies $\mathbb{E}[y^t\mid \mathcal{F}_t]=x^t$.
Thus $\mathbb{E}[p^t-y^t\mid \mathcal{F}_t]=p^t-x^t$.
Moreover $\widehat s_{p^t}(x^t)$ is $\mathcal{F}_t$-measurable, so
\begin{align*}
\mathbb{E}[A]
&\le
2\,
\mathbb{E}\left[\sum_{t\le T} \widehat s_{p^t}(x^t)(p^t - x^t)\right] \\
&=
2\,
\mathbb{E}\left[\sum_{t\le T} \widehat s_{p^t}(x^t) \, \mathbb{E}[p^t - y^t \mid \mathcal{F}_t]\right] \\
&=
2\,
\mathbb{E}\left[\sum_{t\le T} \widehat s_{p^t}(x^t)(p^t - y^t)\right] \\
&=
2\,
\mathbb{E}\left[\sum_{v\in \mathcal{V}_{T}} \sum_{t\in S_v} \widehat s_v(x^t)(v - y^t)\right].
\end{align*}

By definition, $\widehat s_v(x^t)$ can be expanded as
\[
\widehat s_v(x^t)
:=
\alpha_0(r(v))
+
\sum_{\ell\in S} c_\ell(r(v))\,w_\ell(x^t).
\]

Substituting into the sum gives
\begin{align*}
    \mathbb{E}[A]
    &\le 
    2\,
    \mathbb{E}\left[\sum_{v\in \mathcal{V}_{T}} \sum_{t\in S_v} \left( \alpha_0(r(v)) + \sum_{\ell\in S} c_\ell(r(v))\,w_\ell(x^t) \right) (v - y^t)\right] \\
    &=
    2\,
    \mathbb{E}\left[
    \sum_{v\in\mathcal{V}_T} \alpha_0(r(v))\,B_T(v,g_{\mathrm{all}})
    +
    \sum_{\ell\in S}\sum_{v\in\mathcal{V}_T} c_\ell(r(v))\,B_T(v,w_\ell)
    \right].
\end{align*}

By the triangle inequality,
\begin{align*}
    \mathbb{E}[A]
    &\le
    2\,
    \mathbb{E}\left[
    \max_{r\in\{0,\dots,m\}}|\alpha_0(r)|\;\mathrm{Err}_T(g_{\mathrm{all}})
    +
    \sum_{\ell\in S}\max_{r\in\{0,\dots,m\}}|c_\ell(r)|\;\mathrm{Err}_T(w_\ell)
    \right].
\end{align*}

Finally, by the fact that $g_{\mathrm{all}}\in G$ and the difference-of-two reduction (Lemma~\ref{lem:diff-two}), we have
\begin{align*}
    \mathrm{Err}_{T}(g_{\mathrm{all}}) &\le \mathrm{MCerr}_{T}(G), \\
    \mathrm{Err}_{T}(w_\ell) &\le 2\,\mathrm{MCerr}_{T}(G) \quad \text{for each } \ell \in S.
\end{align*}

Moreover, Lemma~\ref{lem:walsh-subsample} gives
\[
\max_{r\in\{0,\dots,m\}}|\alpha_0(r)|\le 1
\qquad\text{and}\qquad
\sum_{\ell\in S}\max_{r\in\{0,\dots,m\}}|c_\ell(r)|
\le \log_2m.
\]

Combining these bounds yields, for a universal constant $C_{\ell_1}$,
\[
    \mathbb{E}[A] \le C_{\ell_1}\log(m+1)\cdot \mathbb{E}[\mathrm{MCerr}_{T}(G)]. \qedhere
\]
\end{proof}

\subsection{Truthfulness Requires Diverse Predictions}
\label{sec:diverse}

We next prepare to lower-bound the noise term
\(\sum_{v\in V_T}|U_v|\) in
Observation~\ref{obs:constant-group-bias-noise}. To preview, the adaptive bucketing theorem we present in the following
Section~\ref{sec:noise-control} will lower-bound this noise term in terms of the prediction-diversity parameter
\[
N=\sum_{v\in V_T}\sqrt{n_v}, \qquad \text{where $n_v:=|\{t\le T:p^t=v\}|$}.
\]
Thus, in this section, we show that the upper bound from Section~\ref{sec:walsh-l1-truthfulness} on the $\ell_1$-truthfulness $A:=\sum_{t\le T}|p^t-x^t|$ forces \(N\) to be large.
We do this by showing that an honest forecaster must use many moderately populated prediction buckets, resulting in \(N\) being large.

To begin, note that the ``honest'' prediction strategy of predicting $\mathbb{E}[y^t] = x^t$ at all rounds obtains high marginal calibration error: our instance has $x^t$ take on many different values, so the honest predictor accumulates noise-driven empirical bias in each of these many ``prediction bins.''

A natural question is whether a forecaster can do better by ``consolidating'' predictions---using fewer distinct prediction values to reduce the number of bins. The key insight of this section is that such consolidation is incompatible with the $\ell_1$-truthfulness constraint. A forecaster that stays close to honest predictions (in $\ell_1$) cannot concentrate its predictions on a small number of values; it must spread them out roughly as the honest forecaster would. Formally, we have:


\begin{lemma}[$\ell_1$-truthfulness implies diverse predictions]
\label{cor:N-from-A}
For any predictor, it holds that:
\[ 
\mathbb{E}[N]\ \ge\ \frac{T}{4 \sqrt{\mathbb{E}[A] + \frac{T}{m} + 1}}.
\]
\end{lemma}

\begin{proof}
Sort the pairs $(x_t,p_t)$ by the mean values, writing
$
x_{(1)} \le \cdots \le x_{(T)}
$
for the nondecreasing rearrangement of the multiset $\{x_t\}_{t\le T}$ and $p_{(1)},\dots,p_{(T)}$ for the correspondingly permuted predictions. Rather than working with the sorted contexts directly, it will be convenient to instead introduce the following uniformly spaced proxy sequence:
\[
z_i := \frac14 + \frac{i-1}{2(T-1)}, \qquad i=1,\dots,T, \quad \text{and let $d := z_{i+1}-z_i = 1/(2(T-1))$.}
\]

\begin{claim}
\label{claim:cl1}
The uniformly spaced proxy sequence is close to the sorted context sequence:
\[
|x_{(i)} - z_i| \le \frac1m \qquad \text{for all } i=1,\dots,T.
\]
\end{claim}

\begin{proof}[Proof of Claim~\ref{claim:cl1}]
    Write $T = qm + r$ with $q := \lfloor T/m\rfloor$ and $0 \le r < m$. Since we pick $m \le T^{1/3}$, we have $q \ge m-1$. In sorted order, each grid point $x_j$ appears
$b_j := q + \mathbf{1}[j \le r]$
times in the block
\[
I_j := \{s_j,\dots,s_j+b_j-1\},
\qquad
s_j := 1 + (j-1)q + \min\{r,j-1\}.
\]
Thus $x_{(i)} = x_j$ for all $i \in I_j$. Since $(z_i)$ is increasing with step $d$ and
$b_j - 1 \le \frac{T-1}{m},$
we thus have for every $i \in I_j$ that
$|z_i - z_{s_j}| \le (b_j-1)d \le \frac{1}{2m},$
and so the claim will follow if we can show $|x_j-z_{s_j}| \le 1/(2m)$. 

To prove this, let $a:=j-1$. Consider first the case $a \le r$. Then, $s_j-1=a(q+1)$, so
\[
x_j-z_{s_j} = \frac{a(q+r-m)}{2(m-1)(T-1)}.
\]
If $r=0$ then $a=0$ hence $x_j-z_{s_j} = 0$; otherwise, $q+r-m \ge 0$ hence
$|x_j-z_{s_j}| \le \frac{aq}{2(m-1)(T-1)} \le \frac{1}{2m}.$

It remains to consider the case $a>r$. In this case, $s_j-1=aq+r$, so
\[
x_j-z_{s_j} = \frac{a(q+r-1)-r(m-1)}{2(m-1)(T-1)}.
\]
The numerator is at least $0$ because
$a(q+r-1)-r(m-1) = (a-r)(q+r-1) + r(q-m+1) \ge 0,$
and it is at most $(m-1)(q-1)$ because
\[
a(q+r-1)-r(m-1) = a(q-1) + r(a-m+1) \le a(q-1) \le (m-1)(q-1).
\]
Therefore,
$|x_j-z_{s_j}| \le \frac{q-1}{2(T-1)} \le \frac{1}{2m}$,
and our claim follows.
\end{proof}

As a direct consequence of Claim~\ref{claim:cl1}, using $|u-v| \ge |u-w| - |w-v|$ with $(u,v,w)=(p_{(i)},x_{(i)},z_i)$ and summing over $i$ gives the following bound on $A$:
\[
A = \sum_{i=1}^T |p_{(i)}-x_{(i)}| \ge \sum_{i=1}^T |p_{(i)}-z_i| - \frac{T}{m}.
\]
Now, for each $v \in V_T$, let
$S_v := \{i : p_{(i)} = v\}, \text{ such that } n_v := |S_v|,$
and write $S_v = \{i_1<\cdots<i_{n_v}\}$. Setting $a_j := z_{i_j}$, we have $a_{j+1}-a_j \ge d$ for all $j$. Hence
\[
\sum_{i\in S_v} |v-z_i| = \sum_{j=1}^{n_v} |v-a_j|
\ge \min_{u\in\mathbb{R}} \sum_{j=1}^{n_v} |u-a_j|.
\]
The minimum is attained at a median. Pairing the left and right halves around a median gives
\[
\min_{u\in\mathbb{R}} \sum_{j=1}^{n_v} |u-a_j|
\ge \sum_{j=1}^{\lfloor n_v/2\rfloor} (a_{n_v+1-j}-a_j)
\ge d\sum_{j=1}^{\lfloor n_v/2\rfloor} (n_v+1-2j)
= d\Big\lfloor \frac{n_v^2}{4} \Big\rfloor
\ge \frac{d}{4}(n_v^2-1).
\]
Summing this over $v$ (and using that $d = 1/(2(T-1)) \ge 1/(4T)$ for $T\ge 2$ and $|V_T|\le T$) yields the following bound on $A$:
\[
A + \frac{T}{m} \ge 
\sum_{i=1}^T |p_{(i)}-z_i|
= \sum_{v\in V_T} \sum_{i\in S_v} |v-z_i|
\ge \frac{d}{4}\sum_{v\in V_T}(n_v^2-1)
\ge \frac{1}{16T}\sum_{v\in V_T} n_v^2 - \frac14.
\]
Rearranging and replacing the constant $1/4$ by $1$ for a better-looking expression, we get
\[
\sum_{v\in V_T} n_v^2 \le 16T\left(A + \frac{T}{m} + 1\right).
\]
To convert this inequality into a lower bound on the predictor's diversity $N$ in terms of $A$, we perform the following algebraic trick. Notice that $\sum_v n_v = T$, and that by Hölder's inequality,
\[
T^3 =
\left(\sum_{v\in V_T} n_v\right)^3
\le \left(\sum_{v\in V_T} \sqrt{n_v}\right)^2 \left(\sum_{v\in V_T} n_v^2\right)
= N^2 \sum_{v\in V_T} n_v^2.
\]
Rearranging and applying the above upper bound on $\sum_{v\in V_T} n_v^2$ then gives
\[
N \ge \frac{T^{3/2}}{\sqrt{\sum_v n_v^2}}
\ge \frac{T}{4\sqrt{A+T/m+1}}.
\]
Finally, $u \mapsto (u+T/m+1)^{-1/2}$ is convex on $[0,\infty)$, so Jensen's inequality yields
\[
\mathbb{E}[N]
\ge \frac{T}{4}\,\mathbb{E}\big[(A+T/m+1)^{-1/2}\big]
\ge \frac{T}{4\sqrt{\mathbb{E}[A]+T/m+1}}. \qedhere
\]
\end{proof}

\subsection{Controlling the Noise Contribution under Adaptive Bucketing}
\label{sec:noise-control}

We now lower-bound the noise term
$\sum_{v\in V_T}|U_v|$
from Observation~\ref{obs:constant-group-bias-noise}. Section~\ref{sec:diverse} shows that small bias
\(A\) forces the diversity parameter \(N=\sum_v\sqrt{n_v}\) to be large. The main result of this
section shows that large \(N\)
forces large bucketed noise --- even if the forecaster tries to prevent this.

The core difficulty is that the forecaster may perform \emph{adaptive noise bucketing}: it observes past noise realizations $Z_1, \ldots, Z_{t-1}$ before choosing which ``bucket'' (prediction value) to place the next noise increment $Z_t$ in. In principle, this adaptivity could allow the forecaster to create cancellations---for example, by attempting to arrange for cancellations of positive noise increments  with negative cumulative sums, and vice versa.

The main technical result of this section (Theorem~\ref{thm:revisit}) shows that no such strategy can substantially reduce the total bucketed noise magnitude. Up to a logarithmic factor, the noise contribution $\sum_v |U_v|$ must be at least as large as $\sum_v \sqrt{n_v}$, where $n_v$ is the number of noise increments routed to bucket $v$. This matches (up to logs) what would happen under non-adaptive routing.

The proof proceeds by analyzing the ``returns to zero'' of each bucket's random walk: each time a bucket's cumulative sum returns to zero, it starts a fresh excursion. Using classical random walk estimates, we show that the expected square-root length of each excursion is $O(\log L)$, which limits how much the forecaster can save by adaptive routing.

\paragraph{Adaptive bucketing}

We now isolate the probabilistic statement we need as an abstract ``noise routing'' problem over a horizon of length $L$.
Let $(Z_t)_{t=1}^L$ be i.i.d.\ with $\mathbb{E}[Z_t]=0$ and $Z_t\in\{\pm h\}$ for some $h\in(0,1]$.
Let $\mathcal{H}$ be any $\sigma$-field independent of $(Z_t)_{t=1}^L$, and suppose a (possibly randomized)
strategy chooses a bucket label $v_t$ on each round $t$ \emph{before} observing $Z_t$, i.e.\ $v_t$ is
measurable with respect to
\[
\mathcal{F}_{t-1} := \mathcal{H} \,\vee\, \sigma(Z_1,\ldots,Z_{t-1}),\qquad t=1,\ldots,L.
\]
Recall that the bucket counts and bucket noise sums are defined as
\[
n_v := |\{t\in\{1,\ldots,L\}: v_t=v\}|,
\qquad
U_v := \sum_{t:\,v_t=v} Z_t.
\]

Now, intuitively, observe that if the buckets $(v_t)_{t=1}^L$ were fixed independently of $(Z_t)_{t=1}^L$, then each $U_v$ would be a length-$n_v$ simple random walk with step size $h$, so typically $|U_v|=\Theta(h\sqrt{n_v})$ and $\sum_v |U_v|=\Theta\!\left(h\sum_v \sqrt{n_v}\right)$. The following key theorem that we prove below shows that even under adaptive bucketing this baseline can be reduced by at most an $O(\log L)$ factor.

\begin{theorem}[Adaptive Noise Bucketing]
\label{thm:revisit}
There exists a universal constant $C_{\mathrm{rev}}>0$ such that for every $L\ge 2$, every $h\in(0,1]$,
and every adaptive bucketing strategy as above (such that every $v_t$ is $\mathcal{F}_{t-1}$-measurable), it holds that
\[
\mathbb{E}\!\left[\sum_v |U_v|\right]
\;\ge\;
\frac{C_{\mathrm{rev}} \, h}{\log(L+1)}\;
\mathbb{E}\!\left[\sum_v \sqrt{n_v}\right].
\]
\end{theorem}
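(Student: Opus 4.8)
## Proof Plan for Theorem~\ref{thm:revisit}

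\textbf{Overall strategy.}\textbf{Overview.} The plan is to reduce the statement to a clean estimate about the zero-set of a simple random walk, via a discrete Tanaka (occupation-time) identity and optional stopping, and then to extract the $\log(L+1)$ factor from the heavy tail of simple-random-walk return times. Throughout I write $S_t:=Z_t/h\in\{\pm1\}$; since the auxiliary $\sigma$-field $\mathcal{H}$ is independent of $(Z_t)$ and does not appear in the final bound, it suffices to prove the inequality conditionally on $\mathcal{H}$, so I may assume each $v_t$ is a deterministic function of $(Z_1,\dots,Z_{t-1})$.

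\textbf{Step 1: reindex each bucket by its own clock.} For each bucket value $v$, let $T^v_1<T^v_2<\cdots$ be the (stopping) times at which an increment is routed to $v$, let $n_v$ be the number of such times in $\{1,\dots,L\}$, and let $W^v_k:=\sum_{j\le k}S_{T^v_j}$ be the bucket-$v$ walk in its own local clock. Because $Z_t$ is a fair $\pm h$ sign conditionally on $\mathcal{F}_{t-1}$ no matter which bucket it is assigned to, $(W^v_k)_k$ is, in its own clock, a symmetric $\pm1$ random walk started at $0$, and $n_v\le L$ is a (randomized, bounded) stopping rule for it.

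\textbf{Step 2: Tanaka identity $\Rightarrow$ $\mathbb{E}|B_v|=h\,\mathbb{E}[Z^v]$.} Apply the discrete Tanaka identity $|W^v_{n_v}|=M^v_{n_v}+Z^v$, where, with $\operatorname{sgn}(0):=0$, $M^v_k:=\sum_{j=0}^{k-1}\operatorname{sgn}(W^v_j)(W^v_{j+1}-W^v_j)$ is a martingale and $Z^v:=\#\{0\le k<n_v:W^v_k=0\}\ge1$ counts the zero-visits of bucket $v$. A termwise optional-stopping argument — the summand $\mathbf{1}[n_v>k]\operatorname{sgn}(W^v_k)(W^v_{k+1}-W^v_k)$ has conditional mean zero given the history just before the $(k{+}1)$-th increment to $v$, and only $k<L$ contribute — yields $\mathbb{E}[M^v_{n_v}]=0$, hence $\mathbb{E}|B_v|=h\,\mathbb{E}[Z^v]$. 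Summing over $v$,
\[
\mathbb{E}\Big[\sum_v|B_v|\Big]=h\,\mathbb{E}[Z_{\mathrm{tot}}],\qquad Z_{\mathrm{tot}}:=\sum_v Z^v .
\]

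\textbf{Step 3: bound $\sum_v\sqrt{n_v}$ by $Z_{\mathrm{tot}}$ up to a $\log$.} Decompose each bucket-$v$ walk into its excursions away from $0$, of lengths $\ell^v_1,\dots,\ell^v_{r_v}$ (the last possibly incomplete). Then $n_v=\sum_i\ell^v_i$, so $\sqrt{n_v}\le\sum_i\sqrt{\ell^v_i}$ by subadditivity of $\sqrt{\cdot}$, and the total number of excursions satisfies $\sum_v r_v\le 2Z_{\mathrm{tot}}$ (each excursion begins at a distinct zero-visit). The crucial point is that, conditioned on the history up to the start of any particular excursion $e$ (at which instant the relevant bucket is at zero), its length $\ell_e$ is stochastically dominated by $\tau_0\wedge L$, where $\tau_0$ is the first return time to $0$ of a fresh symmetric random walk: re-routing to other buckets can only shorten the excursion, and it cannot exceed the horizon $L$. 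Using the classical tail $\mathbb{P}(\tau_0\ge k)\le c/\sqrt{k}$ and $\sqrt{k}-\sqrt{k-1}\le 1/(2\sqrt{k})$ for $k\ge2$,
\[
\mathbb{E}\big[\sqrt{\tau_0\wedge L}\big]\;=\;\sum_{k\ge1}(\sqrt{k}-\sqrt{k-1})\,\mathbb{P}(\tau_0\wedge L\ge k)\;\le\;1+\frac{c}{2}\sum_{k=2}^{L}\frac1k\;\le\;C\log(L+1).
\]
Summing the conditional bound over a fixed enumeration of potential excursions (each ``$e$ occurs'' event being measurable with respect to the history at its start) gives
\[
\mathbb{E}\Big[\sum_v\sqrt{n_v}\Big]\;\le\;\mathbb{E}\Big[\sum_e\sqrt{\ell_e}\Big]\;\le\;C\log(L+1)\,\mathbb{E}\Big[\sum_v r_v\Big]\;\le\;2C\log(L+1)\,\mathbb{E}[Z_{\mathrm{tot}}]\;=\;\frac{2C\log(L+1)}{h}\,\mathbb{E}\Big[\sum_v|B_v|\Big],
\]
and rearranging proves the theorem with $C_{\mathrm{rev}}:=1/(2C)$.

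\textbf{Main obstacle.} The probabilistic heart is elementary once set up (Tanaka plus the return-time tail), and the $\log(L+1)$ is genuinely tight and comes precisely from the logarithmic divergence of $\mathbb{E}[\sqrt{\tau_0\wedge L}]$. The delicate part is the measure-theoretic bookkeeping of Steps 1--3 under \emph{adaptive} routing: justifying that each bucket walk is a simple random walk in its own clock with $n_v$ a legitimate bounded stopping rule, making the termwise optional-stopping argument rigorous, and handling the sum over a \emph{random} collection of excursions (so that one may bound each $\mathbb{E}[\sqrt{\ell_e}\mid\text{start of }e]$ separately and then sum over occurrences). Care is also needed because excursions of a given bucket may be spread over non-contiguous real time; this is harmless since everything is phrased in the bucket's local clock, but it must be stated carefully.
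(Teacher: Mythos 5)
Your proof is correct and follows essentially the same route as the paper: your Step~2 (discrete Tanaka identity plus termwise optional stopping, giving the exact identity $\mathbb{E}[\sum_v|B_v|]=h\,\mathbb{E}[Z_{\mathrm{tot}}]$) is an equality version of the paper's potential-function drift argument (Lemma~\ref{lem:revisit-drift}, which proves the same reduction as an inequality via convexity), since the zero-visit count $Z_{\mathrm{tot}}$ coincides with the paper's $L_\varepsilon$. Your Step~3 (excursion decomposition, subadditivity of $\sqrt{\cdot}$, stochastic domination by $\tau_0\wedge L$, and the bound $\mathbb{E}[\sqrt{\tau_0\wedge L}]\le C\log(L+1)$) matches the paper's Lemmas~\ref{lem:revisit-subadd} and~\ref{lem:revisit-exc-Olog} together with Proposition~\ref{prop:revisit-return}.
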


In Corollary~\ref{cor:noise-floor}, we will apply this theorem with $h=1/4$ to the label-noise increments $Z_t:=x^t-y^t$ on the full horizon $t=1,\ldots,T$, with bucket labels given by the realized predictions $v_t:=p^t$.

\subsubsection{Proof of Theorem~\ref{thm:revisit}: Outline}

\noindent
\textbf{Step 1: Reduce to bounding the expected number of returns.}
For each bucket $v$, define its cumulative sum up to each time $t$ as
\[
U_v(t) := \sum_{s\le t: v_s=v} Z_s,\qquad t=0,1,\ldots,L,
\]
and write $U_v := U_v(L)$. Fix $\varepsilon := h/4$, so for any $x\in h\mathbb{Z}$ we have
$|x|\le \varepsilon \iff x=0$. Define
\[
L_\varepsilon := \sum_{t=1}^L \mathbf{1}\!\left[|U_{v_t}(t-1)|\le \varepsilon\right]
= \sum_{t=1}^L \mathbf{1}\!\left[U_{v_t}(t-1)=0\right].
\]

\noindent
In Lemma~\ref{lem:revisit-drift} we show that on rounds with $U_{v_t}(t-1)=0$, the conditional expected increase in $\sum_v |U_v|$ is $h$, while on other rounds it is nonnegative, and so
\[
\mathbb{E} \Big[\sum_v |U_v| \Big] \;\ge\; h\,\mathbb{E}[L_\varepsilon].
\]

\noindent
\textbf{Step 2: Excursions imply $L_\varepsilon$ must be large if $\sum_v\sqrt{n_v}$ is large.}
For a bucket $v$, the evolution of its noise \emph{at the times it is selected} is a simple random walk on $h\mathbb{Z}$ starting at $0$. Each time it returns to $0$
it starts a new \emph{excursion}. Let $R_v$ be the number of excursions for bucket $v$, so that $L_\varepsilon=\sum_v R_v$.
Writing the excursion lengths for bucket $v$ as $\ell^v_1,\ldots,\ell^v_{R_v}$, so that
$n_v=\sum_{j=1}^{R_v}\ell^v_j$, we have
$
\sqrt{n_v} \;\le\; \sum_{j=1}^{R_v} \sqrt{\ell^v_j}.
$
To bound this quantity for each $v$, we prove a truncated return-time bound for a simple random walk: for an
excursion length $\ell$ (truncated at horizon $L$) one has:
\[
\mathbb{E}[\sqrt{\ell}] \;\le\; c \log(L+1) \text{ for a universal constant } c > 0.
\]
Applying this excursion-by-excursion, summing over buckets, and recalling Step~1 then yields
\[
\mathbb{E}\!\left[\sum_v \sqrt{n_v}\right] \;\le\; c  \; \log(L+1)\; \mathbb{E}[L_\varepsilon] \quad \implies \quad
\mathbb{E} \Big[\sum_v |U_v| \Big] \;\ge\; \frac{h}{c \; \log(L+1)} \mathbb{E}\!\left[\sum_v \sqrt{n_v}\right].
\]

\subsubsection{Proof of Theorem~\ref{thm:revisit}: Step 1}

\begin{lemma}[Lower-bounding noise by returns count]
\label{lem:revisit-drift}
Under the setup defined above, we have:
\[
\mathbb{E}\Big[\sum_v |U_v|\Big] \ge h\,\mathbb{E}[L_\varepsilon].
\]
\end{lemma}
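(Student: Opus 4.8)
The plan is to introduce the running potential $\Phi_t := \sum_v |B_v(t)|$, where $B_v(t):=\sum_{s\le t:\,v_s=v} Z_s$, and to show that $(\Phi_t)_{t=0}^L$ is a submartingale whose conditional one-step drift is exactly $h$ on precisely the rounds counted by $L_\varepsilon$ — namely those $t$ with $B_{v_t}(t-1)=0$ — and $0$ on all other rounds. Since $\Phi_0=0$ and $\Phi_L=\sum_v |B_v|$, telescoping and taking expectations then yields $\mathbb{E}\big[\sum_v |B_v|\big]=h\,\mathbb{E}[L_\varepsilon]$, which is in fact stronger than the claimed inequality.

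The first step is to observe that at round $t$ only the single bucket $v_t$ is modified, so pathwise $\Phi_t-\Phi_{t-1} = |B_{v_t}(t-1)+Z_t| - |B_{v_t}(t-1)|$. Conditioning on $\mathcal{F}_{t-1}$: the label $v_t$ is $\mathcal{F}_{t-1}$-measurable by hypothesis, and each $B_v(t-1)$ depends only on $v_1,\dots,v_{t-1}$ and $Z_1,\dots,Z_{t-1}$, hence $b:=B_{v_t}(t-1)\in h\mathbb{Z}$ is $\mathcal{F}_{t-1}$-measurable, while $Z_t$ is an independent symmetric $\pm h$ increment. Writing $b=kh$ with $k\in\mathbb{Z}$, a one-line computation gives
\[
\mathbb{E}\bigl[\Phi_t-\Phi_{t-1}\,\big|\,\mathcal{F}_{t-1}\bigr]
=\tfrac{h}{2}\bigl(|k+1|+|k-1|-2|k|\bigr),
\]
which equals $h$ when $k=0$ and $0$ otherwise, by convexity of $k\mapsto|k|$ (and its local affineness away from the origin). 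Since $\varepsilon=h/4<h$ and $b\in h\mathbb{Z}$, the event $\{b=0\}$ coincides with $\{|b|\le\varepsilon\}$, so the drift is $h\,\mathbf{1}\!\left[|B_{v_t}(t-1)|\le\varepsilon\right]\ge 0$.

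The final step is to sum over $t=1,\dots,L$, take total expectations, use $\Phi_0=0$, and recognize $\sum_{t=1}^L\mathbb{P}\!\left(|B_{v_t}(t-1)|\le\varepsilon\right)=\mathbb{E}[L_\varepsilon]$ to conclude. There is no serious obstacle: the only points requiring care are (i) the measurability claims that justify the conditioning — that $v_t$ and $B_{v_t}(t-1)$ are $\mathcal{F}_{t-1}$-measurable and $Z_t$ is conditionally symmetric given $\mathcal{F}_{t-1}$ — and (ii) the observation that only finitely many buckets $v$ are ever used (at most $L$), so all the sums defining $\Phi_t$ are finite and the telescoping is valid. The exact drift identity is the one genuinely substantive line, and it is elementary.
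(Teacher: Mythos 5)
Your proof is correct and takes essentially the same route as the paper: the same potential $\Phi_t=\sum_v|B_v(t)|$, the same pathwise one-step decomposition $\Phi_t-\Phi_{t-1}=|B_{v_t}(t-1)+Z_t|-|B_{v_t}(t-1)|$, and the same use of $\varepsilon=h/4<h$ together with the lattice $B_{v_t}(t-1)\in h\mathbb{Z}$ to identify $\{|B_{v_t}(t-1)|\le\varepsilon\}$ with $\{B_{v_t}(t-1)=0\}$. The only difference is cosmetic: you compute the exact drift $\tfrac{h}{2}(|k+1|+|k-1|-2|k|)$ using the symmetric $\pm h$ law and observe it is exactly $0$ off the origin, while the paper simply invokes convexity of $|\cdot|$ to get nonnegativity there; your version thus yields the sharper equality $\mathbb{E}[\sum_v|B_v|]=h\,\mathbb{E}[L_\varepsilon]$, but this refinement is not used downstream.
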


\begin{proof}
Let $\Phi_t := \sum_v |U_v(t)|$ for $0\leq t \leq L$, with $\Phi_0=0$ and $\sum_v |U_v|=\Phi_L$. Since only bucket $v_t$ changes at time $t$, the one-round increment satisfies:
\[
\Phi_t-\Phi_{t-1} = \big|U_{v_t}(t-1)+Z_t\big|-\big|U_{v_t}(t-1)\big|.
\]

Condition on $\mathcal{F}_{t-1}$ and abbreviate $b:=U_{v_t}(t-1)$. If $|b|\le \varepsilon$, then
$b=0$ (as $\varepsilon=h/4$ and $b\in h\mathbb{Z}$), so $\Phi_t-\Phi_{t-1}=|Z_t|=h$
deterministically. Otherwise, by convexity of $x\mapsto |x|$ and $\mathbb{E}[Z_t\mid\mathcal{F}_{t-1}]=0$,
\[
\mathbb{E}[\Phi_t-\Phi_{t-1}\mid\mathcal{F}_{t-1}]
=\mathbb{E}[|b+Z_t|\mid\mathcal{F}_{t-1}]-|b|
\ge 0.
\]

Therefore, for every $t$,
\[
\mathbb{E}[\Phi_t-\Phi_{t-1}\mid \mathcal{F}_{t-1}] \ge h\cdot 1\big[|U_{v_t}(t-1)|\le \varepsilon\big].
\]
Taking expectations, summing over $t$, and telescoping yields $\mathbb{E}\Big[\sum_v |U_v|\Big] = \mathbb{E}[\Phi_L]\ge h\,\mathbb{E}[L_\varepsilon]$.
\end{proof}

\subsubsection{Proof of Theorem~\ref{thm:revisit}: Step 2}

Fix a bucket \(v\) with \(n_v \ge 1\). Enumerate its update times as \(1 \le t_{v,1} < \cdots < t_{v,n_v} \le L\), setting \(t_{v,0} := 0\)
for convenience. Define the local-time partial sums \(S_0^v := 0\) and
\(S_k^v := \sum_{i=1}^k Z_{t_{v,i}}\) for \(k = 1,\ldots,n_v\). Thus \(U_v = S_{n_v}^v\).

Define the local ``near-zero-before-update'' indicators
\[
A_k^v := \mathbf{1}\!\left\{|S_{k-1}^v|\le \varepsilon\right\}
= \mathbf{1}\!\left[S_{k-1}^v=0\right],\qquad k=1,\ldots,n_v,
\]
where the equality uses $S_{k-1}^v\in h\mathbb{Z}$ and the choice $\varepsilon=h/4$.

Define the corresponding count $R_v$:
\[
  R_v := \sum_{k=1}^{n_v} A^v_k, \quad \text{so that } L_\varepsilon = \sum_v R_v .
\]
Define renewal indices by \(\kappa_1^v := 1\) and, for \(j \ge 1\),
\[
\kappa_{j+1}^v := \min\{k > \kappa_j^v : A_k^v = 1\},
\]
with the convention \(\kappa_{R_v+1}^v := n_v + 1\). Define excursion lengths
\[
\ell_j^v := \kappa_{j+1}^v - \kappa_j^v, \qquad j = 1,\ldots,R_v.
\]

\begin{lemma}[Subadditivity decomposition]\label{lem:revisit-subadd}
For every bucket $v$,
$\sqrt{n_v}\ \le\ \sum_{j=1}^{R_v} \sqrt{\ell^v_j}.$
Consequently,
\begin{equation*}
  \sum_v \sqrt{n_v}\ \le\ \sum_v\ \sum_{j=1}^{R_v} \sqrt{\ell^v_j}.
\end{equation*}
\end{lemma}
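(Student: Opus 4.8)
The plan is to reduce the lemma to the elementary subadditivity of the square root, after first establishing the combinatorial identity $n_v=\sum_{j=1}^{R_v}\ell^v_j$ for each bucket $v$ with $n_v\ge 1$. For the identity, observe that $S_0^v=0$ forces $A_1^v=\mathbf{1}[|S_0^v|\le\varepsilon]=1$, so $\kappa_1^v=1$ and $R_v\ge 1$. The renewal indices satisfy $1=\kappa_1^v<\kappa_2^v<\cdots<\kappa_{R_v}^v\le n_v<n_v+1=\kappa_{R_v+1}^v$ and are, by construction, exactly the $k\in\{1,\dots,n_v\}$ with $A_k^v=1$; hence the half-open blocks $\{\kappa_j^v,\dots,\kappa_{j+1}^v-1\}$ for $j=1,\dots,R_v$ partition $\{1,\dots,n_v\}$, and the excursion lengths telescope:
\[
\sum_{j=1}^{R_v}\ell^v_j=\sum_{j=1}^{R_v}\bigl(\kappa_{j+1}^v-\kappa_j^v\bigr)=\kappa_{R_v+1}^v-\kappa_1^v=n_v.
\]
For any bucket with $n_v=0$ both sides of the claimed inequality vanish, so there is nothing to prove in that case.

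For the second step I would invoke the pointwise inequality that, for any finitely many nonnegative reals $(a_j)_j$, one has $\sqrt{\sum_j a_j}\le\sum_j\sqrt{a_j}$; this is immediate upon squaring the right-hand side, since $\bigl(\sum_j\sqrt{a_j}\bigr)^2=\sum_j a_j+\sum_{j\ne k}\sqrt{a_j a_k}\ge\sum_j a_j$. Applying it with $a_j=\ell^v_j$ and using the identity from the first step gives $\sqrt{n_v}\le\sum_{j=1}^{R_v}\sqrt{\ell^v_j}$ for each bucket $v$. Summing this over all buckets $v$ yields the stated consequence $\sum_v\sqrt{n_v}\le\sum_v\sum_{j=1}^{R_v}\sqrt{\ell^v_j}$.

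There is no genuine obstacle in this lemma: the only point requiring a moment's care is verifying that the renewal decomposition really is a partition of the $n_v$ update times — in particular that the convention $\kappa_{R_v+1}^v:=n_v+1$ makes the telescoping sum close exactly at $n_v$ — and this is immediate from the definitions preceding the statement. All of the substantive work (bounding $\mathbb{E}[\sqrt{\ell^v_j}]$ by $O(\log(L+1))$ via truncated return-time estimates for the simple random walk, then recombining with Lemma~\ref{lem:revisit-drift}) is carried out in the subsequent lemmas, not here.
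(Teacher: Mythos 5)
Your proof is correct and takes essentially the same approach as the paper: establish the identity $n_v=\sum_{j=1}^{R_v}\ell^v_j$ by telescoping the renewal indices, apply subadditivity of the square root, and sum over buckets. The paper states the identity as holding ``by construction'' without spelling out the telescoping, so your version is simply a more explicit rendering of the same argument.
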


\begin{proof}
By construction, we have $n_v = \sum_{j=1}^{R_v} \ell^v_j.$ Now applying the inequality $\sqrt{a+b}\le \sqrt a+\sqrt b$ for $a,b\ge 0$, iterated over the $R_v$ terms,
and summing over $v$ yields the inequality.
\end{proof}

The following result bounds the expected truncated root-length of a random walk excursion.

\begin{proposition}[Return-time bound for increments]\label{prop:revisit-return}
Let $h\in(0,1]$ and let $(X_k)_{k\ge1}$ be i.i.d.\ with
$\mathbb{P}(X_k=h)=\mathbb{P}(X_k=-h)=1/2$.
Let $S_n:=\sum_{k=1}^n X_k$ with $S_0:=0$ and define the first return time
\[
  \tau_0 := \inf\{n\ge 1 : S_n=0\}.
\]
Then there exists a universal constant $C_{\mathrm{ret}}>0$ such that for all integers $L\ge 2$,
\[
  \mathbb{E}\big[\sqrt{\min(\tau_0,L)}\big]\ \le\ C_{\mathrm{ret}}\log(L+1).
\]
Moreover, the same bound holds conditionally on any past $\sigma$-field $\mathcal{G}$
that is independent of the future increments $(X_k)_{k\ge1}$ (so that, given $\mathcal{G}$,
the increments remain i.i.d.\ with the same $\{\pm h\}$ law).
\end{proposition}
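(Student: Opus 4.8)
The plan is to reduce the statement to a classical estimate for the return time of the simple symmetric random walk on $\mathbb{Z}$, which is scale-invariant under $h$. First I would observe that $(S_n/h)_{n\ge 0}$ is exactly a simple symmetric random walk on $\mathbb{Z}$ starting at $0$, and since the first return time $\tau_0$ depends only on the sign pattern of the increments, it has the same distribution regardless of $h\in(0,1]$. So it suffices to prove the bound for $h=1$. The conditional version is immediate: conditioning on any $\mathcal G$ independent of the future increments leaves the law of $(X_k)_{k\ge1}$ unchanged, so the same unconditional bound applies verbatim.

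Next, the heart is to control $\mathbb{E}[\sqrt{\min(\tau_0,L)}]$. I would split
\[
\mathbb{E}\big[\sqrt{\min(\tau_0,L)}\big]
=\sum_{n\ge 1}\sqrt{n}\,\mathbb{P}(\tau_0=n)\mathbf{1}[n< L] + \sqrt{L}\,\mathbb{P}(\tau_0\ge L),
\]
but it is cleaner to write $\sqrt{\min(\tau_0,L)}=\sum_{k=1}^{L}\big(\sqrt{k}-\sqrt{k-1}\big)\mathbf{1}[\tau_0\ge k]$ via Abel summation, using $\sqrt{k}-\sqrt{k-1}\le \tfrac{1}{\sqrt{k}}$, so that
\[
\mathbb{E}\big[\sqrt{\min(\tau_0,L)}\big]\ \le\ \sum_{k=1}^{L}\frac{\mathbb{P}(\tau_0\ge k)}{\sqrt{k}}.
\]
The classical fact (e.g.\ from the reflection principle / ballot-type identities for the SSRW, or from $\mathbb{P}(\tau_0 > 2m)=\binom{2m}{m}2^{-2m}=\Theta(m^{-1/2})$) gives $\mathbb{P}(\tau_0\ge k)\le C/\sqrt{k}$ for a universal constant $C$ and all $k\ge 1$. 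Substituting, $\mathbb{E}[\sqrt{\min(\tau_0,L)}]\le C\sum_{k=1}^L 1/k\le C(1+\log L)\le C_{\mathrm{ret}}\log(L+1)$, which is the claim. (The $\log(L+1)$ form absorbs the $+1$ for the $k=1$ term and handles $L=2$.)

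The main obstacle — though it is a mild one — is pinning down the tail estimate $\mathbb{P}(\tau_0\ge k)=O(k^{-1/2})$ for the simple symmetric walk with the right uniform constant, and in particular handling parity: $\tau_0$ is always even, so $\mathbb{P}(\tau_0\ge k)=\mathbb{P}(\tau_0\ge 2\lceil k/2\rceil)=\mathbb{P}(\tau_0 > 2(\lceil k/2\rceil-1))$, and then invoking the exact identity $\mathbb{P}(\tau_0>2m)=\binom{2m}{m}2^{-2m}$ together with Stirling's bound $\binom{2m}{m}2^{-2m}\le 1/\sqrt{\pi m}$ (with a trivial bound $\le 1$ for $m=0$) yields the desired $O(k^{-1/2})$ with an explicit constant. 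Everything else is bookkeeping: the scale-invariance reduction, the Abel summation, and the harmonic-sum bound are all routine.
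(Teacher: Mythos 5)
Your proposal is correct and takes essentially the same approach as the paper: rescale to the simple symmetric walk, apply Abel summation with $\sqrt{k}-\sqrt{k-1}\le 1/\sqrt{k}$, invoke the classical $O(k^{-1/2})$ tail for the first return time, and sum the resulting harmonic series; the conditional version follows the same way from independence. The only cosmetic difference is that the paper derives the tail bound by summing the exact PMF $\mathbb{P}(\tau_0=2n)=\tfrac{1}{2n-1}\binom{2n}{n}4^{-n}$, whereas you cite the equivalent closed-form tail identity $\mathbb{P}(\tau_0>2m)=\binom{2m}{m}2^{-2m}$ directly.
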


\begin{proof}
Consider the rescaled random walk: let $\tilde X_k := X_k/h\in\{\pm1\}$ and $\tilde S_n:=S_n/h$. Then, $\tau_0=\inf\{n\ge1:\tilde S_n=0\}$, which does not depend on $h$, so it suffices to prove the claim for $h=1$.

Recall the standard fact~\citep[Lemma~2 and Eq.~3.7, p.~78]{fellervol1} that for the simple symmetric walk with steps $\pm1$, its first return time $\tau_0$ is even a.s., and satisfies for $n\ge1$ that:
\[
  \mathbb{P}(\tau_0=2n)\ =\ \frac{1}{2n-1}\binom{2n}{n}4^{-n}.
\]

Using \(\binom{2n}{n} \le 4^n/\sqrt{\pi n}\), we obtain \(\mathbb{P}(\tau_0 = 2n) \le c_0 n^{-3/2}\) for a universal \(c_0>0\), so for all \(m \ge 1\),
\[
\mathbb{P}(\tau_0 > m)
= \sum_{k>\lceil m/2\rceil}\mathbb{P}(\tau_0 = 2k)
\le c_0\sum_{k>\lceil m/2\rceil} k^{-3/2}
\le \frac{c_2}{\sqrt{m}}
\]
for a universal \(c_2>0\). We now have for \(L\ge2\) that
\[
\mathbb{E}[\sqrt{\min(\tau_0,L)}]
= \sum_{m=0}^{L-1}(\sqrt{m+1}-\sqrt{m})\,\mathbb{P}(\min(\tau_0,L)>m)
= \sum_{m=0}^{L-1}(\sqrt{m+1}-\sqrt{m})\,\mathbb{P}(\tau_0>m).
\]
Since \(\sqrt{m+1}-\sqrt{m}\le (2\sqrt{m})^{-1}\) for \(m\ge1\), the above tail bound gives
\[
\mathbb{E}[\sqrt{\min(\tau_0,L)}]
\le 1 + \sum_{m=1}^{L-1}\frac{1}{2\sqrt{m}}\cdot \frac{c_2}{\sqrt{m}}
= 1 + \frac{c_2}{2}\sum_{m=1}^{L-1}\frac{1}{m}
\le C_{\rm ret}\log(L+1).
\]
If \(\mathcal{G}\) is any \(\sigma\)-field independent of the future increments \((X_k)_{k\ge1}\), then conditionally on \(\mathcal{G}\) the increments remain i.i.d. with the same law, so the same bound holds for \(\mathbb{E}[\sqrt{\min(\tau_0,L)}\mid \mathcal{G}]\).
\end{proof}

For the next lemma we append an auxiliary i.i.d.\ continuation $Z_{L+1},Z_{L+2},\dots$ of the noise process and
extend the filtration by
\[
\mathcal{F}_t := \mathcal{H}\,\vee\,\sigma(Z_1,\dots,Z_t),\qquad t\ge 0,
\]
where the added increments remain independent of $\mathcal{H}$ and of $(Z_t)_{t=1}^L$.

\begin{lemma}[Predictable subsequences of i.i.d.\ increments]\label{lem:predictable-subsequence}
Let $(\tau_r)_{r\ge 1}$ be strictly increasing finite random times such that
\[
\{\tau_r=t\}\in \mathcal{F}_{t-1}\qquad\text{for every }r\ge 1,\ t\ge 1.
\]
Then for every $r\ge 1$, conditional on $\mathcal{F}_{\tau_r-1}$, the tail
\[
Z_{\tau_r},\ Z_{\tau_{r+1}},\ Z_{\tau_{r+2}},\ \dots
\]
has the same finite-dimensional distributions as an i.i.d.\ $\{\pm h\}$ sequence. Equivalently, for every
$q\ge 1$ and bounded measurable functions $\varphi_1,\dots,\varphi_q$,
\[
\mathbb{E}\Big[\prod_{j=1}^q \varphi_j(Z_{\tau_{r+j-1}})\,\Bigm|\,\mathcal{F}_{\tau_r-1}\Big]
= \prod_{j=1}^q \mathbb{E}[\varphi_j(Z_1)].
\]
\end{lemma}

\begin{proof}
Fix $r\ge 1$, $q\ge 1$, and bounded measurable functions $\varphi_1,\dots,\varphi_q$.
Write $\mu_j := \mathbb{E}[\varphi_j(Z_1)]$.
We claim that for every bounded $\mathcal{F}_{\tau_r-1}$-measurable random variable $Y$,
\begin{equation}\label{eq:predictable-subsequence-factorization}
\mathbb{E}\Big[Y\prod_{j=1}^q \varphi_j(Z_{\tau_{r+j-1}})\Big]
= \mathbb{E}[Y]\prod_{j=1}^q \mu_j.
\end{equation}
This implies the stated conditional identity.

We prove \eqref{eq:predictable-subsequence-factorization} by induction on $q$.
For $q=1$, using that $Y\mathbf{1}[\tau_r=t]\in \mathcal{F}_{t-1}$ for each $t\ge 1$,
\[
\mathbb{E}[Y\varphi_1(Z_{\tau_r})]
= \sum_{t\ge 1}\mathbb{E}[Y\mathbf{1}[\tau_r=t]\varphi_1(Z_t)]
= \sum_{t\ge 1}\mathbb{E}\big[Y\mathbf{1}[\tau_r=t]\mathbb{E}[\varphi_1(Z_t)\mid \mathcal{F}_{t-1}]\big]
= \mu_1\,\mathbb{E}[Y].
\]

Now assume \eqref{eq:predictable-subsequence-factorization} holds for $q-1$.
Then
\begin{align*}
\mathbb{E}\Big[Y\prod_{j=1}^q \varphi_j(Z_{\tau_{r+j-1}})\Big]
&= \sum_{t\ge 1}\mathbb{E}\Big[
Y\Big(\prod_{j=1}^{q-1} \varphi_j(Z_{\tau_{r+j-1}})\Big)\mathbf{1}[\tau_{r+q-1}=t]\varphi_q(Z_t)\Big].
\end{align*}
On the event $\{\tau_{r+q-1}=t\}$ all earlier selected times $\tau_r,\dots,\tau_{r+q-2}$ are $<t$, so the factor preceding
$\varphi_q(Z_t)$ is $\mathcal{F}_{t-1}$-measurable. Therefore
\begin{align*}
\mathbb{E}\Big[Y\prod_{j=1}^q \varphi_j(Z_{\tau_{r+j-1}})\Big]
&= \sum_{t\ge 1}\mathbb{E}\Big[
Y\Big(\prod_{j=1}^{q-1} \varphi_j(Z_{\tau_{r+j-1}})\Big)\mathbf{1}[\tau_{r+q-1}=t]
\mathbb{E}[\varphi_q(Z_t)\mid \mathcal{F}_{t-1}]\Big] \\
&= \mu_q\,\mathbb{E}\Big[Y\prod_{j=1}^{q-1} \varphi_j(Z_{\tau_{r+j-1}})\Big].
\end{align*}
Applying the induction hypothesis completes the proof.
\end{proof}

Having proved Proposition \ref{prop:revisit-return}, we now relate it back to the cumulative bucket noise in our setup.

\begin{lemma}[Excursions have logarithmic expected root-length]\label{lem:revisit-exc-Olog}
Under the assumptions of Theorem~\ref{thm:revisit} and with the choice of $\varepsilon$ fixed above, there exists a constant $C_{\mathrm{ret}}>0$ such that 
\begin{equation}\label{eq:revisit-denom-Leta}
  \mathbb{E}\left[\sum_v \sqrt{n_v}\right]\ \le\ C_{\mathrm{ret}}\log(L+1)\ \mathbb{E}[L_\varepsilon].
\end{equation}
\end{lemma}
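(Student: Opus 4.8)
The plan is to combine the subadditive excursion decomposition (Lemma~\ref{lem:revisit-subadd}) with the truncated first-return estimate (Proposition~\ref{prop:revisit-return}), applied one excursion at a time through the strong Markov property for the i.i.d.\ noise sequence. By Lemma~\ref{lem:revisit-subadd} it suffices to show
\[
\mathbb{E}\Big[\sum_v \sum_{j=1}^{R_v}\sqrt{\ell^v_j}\Big]\ \le\ C_{\mathrm{ret}}\log(L+1)\ \mathbb{E}[L_\varepsilon],
\]
recalling that $L_\varepsilon=\sum_v R_v$ is exactly the total number of excursions across all buckets.

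First I would reindex the excursions by the round at which they begin. Call round $t$ a \emph{renewal round} if $B_{v_t}(t-1)=0$; the renewal rounds are exactly the starting points of the excursions, each renewal round opening a single excursion of bucket $v_t$, so there is a bijection between renewal rounds and the pairs $(v,j)$ with $1\le j\le R_v$. Enumerate the renewal rounds as $s_1<s_2<\cdots$ (with the convention $s_r:=\infty$ if fewer than $r$ renewal rounds occur), so that $L_\varepsilon=\sum_{r\ge1}\mathbf{1}[s_r<\infty]$, and let $\ell^{(r)}$ be the length of the excursion opened at $s_r$; then $\sum_v\sum_{j=1}^{R_v}\sqrt{\ell^v_j}=\sum_{r:\,s_r<\infty}\sqrt{\ell^{(r)}}$. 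Since $v_t$ and $B_{v_t}(t-1)$ are $\mathcal{F}_{t-1}$-measurable, each event $\{s_r=t\}$ lies in $\mathcal{F}_{t-1}$, so each $s_r$ is a predictable stopping time and $\{s_r<\infty\}$ is measurable with respect to the stopped $\sigma$-field $\mathcal{G}_r$ generated by the history strictly before round $s_r$ (including the label $v_{s_r}$).

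Second, I would bound a single excursion length by a genuine first-return time. Fix $r$ and work on $\{s_r<\infty\}$; write $v:=v_{s_r}$. The successive rounds at or after $s_r$ on which bucket $v$ is chosen are stopping times, and since each $v_t$ is $\mathcal{F}_{t-1}$-predictable while $Z_t$ is a fresh $\{\pm h\}$ coin independent of $\mathcal{F}_{t-1}$, the strong Markov property shows that the increments routed to bucket $v$ from round $s_r$ onward form, conditionally on $\mathcal{G}_r$, an i.i.d.\ uniform $\{\pm h\}$ sequence (pad it with auxiliary independent coins if bucket $v$ is abandoned before horizon $L$). The excursion opened at $s_r$ is precisely the initial segment of the corresponding local random walk, started from $0$, run until its first return to $0$, or cut short when round $L$ is reached; in particular $\ell^{(r)}\le \min(\tau_0^{(r)},L)$, where $\tau_0^{(r)}$ is the first-return time to $0$ of this local walk. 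The conditional form of Proposition~\ref{prop:revisit-return} with $\mathcal{G}=\mathcal{G}_r$ then gives, on $\{s_r<\infty\}$,
\[
\mathbb{E}\big[\sqrt{\ell^{(r)}}\,\big|\,\mathcal{G}_r\big]\ \le\ \mathbb{E}\big[\sqrt{\min(\tau_0^{(r)},L)}\,\big|\,\mathcal{G}_r\big]\ \le\ C_{\mathrm{ret}}\log(L+1).
\]

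Finally, summing over $r$ and using the tower property together with $\{s_r<\infty\}\in\mathcal{G}_r$,
\[
\mathbb{E}\Big[\sum_{r:\,s_r<\infty}\sqrt{\ell^{(r)}}\Big]
=\sum_{r\ge1}\mathbb{E}\big[\mathbf{1}[s_r<\infty]\,\mathbb{E}[\sqrt{\ell^{(r)}}\mid\mathcal{G}_r]\big]
\ \le\ C_{\mathrm{ret}}\log(L+1)\sum_{r\ge1}\mathbb{P}(s_r<\infty)
= C_{\mathrm{ret}}\log(L+1)\,\mathbb{E}[L_\varepsilon],
\]
which combined with Lemma~\ref{lem:revisit-subadd} yields \eqref{eq:revisit-denom-Leta}. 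The main obstacle is the middle step: rigorously coupling each (possibly truncated) excursion with the first-return time of a bona fide simple symmetric walk, i.e.\ justifying that the increments sampled along the adaptively chosen schedule for a fixed bucket really are i.i.d.\ Rademacher conditionally on the pre-excursion $\sigma$-field. This is the standard strong-Markov argument for an i.i.d.\ sequence read along a predictable subsequence, but the bookkeeping — the auxiliary coins for buckets abandoned before their walk returns, and verifying the measurability of $\{s_r<\infty\}$ relative to $\mathcal{G}_r$ — has to be handled with care; the reindexing bijection and the final summation are routine.
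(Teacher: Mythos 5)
Your proposal is correct and follows essentially the same approach as the paper's proof: reduce via the subadditivity lemma to bounding the expected sum of excursion root-lengths, apply the conditional version of Proposition~\ref{prop:revisit-return} excursion by excursion through the strong-Markov property, and sum using $\sum_v R_v = L_\varepsilon$. The only difference is cosmetic: you reindex excursions globally by renewal round $s_r$ rather than by the bucket-local pair $(v,j)$, which makes the bijection with $L_\varepsilon$ slightly more transparent, but the conditional $\sigma$-field $\mathcal{G}_r=\mathcal{F}_{s_r-1}$ and the strong-Markov/padding argument you identify as the delicate step are the same objects the paper conditions on and handles.
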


\begin{proof}
For each bucket $v$, extend its update-time sequence beyond the horizon by setting
\[
t_{v,k} := L + (k-n_v), \qquad k>n_v.
\]
For $k\le n_v$, the event $\{t_{v,k}=t\}$ is $\mathcal{F}_{t-1}$-measurable because it is determined by the history
of which rounds up to $t$ selected bucket $v$. For $k>n_v$, the event $\{t_{v,k}=t\}$ can only occur for $t>L$ and is then
\[
\{t_{v,k}=t\} = \{n_v = L+k-t\},
\]
which belongs to $\mathcal{F}_{L-1}\subseteq \mathcal{F}_{t-1}$ because the entire selection sequence up to time $L$ is known by time $L-1$.
Hence the infinite sequence $(t_{v,k})_{k\ge 1}$ is predictable.

We will first prove an intermediate claim, which implies the result of the lemma: for every bucket $v$ and every excursion index $j\in\{1,\dots,R_v\}$, we will now show that
\begin{equation}
    \label{eq:bucketindex_ret}\mathbb{E}\big[\sqrt{\ell^v_j}\,\big|\,\mathcal{F}_{t_{v,\kappa_j^v}-1}\big]\ \le\ C_{\mathrm{ret}}\log(L+1).
\end{equation}
Towards this, fix $v,j$ and set
\[
\sigma_{v,j} := t_{v,\kappa_j^v},\qquad
M_{v,j} := n_v-\kappa_j^v+1,\qquad
\mathcal{G} := \mathcal{F}_{\sigma_{v,j}-1}.
\]
By definition of $\kappa_j^v$ we have $|S^v_{\kappa_j^v-1}|\le \varepsilon$, and by the standing lattice assumption this implies
$S^v_{\kappa_j^v-1}=0$. The event $\{\sigma_{v,j}=t\}$ is $\mathcal{F}_{t-1}$-measurable, because by time $t-1$ one knows whether
round $t$ is the $j$-th selection of bucket $v$ whose pre-update bucket sum is $0$. Hence $\sigma_{v,j}$ is predictable.

Now define
\[
X_r := Z_{t_{v,\kappa_j^v+r-1}},\qquad r=1,2,\dots.
\]
For each fixed $k\ge 1$, Lemma~\ref{lem:predictable-subsequence} applied to the predictable time sequence
$(t_{v,r})_{r\ge 1}$ shows that conditional on $\mathcal{F}_{t_{v,k}-1}$, the tail
\[
Z_{t_{v,k}},\ Z_{t_{v,k+1}},\ Z_{t_{v,k+2}},\ \dots
\]
is i.i.d.\ with law $\{\pm h\}$. Moreover, the event $\{\kappa_j^v=k\}$ is $\mathcal{F}_{t_{v,k}-1}$-measurable,
since by time $t_{v,k}-1$ one knows which of the first $k$ updates to bucket $v$ started from zero. Because
$\sigma_{v,j}=t_{v,k}$ on $\{\kappa_j^v=k\}$, summing over $k$ yields that conditional on $\mathcal{G}$ the sequence
$(X_r)_{r\ge 1}$ is i.i.d.\ with $\mathbb{P}(X_r=h)=\mathbb{P}(X_r=-h)=1/2$.
Define the local random walk by
\[
T_0:=0,\qquad T_n:=\sum_{r=1}^n X_r,
\]
and let
\[
\tau_{v,j}^0 := \inf\{n\ge 1:T_n=0\}.
\]
The first $M_{v,j}$ terms of $(X_r)$ are exactly the actual future increments routed to bucket $v$ from the start of the $j$-th excursion
until the horizon ends. Therefore the excursion length is
\[
\ell_j^v = \min(\tau_{v,j}^0,M_{v,j}),
\]
since the excursion ends either at the first return to $0$ or when bucket $v$ is never selected again before time $L$.
Because $M_{v,j}\le L$, we obtain
\[
\ell_j^v \le \min(\tau_{v,j}^0,L).
\]

Conditional on $\mathcal{G}$, the process $(T_n)_{n\ge0}$ satisfies the assumptions of Proposition~\ref{prop:revisit-return}. Hence, by Proposition~\ref{prop:revisit-return}, for a universal constant $C_{\mathrm{ret}}$ we have:
\[
\mathbb{E}\big[\sqrt{\ell^v_j}\,\big|\,\mathcal{F}_{t_{v,\kappa_j^v}-1}\big] \;\le\; \mathbb{E}\big[\sqrt{\min(\tau_{v,j}^0,L)}\,\big|\,\mathcal{F}_{t_{v,\kappa_j^v}-1}\big]
\;\le\; C_{\mathrm{ret}}\log(L+1),
\] 
and our intermediate claim~\eqref{eq:bucketindex_ret} follows. 

To now establish \eqref{eq:revisit-denom-Leta}, we start from Lemma~\ref{lem:revisit-subadd}:
\[
\mathbb{E}\Big[\sum_v \sqrt{n_v}\Big]\le \mathbb{E}\Big[\sum_v\sum_{j=1}^{R_v}\sqrt{\ell_j^v}\Big].
\]

Since $R_v \le n_v \le L$, we may write
\[
\mathbb{E}\left[\sum_{j=1}^{R_v}\sqrt{\ell_j^v}\right]
=
\sum_{j=1}^{L}\mathbb{E}\left[\mathbf{1}[j \le R_v]\sqrt{\ell_j^v}\right]
=
\sum_{j=1}^{L}\mathbb{E}\left[\mathbf{1}[j \le R_v]\mathbb{E}\left[\sqrt{\ell_j^v} \mid \mathcal{F}_{t_{v,\kappa_j^v}-1}\right]\right]
\le C_{\rm ret}\log(L+1)\mathbb{E}[R_v],
\]
where the inner conditional expectation is invoked only on the event $\{j \le R_v\}$ (on which $\kappa_j^v$ and $\ell_j^v$ are defined), and the inequality uses Equation~\eqref{eq:bucketindex_ret} and $\sum_{j=1}^L \mathbf{1}[j \le R_v] = R_v$.

Summing the last display over $v$ and recalling that $\sum_v R_v=L_\varepsilon$ concludes the proof.
\end{proof}

\medskip

\subsubsection{Finishing the Proof of Theorem~\ref{thm:revisit}}

Combining Lemma~\ref{lem:revisit-drift} and Lemma~\ref{lem:revisit-exc-Olog}, we obtain:
\[
  \mathbb{E}\Big[\sum_v \sqrt{n_v}\Big]
  \ \le\ C_{\mathrm{ret}} \log(L+1)\ \mathbb{E}[L_\varepsilon]
  \ \le\ \frac{C_{\mathrm{ret}}}{h}\,\log(L+1)\ \mathbb{E}\Big[\sum_v |U_v|\Big].
\]
Setting $C_{\mathrm{rev}}:= 1 / C_{\mathrm{ret}}$ and rearranging, we obtain the claimed bound of Theorem~\ref{thm:revisit}.

\subsubsection{A Noise Lower Bound via Theorem~\ref{thm:revisit}}


\begin{corollary}[Noise lower bound for the constant-group buckets]
\label{cor:noise-floor}
For \(U_v\) and \(N\) defined in Section~\ref{sec:bias-noise}, there exists a universal constant \(c_2>0\) such that
\[
\mathbb{E}\!\left[\sum_{v\in V_T}|U_v|\right]
\ge
\frac{c_2}{\log(T+1)}\mathbb{E}[N].
\]
\end{corollary}
\begin{proof}
Apply Theorem~\ref{thm:revisit} with horizon $L=T$, increments $Z_t:=x^t-y^t$, step size $h=1/4$, and bucket labels $v_t:=p^t$.
Under the environment~\eqref{eq:env-walsh}, we have $Z_t=-\xi^t/4$, so $(Z_t)_{t=1}^{T}$ are i.i.d., mean $0$, and take values in $\{\pm h\}$.
To verify predictability, let
\[
\mathcal{H} := \sigma\big(x^1,\ldots,x^{T}, \text{algorithm’s internal randomness}\big).
\]
Then $\mathcal{H}$ is independent of $(Z_t)_{t=1}^{T}$, and for each $t$ the prediction $p^t$ is measurable with respect to the past transcript $\sigma(x^1,\ldots,x^{t},y^1,\ldots,y^{t-1})$.
Since $y^s = x^s - Z_s$ for all $s<t$ and $(x^s)_{s\le T}$ are $\mathcal{H}$-measurable, it follows that $p^t$ is measurable with respect to $\mathcal{F}_{t-1}:=\mathcal{H}\vee\sigma(Z_1,\ldots,Z_{t-1})$.

Therefore Theorem~\ref{thm:revisit} gives, with $c_2:=C_{\mathrm{rev}}/4$,
\[
\mathbb{E}\Big[\sum_{v\in\mathcal{V}_{T}} |U_v|\Big]
\ge \frac{C_{\mathrm{rev}}}{4\,\log(T+1)}\cdot \mathbb{E}\Big[\sum_{v\in\mathcal{V}_{T}} \sqrt{n_v}\Big]
= \frac{c_2}{\log(T+1)}\,\mathbb{E}[N]. \qedhere
\]
\end{proof}

\subsection{Putting the Bias and Noise Bounds Together}
\label{sec:final-proof}

We now finish the proof of Theorem~\ref{thm:walsh-localrate-nonneg}, by combining the constant-group bias-noise decomposition with our bounds on bias/truthfulness, on the prediction diversity, and on adaptive-bucketing noise.

For brevity, let us write
$
\mathrm{MC}:=\mathbb{E}\!\left[\operatorname{MCerr}_T(\mathcal G)\right].$
Since \(g_{\rm all}\in\mathcal G\), we have
$\operatorname{MCerr}_T(\mathcal G)\ge \operatorname{Err}_T(g_{\rm all}).$
Therefore, taking expectations in Observation~\ref{obs:constant-group-bias-noise} and applying
Corollary~\ref{cor:noise-floor} gives the lower bound:
\begin{equation}
\label{eq:mc-basic-lb}
\mathrm{MC} \;\ge\; \frac{c_2}{\log(T+1)}\,\mathbb{E}[N] - \mathbb{E}[A].
\end{equation}
%
Now recall that by Lemma~\ref{lem:l1-truthfulness}, we have the bias/truthfulness bound:
\begin{equation}
\label{eq:A-truthfulness-final}
\mathbb{E}[A] \le C_{\ell_1}\log(m+1) \cdot \mathrm{MC},
\end{equation}
and by Lemma~\ref{cor:N-from-A}, we have the prediction diversity bound:
\begin{equation}
\label{eq:N-from-A-final}
\mathbb{E}[N]\ge \frac{T}{4\sqrt{\mathbb{E}[A] + \frac{T}{m} + 1}}.
\end{equation}
Set the shorthand $L:=\log(T+1)$ and note $\log(m+1)\le L$. Combining \eqref{eq:mc-basic-lb} and \eqref{eq:A-truthfulness-final} implies
\[
(1+C_{\ell_1}L) \cdot \mathrm{MC} \ge \frac{c_2}{L}\,\mathbb{E}[N] \implies \mathrm{MC} \ge \frac{c_3}{L^2}\,\mathbb{E}[N] \text{ for some } c_3 > 0.
\]
Combining this with \eqref{eq:N-from-A-final} and again using \eqref{eq:A-truthfulness-final}, we thus obtain the following bound:
\[
\mathrm{MC} \ge \frac{c_3T}{4L^2\sqrt{C_{\ell_1}L \cdot \mathrm{MC} +T/m+1}}.
\]
Now consider the two cases for which term dominates in the denominator of this bound.

If $C_{\ell_1}L \cdot \mathrm{MC} \le T/m+1$, then
$\mathrm{MC} \ge \frac{c_3T}{4L^2\sqrt{2(T/m+1)}}.$
For all sufficiently large $T$, we have
$\frac{1}{2}T^{1/3}\le m\le T^{1/3},$
hence $T/m+1\le 3T^{2/3}$, and thus
$\mathrm{MC} \ge c_4\,\frac{T^{2/3}}{L^2}$
for some $c_4>0$.

If instead $C_{\ell_1}L \cdot \mathrm{MC} >T/m+1$, then
$\mathrm{MC} \ge \frac{c_3T}{4L^2\sqrt{2C_{\ell_1}L \cdot \mathrm{MC}}},$
so for some $c_5>0$,
$\mathrm{MC}^{3/2} \ge c_5\,\frac{T}{L^{5/2}}$ and thus $\mathrm{MC} \ge c_5\,\frac{T^{2/3}}{L^{5/3}} \ge c_5\,\frac{T^{2/3}}{L^2}.$

In both cases,
$
\mathrm{MC} \ge c\,\frac{T^{2/3}}{\log^2(T+1)}
$
for some $c>0$ and $T$ large enough, proving Theorem~\ref{thm:walsh-localrate-nonneg}.

\section{Discussion}

We have established tight $\Theta(T^{2/3})$ bounds (up to logarithmic factors) for online multicalibration, separating it from marginal calibration. In the case of prediction independent groups, there is a narrow regime of group sizes for which minimax optimal rates remain open. \cite{noarov2023high} gives $\widetilde O(T^{2/3})$ multicalibration rates for all group families of size polynomial in $T$. On the other hand, we have established that this is tight already for group families scaling only as $\log^3(T)$, separating the minimax rates for multicalibration from those of marginal calibration. We have also observed that no separation is possible for constant sized group families (i.e.\ those not scaling with $T$ at all). The regime in which the group family grows with $T$ but at a rate of $o(\log^3 T)$ remains open. In Appendix \ref{sec:oracle-lb} we give an oracle lower bound that presents a barrier to giving a rate preserving reduction from multicalibration to marginal calibration for a group family of size only $O(\log T)$, but we leave tightly understanding minimax rates for multicalibration in this small-but-super-constant group size regime open.

\subsection*{Acknowledgments} We gratefully acknowledge funding from the Simons Foundation, the UK AI Safety Institute, and the NSF ENCoRE TRIPODS Institute. 

We thank Princewill Okoroafor for enlightening discussions at an early stage of this work.

The authors used AI tools, specifically GPT 5.1 and 5.4 Pro, and GPT 5.4 in the Codex environment in the development of this paper; all of the final theorems and proofs are written and verified by the authors, and all of  the exposition and discussion of related work was written without AI assistance.  
\bibliographystyle{alpha}
\bibliography{bib}

@article{burkholder1973distribution,
  title={Distribution function inequalities for martingales},
  author={Burkholder, Donald L},
  journal={The Annals of Probability},
  volume={1},
  number={1},
  pages={19--42},
  year={1973},
  publisher={Institute of Mathematical Statistics}
}

@inproceedings{noarov2023high,
title={High-Dimensional Prediction for Sequential Decision Making},
author={Georgy Noarov and Ramya Ramalingam and Aaron Roth and Stephan Xie},
booktitle={Forty-second International Conference on Machine Learning},
year={2025}
}

@article{dwork2025supersimulators,
  title={Supersimulators},
  author={Dwork, Cynthia and Tankala, Pranay},
  journal={arXiv preprint arXiv:2509.17994},
  year={2025}
}

@inproceedings{dagan2025breaking,
  title={Breaking the  {$T^{2/3}$} Barrier for Sequential Calibration},
  author={Dagan, Yuval and Daskalakis, Constantinos and Fishelson, Maxwell and Golowich, Noah and Kleinberg, Robert and Okoroafor, Princewill},
  booktitle={Proceedings of the 57th Annual ACM Symposium on Theory of Computing},
  pages={2007--2018},
  year={2025}
}

@article{haghtalab2024truthfulness,
  title={Truthfulness of calibration measures},
  author={Haghtalab, Nika and Qiao, Mingda and Yang, Kunhe and Zhao, Eric},
  journal={Advances in Neural Information Processing Systems},
  volume={37},
  pages={117237--117290},
  year={2024}
}

@inproceedings{qiao2025truthfulness,
  title={Truthfulness of Decision-Theoretic Calibration Measures},
  author={Qiao, Mingda and Zhao, Eric},
  booktitle={The Thirty Eighth Annual Conference on Learning Theory},
  pages={4686--4739},
  year={2025},
  organization={PMLR}
}

@article{hartline2025perfectly,
  title={A Perfectly Truthful Calibration Measure},
  author={Hartline, Jason and Hu, Lunjia and Wu, Yifan},
  journal={arXiv preprint arXiv:2508.13100},
  year={2025}
}

@inproceedings{gopalan2022low,
  title={Low-degree multicalibration},
  author={Gopalan, Parikshit and Kim, Michael P and Singhal, Mihir A and Zhao, Shengjia},
  booktitle={Conference on Learning Theory},
  pages={3193--3234},
  year={2022},
  organization={PMLR}
}

@inproceedings{qiao2021stronger,
  title={Stronger calibration lower bounds via sidestepping},
  author={Qiao, Mingda and Valiant, Gregory},
  booktitle={Proceedings of the 53rd Annual ACM SIGACT Symposium on Theory of Computing},
  pages={456--466},
  year={2021}
}

@inproceedings{blum2020advancing,
  title={Advancing Subgroup Fairness via Sleeping Experts},
  author={Blum, Avrim and Lykouris, Thodoris},
  booktitle={Innovations in Theoretical Computer Science Conference (ITCS)},
  volume={11},
  year={2020}
}

@inproceedings{acharyaoracle,
  title={Oracle Efficient Algorithms for Groupwise Regret},
  author={Acharya, Krishna and Arunachaleswaran, Eshwar Ram and Kannan, Sampath and Roth, Aaron and Ziani, Juba},
  booktitle={The Twelfth International Conference on Learning Representations},
  year={2024}
}

@article{foster1998asymptotic,
  title={Asymptotic calibration},
  author={Foster, Dean P and Vohra, Rakesh V},
  journal={Biometrika},
  volume={85},
  number={2},
  pages={379--390},
  year={1998},
  publisher={Oxford University Press}
}

@incollection{hart2025calibrated,
  title={Calibrated forecasts: The minimax proof},
  author={Hart, Sergiu},
  booktitle={Matching, Dynamics and Games for the Allocation of Resources: Essays in Celebration of David Gale’s 100th Birthday},
  pages={153--159},
  year={2025},
  publisher={Springer}
}

@inproceedings{abernethy2011blackwell,
  title={Blackwell approachability and no-regret learning are equivalent},
  author={Abernethy, Jacob and Bartlett, Peter L and Hazan, Elad},
  booktitle={Proceedings of the 24th Annual Conference on Learning Theory},
  pages={27--46},
  year={2011},
  organization={JMLR Workshop and Conference Proceedings}
}

@article{lee2022online,
  title={Online minimax multiobjective optimization: Multicalibeating and other applications},
  author={Lee, Daniel and Noarov, Georgy and Pai, Mallesh and Roth, Aaron},
  journal={Advances in Neural Information Processing Systems},
  volume={35},
  pages={29051--29063},
  year={2022}
}

@inproceedings{hebert2018multicalibration,
  title={Multicalibration: Calibration for the (computationally-identifiable) masses},
  author={H{\'e}bert-Johnson, Ursula and Kim, Michael and Reingold, Omer and Rothblum, Guy},
  booktitle={International Conference on Machine Learning},
  pages={1939--1948},
  year={2018},
  organization={PMLR}
}

@article{dawid1985calibration,
  title={Calibration-based empirical probability},
  author={Dawid, A Philip},
  journal={The Annals of Statistics},
  volume={13},
  number={4},
  pages={1251--1274},
  year={1985},
  publisher={Institute of Mathematical Statistics}
}

@article{sandroni2003calibration,
  title={Calibration with many checking rules},
  author={Sandroni, Alvaro and Smorodinsky, Rann and Vohra, Rakesh V},
  journal={Mathematics of Operations Research},
  volume={28},
  number={1},
  pages={141--153},
  year={2003},
  publisher={INFORMS}
}

@article{lehrer2003approachability,
  title={Approachability in infinite dimensional spaces},
  author={Lehrer, Ehud},
  journal={International Journal of Game Theory},
  volume={31},
  number={2},
  pages={253--268},
  year={2003},
  publisher={Springer}
}

@inproceedings{gopalan2022omnipredictors,
  title={Omnipredictors},
  author={Gopalan, Parikshit and Kalai, Adam Tauman and Reingold, Omer and Sharan, Vatsal and Wieder, Udi},
  booktitle={13th Innovations in Theoretical Computer Science Conference (ITCS 2022)},
  pages={79--1},
  year={2022},
  organization={Schloss Dagstuhl--Leibniz-Zentrum f{\"u}r Informatik}
}

@inproceedings{casacuberta2024complexity,
  title={Complexity-theoretic implications of multicalibration},
  author={Casacuberta, S{\'\i}lvia and Dwork, Cynthia and Vadhan, Salil},
  booktitle={Proceedings of the 56th Annual ACM Symposium on Theory of Computing},
  pages={1071--1082},
  year={2024}
}

@inproceedings{collina2025tractable,
  title={Tractable agreement protocols},
  author={Collina, Natalie and Goel, Surbhi and Gupta, Varun and Roth, Aaron},
  booktitle={Proceedings of the 57th Annual ACM Symposium on Theory of Computing},
  pages={1532--1543},
  year={2025}
}

@inproceedings{collina2026collaborative,
  title={Collaborative prediction: Tractable information aggregation via agreement},
  author={Collina, Natalie and Globus-Harris, Ira and Goel, Surbhi and Gupta, Varun and Roth, Aaron and Shi, Mirah},
  booktitle={Proceedings of the 2026 Annual ACM-SIAM Symposium on Discrete Algorithms (SODA)},
  year={2026},
  organization={SIAM}
}

@article{perdomo2025defense,
  title={In Defense of Defensive Forecasting},
  author={Perdomo, Juan Carlos and Recht, Benjamin},
  journal={arXiv preprint arXiv:2506.11848},
  year={2025}
}

@inproceedings{garg2024oracle,
  title={Oracle efficient online multicalibration and omniprediction},
  author={Garg, Sumegha and Jung, Christopher and Reingold, Omer and Roth, Aaron},
  booktitle={Proceedings of the 2024 Annual ACM-SIAM Symposium on Discrete Algorithms (SODA)},
  pages={2725--2792},
  year={2024},
  organization={SIAM}
}

@inproceedings{gupta2022online,
  title={Online Multivalid Learning: Means, Moments, and Prediction Intervals},
  author={Gupta, Varun and Jung, Christopher and Noarov, Georgy and Pai, Mallesh M and Roth, Aaron},
  booktitle={13th Innovations in Theoretical Computer Science Conference (ITCS 2022)},
  pages={82--1},
  year={2022},
  organization={Schloss Dagstuhl--Leibniz-Zentrum f{\"u}r Informatik}
}

@article{haghtalab2023unifying,
  title={A unifying perspective on multi-calibration: Game dynamics for multi-objective learning},
  author={Haghtalab, Nika and Jordan, Michael and Zhao, Eric},
  journal={Advances in Neural Information Processing Systems},
  volume={36},
  pages={72464--72506},
  year={2023}
}

@article{gopalan2023swap,
  title={Swap agnostic learning, or characterizing omniprediction via multicalibration},
  author={Gopalan, Parikshit and Kim, Michael and Reingold, Omer},
  journal={Advances in Neural Information Processing Systems},
  volume={36},
  pages={39936--39956},
  year={2023}
}

@inproceedings{globus2023multicalibration,
  title={Multicalibration as Boosting for Regression},
  author={Globus-Harris, Ira and Harrison, Declan and Kearns, Michael and Roth, Aaron and Sorrell, Jessica},
  booktitle={International Conference on Machine Learning},
  pages={11459--11492},
  year={2023},
  organization={PMLR}
}

@article{kim2022universal,
  title={Universal adaptability: Target-independent inference that competes with propensity scoring},
  author={Kim, Michael P and Kern, Christoph and Goldwasser, Shafi and Kreuter, Frauke and Reingold, Omer},
  journal={Proceedings of the National Academy of Sciences},
  volume={119},
  number={4},
  pages={e2108097119},
  year={2022},
  publisher={National Academy of Sciences}
}

@article{hu2025efficient,
  title={Efficient Swap Multicalibration of Elicitable Properties},
  author={Hu, Lunjia and Luo, Haipeng and Senapati, Spandan and Sharan, Vatsal},
  journal={arXiv preprint arXiv:2511.04907},
  year={2025}
}

@inproceedings{noarov2023statistical,
  title={The statistical scope of multicalibration},
  author={Noarov, Georgy and Roth, Aaron},
  booktitle={International Conference on Machine Learning},
  pages={26283--26310},
  year={2023},
  organization={PMLR}
}

@article{dawid1982well,
  title={The well-calibrated Bayesian},
  author={Dawid, A Philip},
  journal={Journal of the American Statistical Association},
  volume={77},
  number={379},
  pages={605--610},
  year={1982},
  publisher={Taylor \& Francis}
}

@inproceedings{deng2023happymap,
  title={HappyMap: A Generalized Multicalibration Method},
  author={Deng, Zhun and Dwork, Cynthia and Zhang, Linjun},
  booktitle={14th Innovations in Theoretical Computer Science Conference, ITCS 2023},
  pages={41},
  year={2023},
  organization={Schloss Dagstuhl-Leibniz-Zentrum fur Informatik GmbH, Dagstuhl Publishing}
}

@inproceedings{gopalan2023loss,
  title={Loss Minimization Through the Lens Of Outcome Indistinguishability},
  author={Gopalan, Parikshit and Hu, Lunjia and Kim, Michael P and Reingold, Omer and Wieder, Udi},
  booktitle={14th Innovations in Theoretical Computer Science Conference (ITCS 2023)},
  year={2023}
}

@inproceedings{ghuge2025improved,
  author = {Ghuge, Rohan and Muthukumar, Vidya and Singla, Sahil},
  title = {Improved and Oracle-Efficient Online {$\ell_1$}-Multicalibration},
  booktitle = {Proceedings of the 42nd International Conference on Machine Learning},
  pages = {19437--19457},
  publisher = {PMLR},
  year = {2025}
}

@article{kakade2008deterministic,
  title={Deterministic calibration and Nash equilibrium},
  author={Kakade, Sham M and Foster, Dean P},
  journal={Journal of Computer and System Sciences},
  volume={74},
  number={1},
  pages={115--130},
  year={2008},
  publisher={Elsevier}
}

@book{fellervol1,
  author    = {William Feller},
  title     = {An Introduction to Probability Theory and Its Applications},
  volume    = {1},
  edition   = {3},
  year      = {1968},
  publisher = {John Wiley \& Sons},
  address   = {New York}
}

\appendix
\section{Constant-Sized Families of Binary Prediction-Independent Groups}
\label{sec:pred-independent}

The lower bound in Theorem~\ref{thm:main} crucially exploits the fact that the groups $g_1,g_2,g_3$
are allowed to depend on the prediction value $v$. In this section we show that an analogous
lower bound cannot hold for a constant number (i.e. not growing with $T$) of binary \emph{prediction-independent} groups---groups
that depend only on the context $x$ and not on the prediction $v$.

To do this we show a simple reduction from the problem of sequential adversarial multicalibration (for prediction independent groups) to the problem of sequential adversarial \emph{marginal} calibration, with rates that degrade at most exponentially with the number of groups to be multicalibrated. An implication of this is that for any constant number of groups, the best rate for multicalibration is the same (up to constant factors depending on the number of groups) as it is for marginal calibration --- and in particular, by the recent result of \cite{dagan2025breaking}, $O(T^{2/3-\epsilon})$. 

The idea is extremely simple: if we have $k$ prediction independent binary groups, then before we make our prediction, we know which of the at most $2^k$ \emph{combinations of groups} are active at each round $t$ before we must make our prediction. The sequence of rounds on which each combination of groups is active is by construction disjoint from every other. Thus we can instantiate a separate copy of any marginal calibration algorithm for each of the $2^k$ possible combinations of groups and run each on the corresponding subsequence. Naively this results in a blow-up in rates of $2^k$. In this section we give a somewhat more refined bound that depends for each group on the number of distinct combinations of groups that it participates in, and further takes advantage of the convexity of calibration error upper bounds.  

\begin{theorem}[Structure-aware binary prediction-independent multicalibration]
\label{thm:pred-indep}
Let $k\in\mathbb{N}$ and let $G=\{g_1,\dots,g_k\}$ be a family of binary prediction-independent groups.
Suppose there exists an online prediction algorithm $A^{\mathrm{marg}}$ and a function $R:\mathbb{N}\to[0,\infty)$
that is nondecreasing and satisfies $R(0)=0$, such that for every horizon $n\ge 1$ and every sequence
$(x^t,y^t)_{t=1}^n$, when $A^{\mathrm{marg}}$ is run for $n$ rounds we have:
\[
  \mathbb{E}\Big[\sum_{v\in V_n} \Big|\sum_{t=1}^n \mathbf{1}[p^t=v]\,(p^t-y^t)\Big|\Big]
  \;\le\; R(n).
\]
 where the expectation is
 over the internal randomness of $A^{\mathrm{marg}}$.
Assume moreover that $R$ extends to a concave, nondecreasing function on $[0,\infty)$.
Then there exists an online algorithm $A^{\mathrm{multi}}$ such that for every horizon $T\ge 0$ and every
sequence $(x^t,y^t)_{t=1}^T$, the following holds.

Partition the rounds by membership patterns: for $x\in X$, let $z(x)\in\{0,1\}^k$ with $z_j(x)=g_j(x)$.
For each realized pattern $z$, define the cell $C_z:=\{t: z(x^t)=z\}$ and its size $T_z:=|C_z|$.
For each $j\in\{1,\dots,k\}$ set
\[
  T_j := \sum_{z: z_j=1} T_z, \qquad K_j := \big|\{ z : z_j=1,\ T_z>0\}\big|.
\]
 Then, for every $j$, we have
\begin{equation}\label{eq:struct-basic}
  \mathbb{E}[\mathrm{Err}_T(g_j)] \;\le\; \sum_{z: z_j=1} R(T_z)
  \;\le\; K_j\, R(T_j/K_j) \quad\text{if $R$ is concave},
\end{equation}
with the convention that $K_j=0$ implies both sides are $0$ (since $T_j=0$ and $R(0)=0$).
In addition, the multicalibration error satisfies the following valid bounds:
\begin{equation}\label{eq:mc-cells}
  \mathbb{E}[\mathrm{MCerr}_T(G)] \;\le\; \sum_{z\in\{0,1\}^k} R(T_z)
  \;\le\; 2^k\, R\big(T/2^k\big)\quad\text{if $R$ is concave}.
\end{equation}
\end{theorem}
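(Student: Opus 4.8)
\textbf{Proof plan for Theorem~\ref{thm:pred-indep}.}

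The plan is to build $A^{\mathrm{multi}}$ as a straightforward ``parallel copies'' reduction: maintain one independent instance of $A^{\mathrm{marg}}$ for each of the (at most $2^k$) possible membership patterns $z\in\{0,1\}^k$. On each round $t$, the learner sees $x^t$ \emph{before} predicting, and since every $g_j$ is prediction-independent, the pattern $z(x^t)=(g_1(x^t),\dots,g_k(x^t))$ is fully determined at that point. Route round $t$ to the copy $A_z^{\mathrm{marg}}$ with $z=z(x^t)$: feed it the label $y^t$ as its next outcome, output whatever it predicts as $p^t$, and update only that copy. Thus each copy $A_z^{\mathrm{marg}}$ sees exactly the subsequence $(x^t,y^t)_{t\in C_z}$, of length $T_z$, and by the assumed guarantee its expected marginal calibration error is at most $R(T_z)$.

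Next I would relate per-group calibration error to the per-cell marginal errors. Fix $j$. For a prediction value $v$, the group-$j$ empirical bias is $B_T(v,g_j)=\sum_{t:p^t=v} g_j(x^t)(p^t-y^t)=\sum_{z:z_j=1}\sum_{t\in C_z:p^t=v}(p^t-y^t)$, because $g_j(x^t)=1$ exactly when $z_j(x^t)=1$ and $g_j(x^t)=0$ otherwise. Since the cells $C_z$ partition the rounds, the inner sum over $t\in C_z$ with $p^t=v$ is precisely the contribution of copy $A_z^{\mathrm{marg}}$ at prediction value $v$ on its own timeline. Applying the triangle inequality over the (disjoint) cells and then summing $\sum_{v}$, we get $\mathrm{Err}_T(g_j)\le \sum_{z:z_j=1}\big(\text{marginal calibration error of }A_z^{\mathrm{marg}}\big)$. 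Taking expectations and using the per-copy bound gives $\mathbb{E}[\mathrm{Err}_T(g_j)]\le\sum_{z:z_j=1}R(T_z)$, which is the first inequality in~\eqref{eq:struct-basic}; cells with $T_z=0$ contribute $R(0)=0$ so only the $K_j$ nonempty ones matter. The second inequality in~\eqref{eq:struct-basic} follows from Jensen's inequality: $R$ concave and $\sum_{z:z_j=1,T_z>0}T_z=T_j$ over $K_j$ terms gives $\sum_z R(T_z)\le K_j\,R(T_j/K_j)$ (with the $K_j=0$ case handled by the stated convention). For~\eqref{eq:mc-cells}, note $\mathrm{MCerr}_T(G)=\max_j\mathrm{Err}_T(g_j)\le\sum_{z\in\{0,1\}^k}\big(\text{marginal error of }A_z^{\mathrm{marg}}\big)$ pathwise (bounding each group error by the total over all cells, since each cell contributes nonnegatively), so $\mathbb{E}[\mathrm{MCerr}_T(G)]\le\sum_z R(T_z)$, and concavity plus $\sum_z T_z=T$ over at most $2^k$ terms gives $\le 2^k R(T/2^k)$.

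The reduction itself is essentially mechanical; the only genuinely delicate point is being careful that everything is well-defined as an \emph{online} algorithm — in particular that the routing decision depends only on $x^t$ (legitimate, since groups are prediction-independent) and that the expectation bound for $A^{\mathrm{marg}}$ applies to each copy despite the copies being interleaved in real time. This is fine because each copy's internal randomness is independent and its input subsequence $(x^t,y^t)_{t\in C_z}$ is a valid (adversarial) sequence of its own length $T_z$, so the guarantee $R(T_z)$ applies verbatim; linearity of expectation then lets us sum over cells. A secondary subtlety is that $R$ is only assumed given on $\mathbb{N}$, so one must invoke the hypothesized concave nondecreasing extension to $[0,\infty)$ before applying Jensen — this is exactly why that hypothesis is stated. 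I do not anticipate any real obstacle beyond bookkeeping.
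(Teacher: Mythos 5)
Your proposal is correct and matches the paper's own proof essentially step for step: the same one-copy-per-membership-pattern reduction, the same decomposition $B_T(v,g_j)=\sum_{z:z_j=1}B_z(v)$ with the triangle inequality over cells, the same Jensen argument from concavity, and the same pathwise domination of the max by the sum over all cells for the multicalibration bound. The subtleties you flag (routing is legitimate because groups are prediction-independent, and the per-copy guarantee applies to each interleaved subsequence) are exactly the ones the paper addresses.
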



\begin{proof}
Fix a sequence $(x^t,y^t)_{t=1}^T$. Then every cell $C_z$ and every count $T_z$, $T_j$, and $K_j$ defined in the
statement is deterministic.

For each pattern $z\in\{0,1\}^k$, create an independent copy $A^z$ of $A^{\mathrm{marg}}$.
On each round $t$, compute $z_t:=z(x^t)$, query $A^{z_t}$ for a prediction $p^t$, then reveal $y^t$ and
feed $(x^t,y^t,p^t)$ only to the copy $A^{z_t}$. This defines the algorithm $A^{\mathrm{multi}}$.

For each $z$ and $v\in[0,1]$, define the cell-wise bias
\[
  B_z(v) := \sum_{t\in C_z} \mathbf{1}[p^t=v]\,(p^t-y^t).
\]
Write $V(C_z)$ for the set of prediction values output by $A^z$ on its own rounds $C_z$.
By construction, on the rounds $C_z$ the copy $A^z$ runs exactly as $A^{\mathrm{marg}}$ would on the
length-$T_z$ deterministic sequence $((x^t,y^t))_{t\in C_z}$. Therefore the marginal guarantee gives
\begin{equation}\label{eq:cell-R}
  \mathbb{E}\Big[\sum_{v\in V(C_z)} |B_z(v)|\Big] \;\le\; R(T_z).
\end{equation}

Fix $j\in\{1,\dots,k\}$. Since $g_j(x)=1$ if and only if $z_j(x)=1$, we have for each $v$,
\[
  B_T(v,g_j)
  = \sum_{t=1}^T \mathbf{1}[p^t=v]\,g_j(x^t)\,(p^t-y^t)
  = \sum_{z: z_j=1} \sum_{t\in C_z} \mathbf{1}[p^t=v]\,(p^t-y^t)
  = \sum_{z: z_j=1} B_z(v).
\]
By the triangle inequality and summing over $v$, we obtain
\[
  \mathrm{Err}_T(g_j) = \sum_{v\in V_T} |B_T(v,g_j)|
  \;\le\; \sum_{z: z_j=1} \sum_{v\in V(C_z)} |B_z(v)|.
\]
Taking expectations and applying \eqref{eq:cell-R},
\[
  \mathbb{E}[\mathrm{Err}_T(g_j)] \;\le\; \sum_{z: z_j=1} R(T_z),
\]
which proves the first inequality in \eqref{eq:struct-basic}.

For the second inequality in \eqref{eq:struct-basic}, fix a realization of $(T_z)_{z: z_j=1}$ and let
$K=K_j$ and $a_i:=T_{z_i}$ for the $K$ patterns with $z_j=1$. Since $R$ is concave and nondecreasing on
$[0,\infty)$ with $R(0)=0$, Jensen's inequality gives
\[
  \frac{1}{K}\sum_{i=1}^K R(a_i) \;\le\; R\Big(\frac{1}{K}\sum_{i=1}^K a_i\Big)
  = R(T_j/K).
\]
Multiplying by $K$ yields the pathwise bound $\sum_{z: z_j=1} R(T_z) \le K_j\,R(T_j/K_j)$. Taking expectations
establishes the second inequality in \eqref{eq:struct-basic}.

 To bound the multicalibration error, define $A_z := \sum_{v\in V(C_z)} |B_z(v)|\ge 0$ for each pattern $z$.
 For each $j$ we showed pathwise that $\mathrm{Err}_T(g_j) \le \sum_{z: z_j=1} A_z$, hence
 \[
   \mathrm{MCerr}_T(G) \,=\, \max_{1\le j\le k} \mathrm{Err}_T(g_j)
   \;\le\; \max_{1\le j\le k} \sum_{z: z_j=1} A_z
   \;\le\; \sum_{z\in\{0,1\}^k} A_z,
 \]
 where the last inequality holds because each $A_z\ge 0$ and the maximum of partial sums is at most the total sum.
 Taking expectations and using \eqref{eq:cell-R} yields
 \[
   \mathbb{E}[\mathrm{MCerr}_T(G)] \;\le\; \sum_{z\in\{0,1\}^k} \mathbb{E}[A_z]
   \;\le\; \sum_{z\in\{0,1\}^k} R(T_z).
 \]
 Since $R$ is concave and nondecreasing with $R(0)=0$ and $\sum_z T_z = T$, Jensen's inequality gives the pathwise bound
 \[
   \sum_{z\in\{0,1\}^k} R(T_z) \;\le\; 2^k\, R\Big(\frac{1}{2^k}\sum_{z\in\{0,1\}^k} T_z\Big)
   = 2^k\,R(T/2^k),
 \]
 establishing \eqref{eq:mc-cells}.
\end{proof}
  
The theorem shows that the difficulty of prediction-independent multicalibration is governed by the
intersection structure of the groups, quantified by $K_j$ and $T_j$. In particular, when $|G|$ is fixed independently of $T$,
the crude bound $K_j\le 2^{k-1}$ recovers the same rate as the marginal algorithm.

This simple reduction crucially relies on the fact that the groups are \emph{binary} and \emph{prediction independent}, since it needs to identify which subset of groups are active \emph{before} it decides which algorithm will be assigned to make a prediction each day. It establishes that marginal calibration rates are the same as multicalibration rates (as a function of $T$) for any collection of groups whose cardinality is independent of $T$ --- and hence, by the result of \cite{dagan2025breaking}, $o(T^{2/3})$. Contrast this with Theorem \ref{thm:main} which establishes an $\Omega(T^{2/3})$ \emph{lower bound} for multicalibration of even $3$ groups when the groups can depend on the prediction.

\section{An Oracle Lower Bound for Better Black-Box Reductions}
\label{sec:oracle-lb}

This section formalizes a natural class of \emph{proper} black-box reductions from multicalibration to marginal calibration, and proves an oracle lower bound showing that such reductions require
\emph{exponentially many} oracle copies (in the number of groups) in the worst case, even for prediction independent groups --- showing that our reduction in Appendix \ref{sec:pred-independent} is in a sense tight.
Like the lower bounds in Sections~\ref{sec:prediction-dependent} and~\ref{sec:walsh-localrate-nonneg}, the environment here is completely
oblivious, but our instance here is even more benign: the labels are now deterministic given the context; the lower bound follows purely from the
properness constraint and the ``context-blindness'' of marginal calibration algorithms. We show that any ``proper''  black-box reduction from marginal calibration to
multicalibration must itself incur multicalibration error $\Omega(T^{1-\gamma})$ for any constant $\gamma > 0$ unless it uses exponentially many oracle
copies. Our construction uses a binary group family of cardinality $|G| = \Theta(\log T)$, and so serves as an obstruction to a reductions-based strategy to giving $o(T^{2/3})$ upper bounds for multicalibration in this regime. 

\subsection{Context-blind oracles and proper reductions}
 We define the notion of a ``context-blind'' oracle:

\begin{definition}[Context-blind oracle]
\label{def:cb-oracle}
A (possibly randomized) forecasting algorithm $A$ is \emph{context-blind} if for every round $t$ its output
distribution depends only on its internal state (i.e.\ on its own past transcript) and not on the current
context $x^t$. Equivalently, for any realized internal randomness of $A$, the mapping $x^t\mapsto Q^t$ produced
by $A$ on round $t$ is constant.
\end{definition}

\begin{remark}[Context-Blinding]
\label{rem:cb-blindification}
If $A$ is any algorithm with a worst-case marginal calibration guarantee that holds for \emph{all} context
sequences, then feeding $A$ a fixed dummy context $\bar x$ on every round
(and otherwise running it unchanged) produces a context-blind algorithm with the same marginal guarantee.
Thus, any black-box reduction that claims to work for \emph{every} marginal calibration algorithm must in
particular work for some context-blind marginal oracle. In what follows we fix such a context-blind oracle $A$
and treat it purely as a black box satisfying a marginal calibration guarantee.
\end{remark}

We define the family of reductions that our barrier result applies to below. Informally, it corresponds to algorithms that can run $m$ copies of some marginal calibration algorithm $A$, update each algorithm in arbitrary ways, potentially differently for each copy of $A$, and then use prediction distributions that are somewhere in the convex hull of the predictive distributions proposed by each copy of $A$, where the weights of the convex mixture can depend both on context and history. This is e.g. the form that reductions from multigroup regret to marginal (external) regret via sleeping experts constructions take \citep{blum2020advancing,acharyaoracle}. These reductions run one copy of the oracle for each group --- i.e. setting $m = \Theta(|G|)$ and update each oracle for marginal regret on the subsequence corresponding to the rounds at which the corresponding group is active. Our barrier will rule out any similar reduction  obtaining sublinear multicalibration error.

\begin{definition}[Proper $m$-copy black-box reduction]
\label{def:proper-mcopy}
Fix an integer $m\ge 1$. A \emph{proper $m$-copy black-box reduction} $B$ is a meta-algorithm with oracle access
to a context-blind forecasting algorithm $A$, and it is allowed to run $m$ independent copies
$A^{(1)},\dots,A^{(m)}$ of $A$.

On each round $t$:
\begin{enumerate}
\item The context $x^t$ is revealed to $B$.
\item Each copy $A^{(i)}$ outputs a distribution $Q_i^t$ on $[0,1]$.
      Because $A$ is context-blind, each $Q_i^t$ is a (possibly randomized) function only of the past transcript
      of copy $i$, and does not depend on $x^t$.
\item The reduction outputs a distribution $P^t$ satisfying the \emph{properness} constraint
      \[
      P^t \in \mathrm{conv}\{Q_1^t,\dots,Q_m^t\},
      \qquad\text{i.e.}\qquad
      P^t = \sum_{i=1}^m \alpha_{i,t} Q_i^t
      \ \text{ for some }\ \alpha_t\in\Delta_m,
      \]
      where the weights $\alpha_t$ may depend on $x^t$ and all past history.
\item The outcome $y^t$ is revealed.
\item The reduction may choose, for each copy $i$, whether and how to update the state of
      copy $i$ using information available up to this point. (Our lower bound will not depend on any particular
      update scheme; it uses only the context-blindness of the $Q_i^t$ at the moment they are produced and the
      properness constraint on $P^t$.)
\item Finally, the  prediction $p^t\sim P^t$ is drawn.
\end{enumerate}
\end{definition}

\subsection{Hard instance and a logarithmic-size group family}

Fix integers $T\ge 1$ and $k\ge 1$, and set $N:=2^k$.
Let the context space be the $k$-bit hypercube
\[
X:=\{0,1\}^k,
\]
and identify each $x=(x_1,\dots,x_k)\in X$ with the integer
\[
\mathrm{val}(x)\ :=\ \sum_{r=1}^k x_r\,2^{k-r}\ \in\ \{0,1,\dots,N-1\}.
\]
Define the partition of $[0,1]$ into $N$ intervals
\[
J_b :=
\begin{cases}
\big[\frac{b}{N},\frac{b+1}{N}\big),& b=0,1,\dots,N-2,\\[3pt]
\big[\frac{N-1}{N},1\big],& b=N-1.
\end{cases}
\]
Define the mean map $\mu:X\to(0,1)$ by
\[
\mu(x)\ :=\ \frac{\mathrm{val}(x)+\tfrac12}{N}.
\]
Note that $\mu(x)$ is the midpoint of $J_{\mathrm{val}(x)}$.

\paragraph{Distribution over contexts and labels.}
Let $\mathcal{D}_{T,N}$ be the oblivious distribution over $(x^t,y^t)_{t=1}^T$ defined by contexts $x^1,\dots,x^T$ that are i.i.d.\ uniform on $X$ and labels that are deterministicly $y^t := \mu(x^t)$ for each $t$. Thus $(x^t,y^t)$ are independent of the forecaster, and $y^t\in(0,1)$ always.

\paragraph{Group family.}
We use  $k+1=\log_2 N+1$ binary prediction-independent groups:
\[
g_0(x):=1,
\qquad
g_r(x):=x_r\in\{0,1\}\quad(r=1,\dots,k).
\]
Let
\[
G_{\mathrm{bits}} := \{g_0,g_1,\dots,g_k\},
\qquad |G_{\mathrm{bits}}|=k+1.
\]

\subsection{Main theorem}

\begin{theorem}[Oracle lower bound with $|G|=\Theta(\log N)$]
\label{thm:oracle-bits}
Fix integers $T\ge 1$, $k\ge 1$, and set $N:=2^k$.
Let $B$ be any proper $m$-copy black-box reduction (Definition~\ref{def:proper-mcopy}),
and let $A$ be any context-blind oracle (Definition~\ref{def:cb-oracle}).
Run the induced forecaster $B^A$ for $T$ rounds on $\mathcal{D}_{T,N}$ and evaluate multicalibration error
with respect to $G_{\mathrm{bits}}$.
Then
\[
\mathbb{E}\big[\mathrm{MCerr}_T(G_{\mathrm{bits}})\big]
\ \ge\
\frac{1}{8}\left(1-\frac{m}{N}\right)\frac{T}{N^2}.
\]
In particular, if $m\le N/2$ then
\[
\mathbb{E}\big[\mathrm{MCerr}_T(G_{\mathrm{bits}})\big]\ \ge\ \frac{T}{16N^2}.
\]

Equivalently, fix any constant $\gamma\in(0,\tfrac12)$ and let
\[
  k := \big\lfloor \gamma \log_2 T \big\rfloor,\qquad N := 2^k,
\]
so that $|G_{\mathrm{bits}}| = k+1 = \Theta(\log T)$. Then there exists $T_0=T_0(\gamma)$ such that for all
$T\ge T_0$ and all $m=\mathrm{poly}(|G_{\mathrm{bits}}|)$ we have
\[
\mathbb{E}\big[\mathrm{MCerr}_T(G_{\mathrm{bits}})\big]
\ \ge\ \frac{1}{16}\,T^{1-2\gamma}.
\]
\end{theorem}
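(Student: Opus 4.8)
The plan is to follow the proof sketch outlined in the commented-out material, structured around a reduction from squared error to bit-group calibration. First I would fix a transcript up to round $t-1$ and exploit \emph{context-blindness}: each oracle copy's distribution $Q_i^t$ is a fixed function of its own past (not of $x^t$), so the mixture $P^t = \sum_i \alpha_{i,t}Q_i^t$ has dependence on $x^t$ only through the weights $\alpha_{i,t}\in\Delta_m$. Since for any fixed interval $J_b$ we have $P^t(J_b)\le \max_i Q_i^t(J_b)\le \sum_i Q_i^t(J_b)$, and since $\mathrm{val}(x^t)$ is uniform on $\{0,\dots,N-1\}$ even conditionally on the past, averaging over $b$ gives $\mathbb{E}[\pi_t\mid\text{past}]\le m/N$ where $\pi_t := P^t(J_{\mathrm{val}(x^t)})$ is the probability the prediction lands in the correct interval. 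Summing over $t$ yields $\mathbb{E}\big[\sum_t \mathbf{1}[p^t\in J_{\mathrm{val}(x^t)}]\big]\le (m/N)T$.

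Next I would convert ``frequent misses'' into squared error: whenever $p^t\notin J_{\mathrm{val}(x^t)}$, the prediction is at distance at least $1/(2N)$ from $y^t=\mu(x^t)$ (the midpoint of $J_{\mathrm{val}(x^t)}$), so $(p^t-y^t)^2\ge 1/(4N^2)$ on those rounds. Combining with the hit bound gives $\mathbb{E}\big[\sum_t (p^t-y^t)^2\big]\ge \tfrac{1}{4N^2}(1-m/N)T$. The key structural step is then to upper-bound squared error by bit-group calibration error. Here I would write $y^t=\mu(x^t)=\tfrac{1}{2N}+\sum_{r=1}^k 2^{-r}x^t_r$, expand $(v-y^t)^2=(v-y^t)v-(v-y^t)y^t$ for each fixed prediction value $v$, substitute the bit expansion of $y^t$, regroup the $\sum_{t:p^t=v}$ sums into the calibration biases $B_T(v,g_0)$ and $B_T(v,g_r)$, and apply the triangle inequality together with $|v|\le 1$ and the coefficients $\tfrac{1}{2N},2^{-r}$ (whose sum is $<2$) to obtain $\sum_t (p^t-y^t)^2\le (2-\tfrac{1}{2N})\,\mathrm{MCerr}_T(G_{\mathrm{bits}}) < 2\,\mathrm{MCerr}_T(G_{\mathrm{bits}})$.

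Chaining the two inequalities gives $\mathbb{E}[\mathrm{MCerr}_T(G_{\mathrm{bits}})]\ge \tfrac12\mathbb{E}\big[\sum_t(p^t-y^t)^2\big]\ge \tfrac{1}{8}(1-m/N)\tfrac{T}{N^2}$, which is the first displayed bound; specializing to $m\le N/2$ gives $T/(16N^2)$. For the asymptotic statement, I set $k=\lfloor\gamma\log_2 T\rfloor$ so $N=2^k\in(\tfrac12 T^\gamma, T^\gamma]$ and $|G_{\mathrm{bits}}|=k+1=\Theta(\log T)$. Since $m=\mathrm{poly}(|G_{\mathrm{bits}}|)=\mathrm{poly}(\log T)$ while $N=\Theta(T^\gamma)$ grows polynomially, for all $T\ge T_0(\gamma)$ we have $m\le N/2$, so the bound reads $\mathbb{E}[\mathrm{MCerr}_T(G_{\mathrm{bits}})]\ge T/(16N^2)\ge T/(16T^{2\gamma})=\tfrac{1}{16}T^{1-2\gamma}$ (using $N\le T^\gamma$). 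The main obstacle I anticipate is the squared-error-to-calibration step: one must carefully track signs and coefficients when expanding $(v-y^t)^2$ into bit-group biases, ensuring that every term genuinely appears as some $\pm c\cdot B_T(v,g_r)$ with $c$ summing to a constant below $2$, and that the per-$v$ triangle inequalities aggregate correctly over $v\in V_T$; the context-blindness averaging step, by contrast, is clean once one observes that $\max_i\le\sum_i$ and that uniformity of $\mathrm{val}(x^t)$ survives conditioning on the past.
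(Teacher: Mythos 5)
Your proposal is correct and follows essentially the same three-step structure as the paper's proof: (i) context-blindness plus the $\max\le\sum$ averaging argument to bound the expected correct-interval mass by $m/N$; (ii) the observation that a miss forces squared loss at least $1/(4N^2)$; and (iii) the expansion $(v-y^t)^2=(v-y^t)v-(v-y^t)y^t$ with the bit decomposition $y^t=\tfrac{1}{2N}+\sum_r 2^{-r}x^t_r$, yielding $\sum_t(p^t-y^t)^2\le(2-\tfrac{1}{2N})\mathrm{MCerr}_T(G_{\mathrm{bits}})$. The coefficient bookkeeping you flag as the main obstacle works out exactly as you anticipate ($(1+\tfrac{1}{2N})+\sum_{r=1}^k 2^{-r}=2-\tfrac{1}{2N}<2$), and the final asymptotic specialization matches the paper's.
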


The proof is a combination of three ingredients:
(i) a properness lemma implying the reduction cannot put much probability mass on the correct interval
$J_{\mathrm{val}(x^t)}$ on average;
(ii) the fact that missing the correct interval implies a squared-loss penalty of order $1/N^2$; and
(iii) a deterministic inequality 
showing that squared loss is controlled by calibration error for the bit groups.

\begin{remark}
The parameter $\gamma$ in Theorem~\ref{thm:oracle-bits} can be chosen arbitrarily small, so the exponent
$1-2\gamma$ can be made arbitrarily close to $1$ while still working with only $|G_{\mathrm{bits}}|=\Theta(\log T)$
binary prediction-independent groups and using only $m=\mathrm{poly}(|G_{\mathrm{bits}}|)$ oracle copies.
\end{remark}

\subsection{Proof of Theorem~\ref{thm:oracle-bits}}

\begin{lemma}[Correct-interval mass bound]
\label{lem:oracle-bits-correctmass}
Let $B^A$ be any proper $m$-copy reduction with a context-blind oracle $A$, run on $\mathcal{D}_{T,N}$.
For each round $t$, let $P^t$ be the external distribution output by $B$, and define
\[
\pi_t\ :=\ P^t\big(J_{\mathrm{val}(x^t)}\big)\ \in\ [0,1].
\]
Then for every $t$,
\[
\mathbb{E}[\pi_t]\ \le\ \frac{m}{N}.
\]
Consequently,
\[
\mathbb{E}\left[\sum_{t=1}^T \mathbf{1}\big[p^t\in J_{\mathrm{val}(x^t)}\big]\right]
=
\mathbb{E}\left[\sum_{t=1}^T \pi_t\right]
\ \le\ \frac{m}{N}\,T.
\]
\end{lemma}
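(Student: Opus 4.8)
The statement is a one-round bound with a routine averaging conclusion, so the plan is to peel off the easy consequences first and concentrate on the per-round inequality $\mathbb{E}[\pi_t]\le m/N$. Condition on the entire transcript $\mathcal{T}_{t-1}$ up through round $t-1$ (including the internal randomness of $B$ and all $m$ oracle copies, and the contexts/labels $x^1,y^1,\dots,x^{t-1},y^{t-1}$) \emph{but not} on $x^t$. Under $\mathcal{D}_{T,N}$ the context $x^t$ is uniform on $X=\{0,1\}^k$ and independent of $\mathcal{T}_{t-1}$; equivalently $\mathrm{val}(x^t)$ is uniform on $\{0,\dots,N-1\}$. Because the oracle $A$ is context-blind (Definition~\ref{def:cb-oracle}), each copy's round-$t$ output distribution $Q_i^t$ is a function of $\mathcal{T}_{t-1}$ only, hence is fixed once we condition on $\mathcal{T}_{t-1}$. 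This is the one place the context-blindness hypothesis is used, and it is the conceptual crux of the argument.

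\textbf{Key steps.} First, I would observe that properness (Definition~\ref{def:proper-mcopy}) gives $P^t=\sum_{i=1}^m \alpha_{i,t}Q_i^t$ with $\alpha_t\in\Delta_m$, where the weights may depend on $x^t$; but since a convex combination is at most the maximum of its components, for every fixed interval index $b$ we have the pointwise-in-$b$ bound
\[
P^t(J_b)\ \le\ \max_{1\le i\le m} Q_i^t(J_b)\ \le\ \sum_{i=1}^m Q_i^t(J_b),
\]
and this bound holds \emph{regardless of the weights} and hence regardless of $x^t$; crucially the right-hand side is $\mathcal{T}_{t-1}$-measurable and does not depend on $b$'s identity except through the fixed distributions $Q_i^t$. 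Second, average over the uniform choice $b=\mathrm{val}(x^t)$: conditionally on $\mathcal{T}_{t-1}$,
\[
\mathbb{E}\big[\pi_t\mid \mathcal{T}_{t-1}\big]
= \frac{1}{N}\sum_{b=0}^{N-1} P^t(J_b)
\ \le\ \frac{1}{N}\sum_{b=0}^{N-1}\sum_{i=1}^m Q_i^t(J_b)
= \frac{1}{N}\sum_{i=1}^m \sum_{b=0}^{N-1} Q_i^t(J_b)
= \frac{m}{N},
\]
using that $\{J_b\}_{b=0}^{N-1}$ partitions $[0,1]$ so $\sum_b Q_i^t(J_b)=1$ for each $i$. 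Taking expectations over $\mathcal{T}_{t-1}$ gives $\mathbb{E}[\pi_t]\le m/N$ for every $t$, which is the first displayed claim.

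\textbf{The averaging conclusion.} For the second displayed claim, note that given $P^t$ (a function of $\mathcal{T}_{t-1}$ and $x^t$) the prediction $p^t\sim P^t$ satisfies $\mathbb{E}\big[\mathbf{1}[p^t\in J_{\mathrm{val}(x^t)}]\mid P^t,x^t\big]=P^t(J_{\mathrm{val}(x^t)})=\pi_t$, so by the tower property $\mathbb{E}\big[\mathbf{1}[p^t\in J_{\mathrm{val}(x^t)}]\big]=\mathbb{E}[\pi_t]$. Summing over $t=1,\dots,T$ and using linearity of expectation together with the per-round bound yields $\mathbb{E}\big[\sum_{t=1}^T \mathbf{1}[p^t\in J_{\mathrm{val}(x^t)}]\big]=\sum_{t=1}^T\mathbb{E}[\pi_t]\le (m/N)\,T$, completing the proof. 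I do not expect any real obstacle here: the only subtlety is bookkeeping the conditioning so that (a) $x^t$ is still uniform and independent when we average, and (b) the $Q_i^t$ are already frozen by context-blindness before $x^t$ enters; once that is set up the inequalities are mechanical.
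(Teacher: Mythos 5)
Your proof is correct and follows the paper's argument exactly: condition on the pre-round-$t$ transcript, use context-blindness to freeze the $Q_i^t$, apply properness to bound $P^t(J_b)\le\max_i Q_i^t(J_b)\le\sum_i Q_i^t(J_b)$, average over the uniform $b=\mathrm{val}(x^t)$, and then sum over $t$. The only quibble is notational: in the display $\mathbb{E}[\pi_t\mid\mathcal{T}_{t-1}]=\tfrac{1}{N}\sum_b P^t(J_b)$ the distribution $P^t$ inside the sum should implicitly be the one chosen when $\mathrm{val}(x^t)=b$ (since the mixture weights may depend on $x^t$), a dependence the paper sidesteps by writing the inequality directly, but you flag this explicitly in the surrounding text and the bound holds uniformly over the weights, so the argument is sound.
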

\begin{proof}
Fix a round $t$ and condition on the full transcript up to the end of round $t-1$, including all internal
randomness of $B$ and of all oracle copies. Under $\mathcal{D}_{T,N}$, the fresh context $x^t$ is uniform on
$X$ and independent of this past transcript.

Because the oracle $A$ is context-blind, each copy's distribution $Q_i^t$ depends only on the past transcript
of copy $i$. Hence, under the conditioning, the distributions $Q_1^t,\dots,Q_m^t$ are fixed probability measures
on $[0,1]$ and do not depend on the random draw of $x^t$.

Since $B$ is proper, it outputs
\[
P^t = \sum_{i=1}^m \alpha_{i,t}\,Q_i^t
\qquad\text{for some }\alpha_t\in\Delta_m
\]
(possibly chosen as a function of $x^t$).
For any $b\in\{0,\dots,N-1\}$ we have
\[
P^t(J_b)=\sum_{i=1}^m \alpha_{i,t}\,Q_i^t(J_b)\ \le\ \max_{1\le i\le m} Q_i^t(J_b).
\]
Averaging over the uniform $x^t$ (equivalently $b=\mathrm{val}(x^t)$, which is uniform on $\{0,\dots,N-1\}$) and using
$\max_i a_i \le \sum_i a_i$ for nonnegative $a_i$, we obtain
\begin{align*}
\mathbb{E}\big[P^t(J_{\mathrm{val}(x^t)}) \,\big|\, \text{past transcript up to }t-1\big]
&\le \frac{1}{N}\sum_{b=0}^{N-1} \max_{1\le i\le m} Q_i^t(J_b) \\
&\le \frac{1}{N}\sum_{b=0}^{N-1} \sum_{i=1}^m Q_i^t(J_b)
= \frac{1}{N}\sum_{i=1}^m \sum_{b=0}^{N-1} Q_i^t(J_b) \\
&= \frac{1}{N}\sum_{i=1}^m 1
= \frac{m}{N},
\end{align*}
since the intervals $J_0,\dots,J_{N-1}$ form a partition of $[0,1]$, so $\sum_b Q_i^t(J_b)=1$ for each $i$.
Taking expectations over the past transcript yields $\mathbb{E}[\pi_t]\le m/N$.

For the second step, note that conditional on $(P^t,x^t)$ the realized draw satisfies
\[
\mathbb{E}\big[\mathbf{1}[p^t\in J_{\mathrm{val}(x^t)}]\ \big|\ P^t,x^t\big]=P^t(J_{\mathrm{val}(x^t)})=\pi_t,
\]
because $p^t\sim P^t$. Taking expectations and summing over $t$ gives the claim.
\end{proof}

\begin{lemma}[Misses force squared loss]
\label{lem:oracle-bits-miss-mse}
Under $\mathcal{D}_{T,N}$, for every realization we have
\[
\sum_{t=1}^T (p^t-y^t)^2
\ \ge\
\frac{1}{4N^2}\sum_{t=1}^T \mathbf{1}\big[p^t\notin J_{\mathrm{val}(x^t)}\big].
\]
Consequently,
\[
\mathbb{E}\Big[\sum_{t=1}^T (p^t-y^t)^2\Big]
\ \ge\
\frac{1}{4N^2}\left(1-\frac{m}{N}\right)T.
\]
\end{lemma}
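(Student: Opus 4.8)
The plan is to establish the pathwise inequality first, and then obtain the expectation bound by simply taking expectations and invoking Lemma~\ref{lem:oracle-bits-correctmass}.

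For the pathwise bound, fix a round $t$ and any realization of $(x^t,y^t,p^t)$. By construction $y^t=\mu(x^t)$ is the \emph{midpoint} of the interval $J_{\mathrm{val}(x^t)}$, and every interval $J_b$ has width exactly $1/N$. Hence the distance from $y^t$ to either endpoint of $J_{\mathrm{val}(x^t)}$ is $1/(2N)$, so if $p^t\notin J_{\mathrm{val}(x^t)}$ then $|p^t-y^t|\ge 1/(2N)$, and therefore $(p^t-y^t)^2\ge 1/(4N^2)$. In other words, $\tfrac{1}{4N^2}\,\mathbf{1}[p^t\notin J_{\mathrm{val}(x^t)}]\le (p^t-y^t)^2$ for every $t$; summing over $t=1,\dots,T$ yields the first claimed display. (The only edge case is $b=N-1$, where $J_{N-1}=[\tfrac{N-1}{N},1]$ is closed on the right, but this does not affect the midpoint/distance computation.)

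For the expectation bound, take expectations of the pathwise inequality and use linearity together with $\sum_{t}\mathbf{1}[p^t\notin J_{\mathrm{val}(x^t)}]=T-\sum_{t}\mathbf{1}[p^t\in J_{\mathrm{val}(x^t)}]$:
\[
\mathbb{E}\Big[\sum_{t=1}^T (p^t-y^t)^2\Big]
\ \ge\
\frac{1}{4N^2}\Big(T-\mathbb{E}\Big[\sum_{t=1}^T \mathbf{1}\big[p^t\in J_{\mathrm{val}(x^t)}\big]\Big]\Big).
\]
By Lemma~\ref{lem:oracle-bits-correctmass} we have $\mathbb{E}\big[\sum_{t}\mathbf{1}[p^t\in J_{\mathrm{val}(x^t)}]\big]\le (m/N)T$, so the right-hand side is at least $\frac{1}{4N^2}\bigl(1-\tfrac{m}{N}\bigr)T$, which is the second claim.

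There is no real obstacle in this lemma: it is a purely geometric repackaging of Lemma~\ref{lem:oracle-bits-correctmass}, where all the probabilistic content (context-blindness plus properness forcing little mass on the correct interval) already lives. The only points requiring care are that $\mu(x)$ is exactly the interval midpoint and that the $J_b$ partition $[0,1]$ into equal-width pieces, both of which are immediate from the construction of $\mathcal{D}_{T,N}$.
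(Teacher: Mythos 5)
Your proof is correct and follows exactly the same geometric argument as the paper: $y^t=\mu(x^t)$ is the midpoint of the width-$1/N$ interval $J_{\mathrm{val}(x^t)}$, so a miss forces $|p^t-y^t|\ge 1/(2N)$, and the expectation bound then follows by taking expectations and invoking Lemma~\ref{lem:oracle-bits-correctmass}. The brief remark about the closed right endpoint at $b=N-1$ is a small extra observation not in the paper but does not change the argument.
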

\begin{proof}
Fix a round $t$. Under $\mathcal{D}_{T,N}$, the label is $y^t=\mu(x^t)$, which is the midpoint of the interval
$J_{\mathrm{val}(x^t)}$. The distance from the midpoint of an interval of width $1/N$ to the complement of the interval
is exactly $1/(2N)$, so
\[
p^t\notin J_{\mathrm{val}(x^t)}\quad\Longrightarrow\quad |p^t-y^t|\ge \frac{1}{2N}
\quad\Longrightarrow\quad (p^t-y^t)^2\ge \frac{1}{4N^2}.
\]
Multiplying by the indicator $\mathbf{1}[p^t\notin J_{\mathrm{val}(x^t)}]$ and summing over $t$ gives the first claim.

For the second claim, take expectations and apply Lemma~\ref{lem:oracle-bits-correctmass}:
\[
\mathbb{E}\Big[\sum_{t=1}^T \mathbf{1}[p^t\notin J_{\mathrm{val}(x^t)}]\Big]
=
T-\mathbb{E}\Big[\sum_{t=1}^T \mathbf{1}[p^t\in J_{\mathrm{val}(x^t)}]\Big]
\ \ge\ \left(1-\frac{m}{N}\right)T.
\]
\end{proof}

Thus far we have established that proper oracle reductions with $m \ll N$ must frequently mispredict the true label with non-negligible margin, and hence incur large squared loss. The next lemma establishes that strong multicalibration bounds force small squared loss. 

\begin{lemma}[Squared loss controlled by bit-group calibration]
\label{lem:oracle-bits-mse-to-mcerr}
For every realization under $\mathcal{D}_{T,N}$,
\[
\sum_{t=1}^T (p^t-y^t)^2
\ \le\
\left(2-\frac{1}{2N}\right)\,\mathrm{MCerr}_T(G_{\mathrm{bits}})
\ <\ 2\,\mathrm{MCerr}_T(G_{\mathrm{bits}}).
\]
\end{lemma}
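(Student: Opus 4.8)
The plan is to expand the squared loss round-by-round by writing $(p^t-y^t)^2 = (p^t-y^t)\cdot p^t - (p^t - y^t)\cdot y^t$, then expand the label $y^t = \mu(x^t)$ in the ``bit basis.'' Concretely, by definition of $\mu$ and $\mathrm{val}$ we have the exact identity $y^t = \mu(x^t) = \frac{1}{2N} + \sum_{r=1}^k 2^{-r} x^t_r = \frac{1}{2N}g_0(x^t) + \sum_{r=1}^k 2^{-r} g_r(x^t)$. Grouping the rounds by the realized prediction value $v \in V_T$, I would write $\sum_t (p^t-y^t)^2 = \sum_{v\in V_T}\sum_{t: p^t=v}(v-y^t)^2$ and, for each fixed $v$, substitute $(v-y^t)^2 = v(v-y^t) - y^t(v-y^t)$. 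The first piece $v\sum_{t: p^t=v}(v-y^t) = v\,B_T(v,g_0)$ since $g_0\equiv 1$; the second piece, after substituting the bit expansion of $y^t$, becomes $-\frac{1}{2N}\sum_{t:p^t=v}(v-y^t) - \sum_{r=1}^k 2^{-r}\sum_{t:p^t=v} x^t_r (v-y^t) = -\frac{1}{2N}B_T(v,g_0) - \sum_{r=1}^k 2^{-r} B_T(v,g_r)$, using that $x^t_r(v-y^t) = g_r(x^t,p^t)(p^t-y^t)$ on rounds where $p^t=v$.

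\textbf{Key steps in order.} First, fix $v$ and assemble the identity
\[
\sum_{t: p^t=v}(v-y^t)^2 = \Bigl(v - \tfrac{1}{2N}\Bigr) B_T(v,g_0) - \sum_{r=1}^k 2^{-r} B_T(v,g_r).
\]
Second, apply the triangle inequality: since $0 \le v \le 1$ we have $|v - \frac{1}{2N}| \le 1 - \frac{1}{2N}$ (using $N\ge 1$, so $\frac{1}{2N}\le\frac12$ and the bound $\max\{|0-\frac{1}{2N}|, |1-\frac{1}{2N}|\} = 1-\frac{1}{2N}$), hence
\[
\sum_{t: p^t=v}(v-y^t)^2 \le \Bigl(1 - \tfrac{1}{2N}\Bigr)|B_T(v,g_0)| + \sum_{r=1}^k 2^{-r}|B_T(v,g_r)|.
\]
Third, sum over $v \in V_T$ to get $\sum_t (p^t-y^t)^2 \le (1-\frac{1}{2N})\mathrm{Err}_T(g_0) + \sum_{r=1}^k 2^{-r}\mathrm{Err}_T(g_r)$. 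Fourth, bound each $\mathrm{Err}_T(g_r) \le \mathrm{MCerr}_T(G_{\mathrm{bits}})$ and use $\sum_{r=1}^k 2^{-r} = 1 - 2^{-k} = 1 - 1/N < 1$, giving the coefficient $(1-\frac{1}{2N}) + (1-\frac{1}{N}) = 2 - \frac{3}{2N} \le 2 - \frac{1}{2N}$, which yields the claim (and one could even get the slightly sharper $2 - \frac{3}{2N}$, but $2 - \frac{1}{2N}$ as stated is weaker and hence also valid).

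\textbf{Main obstacle.} I expect this to be a routine computation with no real obstacle; the only point requiring mild care is the bookkeeping of the coefficient on $B_T(v,g_0)$ — making sure the two contributions to it (the $v$-term and the $-\frac{1}{2N}$-term from the bit expansion) combine to $v - \frac{1}{2N}$ rather than, say, $v + \frac{1}{2N}$, and then correctly bounding $|v - \frac{1}{2N}|$ over $v\in[0,1]$. One must also verify the identity $x^t_r(v - y^t) = g_r(x^t,p^t)(p^t-y^t)$ holds precisely on the rounds in the sum (where $p^t = v$ and $g_r(x^t,p^t) = x^t_r$ by prediction-independence), which is immediate. Everything else is linearity of $B_T(v,\cdot)$ in the group function and the triangle inequality.
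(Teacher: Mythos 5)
Your proof is correct and follows essentially the same approach as the paper: expand $(v-y^t)^2 = v(v-y^t)-y^t(v-y^t)$, substitute $y^t = \frac{1}{2N}+\sum_r 2^{-r}x^t_r$ to turn the sums into calibration biases $B_T(v,g_0),B_T(v,g_r)$, then apply the triangle inequality and bound each $\mathrm{Err}_T(g)$ by $\mathrm{MCerr}_T$. The one small difference is that you combine the two $B_T(v,g_0)$ coefficients into $v-\frac{1}{2N}$ before taking absolute values (getting $|v-\tfrac{1}{2N}|\le 1-\tfrac{1}{2N}$), whereas the paper bounds them separately (getting $1+\tfrac{1}{2N}$), so your bookkeeping yields the slightly sharper constant $2-\frac{3}{2N}$; this is a harmless improvement that still implies the stated $2-\frac{1}{2N}$.
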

\begin{proof}
For each realized prediction value $v\in V_T$, let $S_v:=\{t\in\{1,\dots,T\}: p^t=v\}$ and note that
\[
\sum_{t=1}^T (p^t-y^t)^2
= \sum_{v\in V_T}\ \sum_{t\in S_v} (v-y^t)^2.
\]
Fix $v\in V_T$. For each $t\in S_v$ we have the identity
\[
(v-y^t)^2 = (v-y^t)\,v - (v-y^t)\,y^t,
\]
so summing over $t\in S_v$ gives
\[
\sum_{t\in S_v}(v-y^t)^2
= v\sum_{t\in S_v}(v-y^t)\ -\ \sum_{t\in S_v} y^t\,(v-y^t).
\]
Since $0\le v\le 1$, the triangle inequality yields
\begin{equation}\label{eq:oracle-bits-mse-step1}
\sum_{t\in S_v}(v-y^t)^2
\ \le\
\left|\sum_{t\in S_v}(v-y^t)\right|\ +\ \left|\sum_{t\in S_v} y^t\,(v-y^t)\right|.
\end{equation}

Next we expand $y^t=\mu(x^t)$ in terms of the bit groups.
Because $N=2^k$ and $\mathrm{val}(x^t)=\sum_{r=1}^k x_r^t\,2^{k-r}$, we have
\[
y^t=\mu(x^t)
=\frac{\mathrm{val}(x^t)+\tfrac12}{N}
=\frac{1}{2N}+\frac{\mathrm{val}(x^t)}{2^k}
=\frac{1}{2N}+\sum_{r=1}^k 2^{-r}\,x_r^t.
\]
Therefore
\begin{align*}
\sum_{t\in S_v} y^t\,(v-y^t)
&=
\frac{1}{2N}\sum_{t\in S_v}(v-y^t)
+\sum_{r=1}^k 2^{-r}\sum_{t\in S_v} x_r^t\,(v-y^t),
\end{align*}
and applying the triangle inequality gives
\begin{equation}\label{eq:oracle-bits-mse-step2}
\left|\sum_{t\in S_v} y^t\,(v-y^t)\right|
\ \le\
\frac{1}{2N}\left|\sum_{t\in S_v}(v-y^t)\right|
+\sum_{r=1}^k 2^{-r}\left|\sum_{t\in S_v} x_r^t\,(v-y^t)\right|.
\end{equation}

Now relate these terms to calibration error.
By definition, for a group $g$ and a value $v\in V_T$,
\[
B_T(v,g):=\sum_{t=1}^T \mathbf{1}[p^t=v]\;g(x^t)\,(p^t-y^t)
=\sum_{t\in S_v} g(x^t)\,(v-y^t),
\]
and $\mathrm{Err}_T(g)=\sum_{v\in V_T}|B_T(v,g)|$.
Thus
\[
\sum_{t\in S_v}(v-y^t)=B_T(v,g_0),
\qquad
\sum_{t\in S_v} x_r^t\,(v-y^t)=B_T(v,g_r).
\]
Plugging these into \eqref{eq:oracle-bits-mse-step1}--\eqref{eq:oracle-bits-mse-step2} yields
\[
\sum_{t\in S_v}(v-y^t)^2
\ \le\
\left(1+\frac{1}{2N}\right)|B_T(v,g_0)| + \sum_{r=1}^k 2^{-r}\,|B_T(v,g_r)|.
\]
Summing over $v\in V_T$ gives
\[
\sum_{t=1}^T (p^t-y^t)^2
\ \le\
\left(1+\frac{1}{2N}\right)\mathrm{Err}_T(g_0) + \sum_{r=1}^k 2^{-r}\,\mathrm{Err}_T(g_r).
\]
Finally, since $\sum_{r=1}^k 2^{-r}=1-2^{-k}=1-\frac1N$ and each $\mathrm{Err}_T(g_r)\le \mathrm{MCerr}_T(G_{\mathrm{bits}})$,
we obtain
\[
\sum_{t=1}^T (p^t-y^t)^2
\ \le\
\left(1+\frac{1}{2N}+1-\frac{1}{N}\right)\mathrm{MCerr}_T(G_{\mathrm{bits}})
=
\left(2-\frac{1}{2N}\right)\mathrm{MCerr}_T(G_{\mathrm{bits}}),
\]
as claimed.
\end{proof}
It remains to put the pieces together:
\begin{proof}[Proof of Theorem~\ref{thm:oracle-bits}]
Combine Lemma~\ref{lem:oracle-bits-miss-mse} and Lemma~\ref{lem:oracle-bits-mse-to-mcerr}.
Lemma~\ref{lem:oracle-bits-mse-to-mcerr} implies pathwise
\[
\mathrm{MCerr}_T(G_{\mathrm{bits}})
\ \ge\ \frac{1}{2}\sum_{t=1}^T (p^t-y^t)^2,
\]
since $2-\frac{1}{2N}<2$.
Taking expectations and applying Lemma~\ref{lem:oracle-bits-miss-mse} gives
\[
\mathbb{E}\big[\mathrm{MCerr}_T(G_{\mathrm{bits}})\big]
\ \ge\
\frac{1}{2}\,\mathbb{E}\Big[\sum_{t=1}^T (p^t-y^t)^2\Big]
\ \ge\
\frac{1}{2}\cdot \frac{1}{4N^2}\left(1-\frac{m}{N}\right)T
=
\frac{1}{8}\left(1-\frac{m}{N}\right)\frac{T}{N^2}.
\]
If $m\le N/2$ then $(1-m/N)\ge 1/2$, yielding
\[
  \mathbb{E}[\mathrm{MCerr}_T(G_{\mathrm{bits}})]\ \ge\ \frac{T}{16N^2}.
\]

For the parametric statement in the theorem, fix any constant $\gamma\in(0,\tfrac12)$ and set
\[
  k := \big\lfloor \gamma \log_2 T \big\rfloor,\qquad N := 2^k.
\]
Then $|G_{\mathrm{bits}}| = k+1 = \Theta(\log T)$, and
\[
  N \le 2^{\gamma \log_2 T} = T^{\gamma}
\]
implies
\[
  \frac{T}{N^2} \ \ge\ \frac{T}{T^{2\gamma}} \ =\ T^{1-2\gamma}.
\]
If in addition $m=\mathrm{poly}(|G_{\mathrm{bits}}|)$, then since $N = 2^k = T^{\Theta(\gamma)}$ grows superpolynomially
in $|G_{\mathrm{bits}}|$, there exists $T_0=T_0(\gamma)$ such that $m\le N/2$ for all $T\ge T_0$. For such $T$ we obtain
\[
  \mathbb{E}[\mathrm{MCerr}_T(G_{\mathrm{bits}})]\ \ge\ \frac{1}{16}\,T^{1-2\gamma},
\]
as claimed in Theorem~\ref{thm:oracle-bits}.
\end{proof}

\paragraph{Interpretation}
Since $|G_{\mathrm{bits}}|=k+1$ and $N=2^k$, the bound in Theorem~\ref{thm:oracle-bits} can be rewritten as
\[
\mathbb{E}[\mathrm{MCerr}_T(G_{\mathrm{bits}})]
\ \ge\ \frac{1}{8}\left(1-\frac{m}{2^{|G_{\mathrm{bits}}|-1}}\right)\frac{T}{2^{2(|G_{\mathrm{bits}}|-1)}}
\ =\ \Omega\!\left(\left(1-\frac{m}{2^{|G|-1}}\right)\frac{T}{2^{2(|G|-1)}}\right),
\]
where $|G|:=|G_{\mathrm{bits}}|$. In particular, making this lower bound vacuous requires
$m\approx N = 2^{\Theta(|G_{\mathrm{bits}}|)}$, so any proper black-box reduction using
$m=\mathrm{poly}(|G|)$ copies (e.g.\ one copy per group) fails on this instance for large $T$.

A complementary  view is obtained by optimizing $N$ as a function of $m$.
Fix $T\ge1$ and $m\ge1$, and choose
\[
k := \big\lceil \log_2(2m) \big\rceil,\qquad N := 2^k.
\]
Then $2m\le N<4m$, and for any proper $m$-copy reduction $B$ with context-blind oracle $A$ run on $\mathcal{D}_{T,N}$,
Theorem~\ref{thm:oracle-bits} gives
\[
\mathbb{E}[\mathrm{MCerr}_T(G_{\mathrm{bits}})]
\ \ge\ \frac{1}{8}\left(1-\frac{m}{N}\right)\frac{T}{N^2}
\ \ge\ \frac{1}{8}\cdot\frac{1}{2}\cdot\frac{T}{(4m)^2}
\ =\ \frac{T}{256\,m^2}
\ =\ \Omega\!\left(\frac{T}{m^2}\right).
\]
Here $|G_{\mathrm{bits}}| = k+1 = \Theta(\log m)$, so any proper reduction with $m$ context-blind oracle copies can
be forced to incur multicalibration error $\Omega(T/m^2)$ on only $O(\log m)$ groups. This rules out any non-trivial reduction from multicalibration to marginal calibration in the ``sleeping experts style'' (as in \cite{blum2020advancing,acharyaoracle}) which use one oracle per group: $m = \Theta(|G|)$, and forces non-trivial reductions to choose $m$ growing polynomially with $T$.

\end{document}